\newcommand{\RR}{\mathbb{R}}
\newcommand{\EE}{\mathbb{E}}
\newcommand{\cdt}{\operatorname{LOT}}
\newcommand{\id}{\operatorname{Id}}
\newcommand{\E}{\mathcal{E}}
\newcommand{\HH}{\mathcal{H}}
\newcommand{\GG}{\mathcal{G}}
\newcommand{\BB}{\mathscr{P}_2}
\newcommand{\FF}{L^2(\RR^n,\mfix)}
\newcommand{\FFm}{L^2(\RR^n,\mfixAdd)}
\newcommand{\mfix}{\sigma}
\newcommand{\mfixAdd}{\mu}
\newcommand{\mvariable}{\nu}
\newcommand{\supp}{\operatorname{supp}}
\newcommand{\conv}{\operatorname{conv}}
\definecolor{darkgreen}{rgb}{0, 0.5,0}
\newtheorem{definition}{Definition}[section]
\newtheorem{remark}[definition]{Remark}
\newtheorem{lemma}[definition]{Lemma}
\newtheorem{theorem}[definition]{Theorem}
\newtheorem{corollary}[definition]{Corollary}
\newtheorem*{theorem*}{Theorem}
\title{Linear Optimal Transport Embedding: Provable Wasserstein classification for certain rigid transformations and perturbations}
\author{Caroline Moosm\"uller\thanks{Department of Mathematics, University of California, San Diego, CA
  (cmoosmueller@ucsd.edu, acloninger@ucsd.edu).}
\and Alexander Cloninger\footnotemark[1] \thanks{Halicio{\u g}lu Data Science Institute, University of California, San Diego, CA}}
\date{}
\begin{document}

\maketitle

\begin{abstract}
{Discriminating between distributions is an important problem in a number of scientific fields. This motivated the introduction of Linear Optimal Transportation (LOT), which embeds the space of distributions into an $L^2$-space. The transform is defined by computing the optimal transport of each distribution to a fixed reference distribution, and has a number of benefits when it comes to speed of computation and to determining classification boundaries.
In this paper, we characterize a number of settings in which  LOT embeds families of distributions into a space in which they are linearly separable.  This is true in arbitrary dimension, and for families of distributions generated through perturbations of shifts and scalings of a fixed distribution.  
We also prove conditions under which the $L^2$ distance of the LOT embedding between two distributions in arbitrary dimension is nearly isometric to Wasserstein-2 distance between those distributions.  This is of significant computational benefit, as one must only compute $N$ optimal transport maps to define the $N^2$ pairwise distances between $N$ distributions.  We demonstrate the benefits of LOT on a number of distribution classification problems.}\\
{Keywords: Optimal transport, linear embedding, Wasserstein distance, classification}
\\
2000 Math Subject Classification: 60D05, 68T10, 68T05
\end{abstract}

\section{Introduction}
\label{sec:intro}

The problem of supervised learning is most commonly formulated as follows.  Given data of the form $\{(x_i, y_i)\}_{i=1}^N$ where $x_i \in \RR^n$, learn a function $f:\RR^n\rightarrow \RR$ such that $f(x_i)\approx y_i$.  However, in many applications the data points are not simply points in $\RR^n$, but are instead probability measures $\mu_i$ on $\RR^n$, or even finite samples $X_i = \{x_j^{(i)}\}_{j=1}^{N_i}$ for $x_j^{(i)}\sim \mu_i$.  Applications where this problem arises are surveys broken into demographic or location groups \cite{cloninger2019people}, topic modeling from a bag of words model \cite{zhang2010understanding}, and flow cytometry and other measurements of cell or gene populations per person \cite{bruggner2014automated, cheng2017two, zhao2020detecting}.

The most natural way to solve the supervised learning problem on data $\{(\mu_i, y_i)\}_{i=1}^N$ is to embed $\mu_i$ into a (possibility infinite dimensional) Euclidean space and then apply traditional machine learning techniques on this embedding.  Simple versions of this embedding would be through moments 
$
\mu_i \mapsto \EE_{X\sim \mu_i}[X]
$
\cite{newey1987hypothesis}, or through a mean embedding
$
\mu_i \mapsto \EE_{X\sim \mu_i} K(\cdot, X)
$
for some kernel $K$ \cite{muandet2016kernel}.  However, these embeddings either throw away pertinent information about $\mu_i$ (e.g., higher order moments), or induce a complex nonlinear geometric relationship between distributions (e.g., $\|\EE_{X\sim \mu(x)} K(\cdot, X) - \EE_{X\sim \mu(x-\tau)} K(\cdot, X)\| \approx \|\EE_{X\sim \mu(x)} K(\cdot, X) - \EE_{X\sim \mu(x-2\tau)} K(\cdot, X)\|$ for $\tau$ significantly larger than the bandwidth of the kernel).    These issues motivate the need for a transformation that is both injective and  induces a simple geometric structure in the embedding space, so that one can learn an easy classifier.

The natural distance between distributions is Wasserstein-2 distance \cite{villani-2009}, where the distance between distributions $\mu$ and $\nu$ is 
\begin{align}\label{eq:wasserstein}
W_2(\mu,\nu)^2 = \min_{T\in \Pi_\mu^\nu} \int \|T(x)- x\|^2 d\mu(x),
\end{align}
where $\Pi_\mu^\nu$ is the collection of all measure preserving maps from $\mu$ to $\nu$.  The $\arg\min$ of \eqref{eq:wasserstein} is referred to as the ``optimal transport map'' and we denote it by $T_\mu^\nu$ (see Section \ref{sec:OMT} for a full description).   
Wasserstein distance is a more natural distance between distributions as it is a metric on distributions (unlike distances between a finite number of moments as above) and the distance does not saturate as the distributions move further apart (unlike mean embeddings as described above).
Optimal transport has been of significant importance in machine learning, including as a cost for generative models \cite{arjovsky2017wasserstein}, natural distances between images \cite{rubner2000earth}, pattern discovery for data cubes of neuronal data \cite{mishne2016hierarchical}, and general semi-supervised learning \cite{solomon2014wasserstein}.  There are two main drawbacks to optimal transport in machine learning.  The first is that the computation of each transport map is slow, though this has motivated a number of approximations for computational speed up \cite{cuturi2013sinkhorn, shirdhonkar2008approximate, leeb2016holder}.  The second drawback is that it is difficult to incorporate supervised learning into optimal transport, as the distance is defined for a pre-defined cost function and \cref{eq:wasserstein}, as stated, does not generate a feature embedding of $\mu$ and $\nu$ that can be fed into traditional machine learning techniques.

This motivated the introduction of Linear Optimal Transportation (LOT) \cite{wei13}, also called Monge embedding in \cite{merigot20}.
LOT is a set of transformations based on optimal transport maps, which map a distributions $\mu$ to the optimal transport map that takes a fixed reference distribution $\sigma$ to $\mu$:
\begin{align}\label{intro:lot}
\mu \mapsto T_\sigma^{\mu}.
\end{align}
The power of this transform lies in the fact that the nonlinear space of distributions is mapped into the linear space of $L^2$ functions. In addition, \cref{intro:lot} is an embedding with convex image.

In 1D, the optimal transport map is simply the generalized cdf of the distribution (if $\sigma=\text{Unif}([0,1])$ this is exactly the traditional cdf).  In \cite{park18}, the authors define the LOT as the Cumulative Distribution Transform (CDT), and the main theory and applications presented in \cite{park18} concern linear separability of data consisting of 1-dimensional densities.

However, LOT is more complicated on $\RR^n$ for $n>1$. For $n=1$, the cdf is the only measure preserving map from $\mu_i$ to $\sigma$, and it can be computed explicitly. This is not the case for $n>1$: There are a large number of measure preserving maps, with the optimal transport map being the map that requires minimal work, see \eqref{eq:wasserstein}.   Similarly, there are a much larger family of potential simple continuous perturbations that can be done to $\mu_i$ when $n>1$ (e.g., sheerings, rotations) than exist for $n=1$.  

In \cite{kolouri-2016}, the CDT is combined with the Radon transform to apply results from \cite{park18} in general dimensions $n>1$. While this construction can be considered a variant of LOT, a linear separability result for LOT in $n>1$ is still missing. A proof of linear separability in LOT space for $n>1$ is one of the main contributions
of this paper (see \Cref{sec:main_contr}).

The LOT embedding \cref{intro:lot} comes with yet another advantage. One can define a distance between two distributions
$\mu_i$ and $\mu_j$ as the $L^2$-norm of their images under LOT:
\begin{align*}
W_2^{\cdt}(\mu_i, \mu_j)^2 &:= \|T_\sigma^{\mu_i} - T_\sigma^{\mu_j}\|_{\mfix}^2=\int \|T_\sigma^{\mu_i}(x) - T_\sigma^{\mu_j}(x)\|^2 \, d\sigma(x).
\end{align*}
In this paper, we prove that $W_2$ equals $W_2^{\cdt}$ if the family of distributions $\mfixAdd_i$ is generated by shifts and scalings of a fixed distribution $\mfixAdd$. We further show that $W_2$ is well approximated by $W_2^{\cdt}$ for perturbations of shift and scalings (see \Cref{sec:main_contr}).

We wish to highlight the computational importance of establishing approximate equivalence between LOT distance and Wasserstein-2 distance.  Given $N$ distributions, computing the exact Wasserstein-2 distance between all distributions naively requires computing ${N \choose 2}$ expensive OT optimization problems.  However, if the distributions come from a family of distributions generated by perturbations of shifts and scalings, one can instead compute $N$ expensive OT optimization problems mapping each distribution to $\sigma$ and compute ${N \choose 2}$ cheap Euclidean distances between the transport maps, and this provably well approximates the ground truth distance matrix.

\subsection{Main contributions}
\label{sec:main_contr}
The main contributions of this paper are as follows:
\begin{itemize}
\item We establish the following with regards to building simple classifiers:
\begin{theorem}[Informal Statement of Theorem \ref{thm:almost_separability}]
If $\mathcal{P}=\{\mu_i : y_i=1\}$ are $\varepsilon$-perturbations of shifts and scalings of $\mu$, and $\mathcal{Q}=\{\nu_i : y_i=-1\}$ are $\varepsilon$-perturbations of shifts and scalings of $\nu$, and $\mathcal{P}$ and $\mathcal{Q}$ have a small minimal distance depending on $\varepsilon$ (and satsify a few technical assumptions), then $\mathcal{P}$ and $\mathcal{Q}$ are linearly separable in the LOT embedding space.
\end{theorem}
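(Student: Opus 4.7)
The plan is to combine three ingredients: an explicit formula for the LOT embedding on the orbit of pure shifts-and-scalings, the near-isometry between $W_2$ and $W_2^{\cdt}$ on the perturbation class (promised earlier in the paper), and a hyperplane-separation argument in $L^2(\RR^n,\mfix)$. First I would show that if $\widetilde{\mu}$ is an admissible shift-and-scaling $(A\cdot + b)_\# \mfixAdd$ of $\mfixAdd$ with $A$ symmetric positive definite, then by Brenier's theorem the optimal transport map satisfies $T_\mfix^{\widetilde{\mu}} = A\, T_\mfix^{\mfixAdd} + b$, since $A\, T_\mfix^{\mfixAdd} + b$ is again the gradient of a convex potential and pushes $\mfix$ forward to $\widetilde{\mu}$. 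Consequently the LOT image of the full shift-and-scaling orbit of $\mfixAdd$ is contained in a finite-dimensional affine subspace $\mathcal{A}_\mfixAdd \subset L^2(\RR^n,\mfix)$, and analogously one obtains an affine subspace $\mathcal{A}_\mvariable$ from the orbit of $\mvariable$.

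Second, I would invoke the near-isometry to place the perturbed sets inside narrow tubes around these affine subspaces. Writing each $\mu_i \in \PP$ as a perturbation of some $\widetilde{\mu}_i$ in the shift-and-scaling orbit of $\mfixAdd$ with $W_2(\mu_i, \widetilde{\mu}_i) \le \varepsilon$, the near-isometry yields $\|T_\mfix^{\mu_i} - T_\mfix^{\widetilde{\mu}_i}\|_\mfix \le C \varepsilon$ for a constant $C$ depending only on the admissible class. Hence $\cdt(\PP)$ lies in a tube of radius $C\varepsilon$ around $\mathcal{A}_\mfixAdd$ and $\cdt(\QQ)$ in a tube of radius $C\varepsilon$ around $\mathcal{A}_\mvariable$. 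If the minimal-distance hypothesis forces the distance between $\mathcal{A}_\mfixAdd$ and $\mathcal{A}_\mvariable$ in $L^2(\RR^n,\mfix)$ to exceed $2C\varepsilon$, these two closed convex tubes are disjoint, and the Hahn--Banach separation theorem supplies a continuous linear functional together with a threshold that linearly separates $\cdt(\PP)$ from $\cdt(\QQ)$.

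The hardest part, I expect, is step two: obtaining a quantitative stability bound on the LOT \emph{map} (not merely on the LOT distance) under perturbation of the target measure. Stability of optimal transport maps is notoriously delicate and typically requires regularity on the reference $\mfix$ (absolute continuity with respect to Lebesgue, convex support) together with either bounded support or moment control on the perturbation, so the ``technical assumptions'' in the statement almost certainly include these. A secondary subtlety in step one is that the formula $T_\mfix^{(A\cdot + b)_\# \mfixAdd} = A\, T_\mfix^{\mfixAdd} + b$ requires $A$ to be symmetric positive definite, since otherwise $A\, T_\mfix^{\mfixAdd} + b$ may fail to be the gradient of a convex function and would no longer be the \emph{optimal} map; this constrains the precise class of admissible ``scalings''. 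Finally, the ``small minimal distance depending on $\varepsilon$'' is exactly the quantitative condition needed to force disjointness of the two tubular neighborhoods, so it enters naturally, rather than as an extra assumption, from the separation argument above.
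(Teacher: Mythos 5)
Your overall architecture (explicit LOT formula on the compatible orbit, quantitative stability under perturbation, Hahn--Banach on two disjoint convex fattenings) is the same as the paper's, but the separation step as you set it up would fail. The affine subspaces $\mathcal{A}_\mfixAdd$ and $\mathcal{A}_\mvariable$ spanned by the LOT images of the two shift-and-scaling orbits both contain every constant function (these arise from the shifts $S_a$, since $F_{\mfix}(({S_a})_{\sharp}\mfixAdd)=T_{\mfix}^{\mfixAdd}+a$), so they intersect and $d(\mathcal{A}_\mfixAdd,\mathcal{A}_\mvariable)=0$; no minimal-distance hypothesis on the families $\PP,\QQ$ can force the affine hulls apart, and tubes around them can never be disjoint. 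What the paper separates instead are the (closed convex hulls of the) actual image sets $F_{\mfix}(\GG\star\mfixAdd)$ and $F_{\mfix}(\GG\star\mvariable)$: the lower bound $W_2\le\|F_{\mfix}(\cdot)-F_{\mfix}(\cdot)\|_{\mfix}$ (Lemma \ref{lem:W2-CDT-approx}) converts the $W_2$-gap $\delta$ between the families into a gap between the image sets, $\delta$-compatibility of $F_{\mfix}$ with the perturbed orbits makes each image $2\varepsilon$-convex (Lemma \ref{lem:convex-image-eps}), and Lemma \ref{lem:distance_convex_hull} shows the convex hulls still have positive distance. You need some version of this ``almost convexity plus distance bookkeeping'' argument; replacing the affine hulls by the convex hulls of the images is the essential missing idea.

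Two secondary points. First, your claimed stability bound $\|T_{\mfix}^{\mu_i}-T_{\mfix}^{\widetilde\mu_i}\|_{\mfix}\le C\varepsilon$ is an overclaim: the available bounds are H\"older, $C\varepsilon^{2/15}$ without regularity assumptions (M\'erigot et al.) or $C\varepsilon^{1/2}$ under Caffarelli regularity (Gigli), and $1/2$ is known to be sharp in general; this only changes the tube radius, not the structure, but it is why $\delta$ in the paper scales like $\varepsilon^{2/15}$ or $\varepsilon^{1/2}$ rather than $\varepsilon$. Second, the identity $T_{\mfix}^{(A\cdot+b)_{\sharp}\mfixAdd}=A\,T_{\mfix}^{\mfixAdd}+b$ is false for a general symmetric positive definite $A$: writing $T_{\mfix}^{\mfixAdd}=\nabla\varphi$, the map $A\nabla\varphi+b$ has Jacobian $A\,D^2\varphi$, which is symmetric (hence a gradient) only when $A$ commutes with $D^2\varphi$, i.e.\ essentially only for $A=cI$. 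This is consistent with the paper's restriction to scalar scalings and its citation of the fact that shifts and scalar scalings are the \emph{only} compatible transformations.
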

\item 
 We establish the following with regards to LOT distance:
 \begin{theorem}[Informal Statement of Theorem \ref{thm:CDT_perturb}]
 If $\mu$ and $\nu$ are $\varepsilon$-perturbations by shifts and scalings of one another, then 
 \begin{align*}
 W_2(\mu,\nu)\le W_2^{\cdt}(\mu,\nu)\le W_2(\mu,\nu) + C_{\sigma} \varepsilon + \overline{C_{\sigma}} \varepsilon^{1/2}.
 \end{align*}
 \end{theorem}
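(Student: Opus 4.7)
The lower bound is the standard LOT inequality: since $T_\sigma^\nu \circ (T_\sigma^\mu)^{-1}$ is a (not necessarily optimal) transport map from $\mu$ to $\nu$, the change of variables $y = T_\sigma^\mu(x)$ gives
\begin{align*}
W_2(\mu,\nu)^2 \;\le\; \int \|T_\sigma^\nu(x) - T_\sigma^\mu(x)\|^2 \,d\sigma(x) \;=\; W_2^{\cdt}(\mu,\nu)^2,
\end{align*}
and this step needs no hypothesis beyond well-definedness of the Monge maps.

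For the upper bound, the plan is to interpolate through an exact shift-and-scaling $\nu_0$ of $\mu$ chosen so that $W_2(\nu,\nu_0) \le \varepsilon$; this is exactly what being an ``$\varepsilon$-perturbation of a shift/scaling'' should give. Because $W_2^{\cdt}$ is the $L^2(\sigma)$ distance between transport maps and hence a bona fide metric, the triangle inequality in LOT space yields
\begin{align*}
W_2^{\cdt}(\mu,\nu) \;\le\; W_2^{\cdt}(\mu,\nu_0) + W_2^{\cdt}(\nu_0,\nu).
\end{align*}

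For the first term, I would invoke the exact-isometry result (the precursor theorem in the paper stating $W_2^{\cdt}(\mu,\nu_0)=W_2(\mu,\nu_0)$ whenever $\nu_0$ is an exact shift/scaling of $\mu$). Combining that identity with the triangle inequality in Wasserstein space gives
\begin{align*}
W_2^{\cdt}(\mu,\nu_0) \;=\; W_2(\mu,\nu_0) \;\le\; W_2(\mu,\nu) + W_2(\nu,\nu_0) \;\le\; W_2(\mu,\nu) + C_\sigma\,\varepsilon,
\end{align*}
which produces the linear-in-$\varepsilon$ contribution.

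For the second term I would apply a quantitative Monge-map stability estimate of the form $\|T_\sigma^{\nu_0}-T_\sigma^\nu\|_{L^2(\sigma)}^2 \le \overline{C_\sigma}^{\,2}\, W_2(\nu_0,\nu)$, yielding $W_2^{\cdt}(\nu_0,\nu) \le \overline{C_\sigma}\,\varepsilon^{1/2}$. This is the Gigli-type stability of optimal transport maps, and the square-root loss is intrinsic to such estimates; this is where the $\varepsilon^{1/2}$ term originates. The main obstacle is precisely this step: controlling the $L^2(\sigma)$ distance between Monge maps by the $W_2$ distance of their targets requires non-trivial regularity on the reference $\sigma$ (e.g.\ log-concavity of its density, a Poincar\'e-type inequality, or compact convex support), and identifying the minimal such hypothesis and producing the explicit constant $\overline{C_\sigma}$ will likely dictate the technical assumptions of the theorem. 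Once that stability estimate is available, assembling the three bounds via triangle inequality closes the proof.
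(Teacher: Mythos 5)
Your proposal is correct and follows essentially the same route as the paper: the lower bound via suboptimality of $T_\sigma^\nu\circ(T_\sigma^\mu)^{-1}$, and the upper bound by interpolating through exact shift/scalings, applying the exact-isometry corollary to the compatible part, and controlling the perturbation part with a Gigli/Caffarelli-type stability estimate for Monge maps (the paper's Theorem \ref{thm:CDT_bound} and Corollary \ref{cor:push_CDT_bound_L2}), which is indeed where the $\varepsilon^{1/2}$ and the regularity hypotheses on $\sigma$ enter. The only cosmetic difference is that the formal theorem perturbs both measures, so the paper uses a three-term triangle inequality through two exact shift/scalings rather than your single intermediate $\nu_0$.
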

 In particular, this implies that the LOT embedding is an isometry on the subset of measures related via shifts and scalings, i.e.\ when $\varepsilon=0$.
\item We demonstrate that in applications to MNIST images, the LOT embedding space is near perfectly linearly separable between classes of images.
\end{itemize}

\section{Preliminaries: Optimal Mass Transport}
\label{sec:OMT}
%
%
Let $\mathscr{P}(\RR^n)$ be the set of probability measures on $\mathbb{R}^n$.
By $\BB(\RR^n)$ we denote those measures in $\mathscr{P}(\RR^n)$ with bounded second moment, i.e.\ $\mfix \in \mathscr{P}(\RR^n)$ that satisfy
\begin{equation*}
    \int \|x\|_2^2\, d\mfix(x) < \infty.
\end{equation*}
For $\mfix \in \BB(\mathbb{R}^n)$ we also consider the space $\FF$ with norm
\begin{equation*}
\|f\|_{\mfix}^2 = \int \|f(x)\|_2^2\, d\mfix(x).    
\end{equation*}
In case of the $L^2$-norm with respect to the Lebesgue measure $\lambda$, we simply write $\|f\|$.

For a map $T:\RR^n \to \RR^n$ and a measure $\mfix$ we define the \emph{pushforward measure} $T_{\sharp}\mfix$ by 
\begin{equation*}
    T_{\sharp}\mfix(A) = \mfix(T^{-1}(A)),
\end{equation*}
where $A\subset \RR^n$ is measurable and $T^{-1}(A)$ denotes the preimage of $A$ under $T$.

If $\mfix \in \BB(\RR^n)$ is absolutely continuous with respect to the Lebesgue measure $\lambda$, which we denote by $\mfix \ll \lambda$, then there exists a density $f_{\mfix}: \RR^n \to \RR$ such that
\begin{equation}\label{eq:densities}
    \mfix(A) = \int_A f_{\mfix}(x)\,d\lambda(x), \quad A \subseteq \RR^n \text{ measureable.}
\end{equation}
In terms of densities, the pushforward relation $\mvariable(A) = \mfix(T^{-1}(A))$ is given by
\begin{equation}\label{eq:measure_preserving}
    \int_{T^{-1}(A)} f_{\mfix}(x)\,d\lambda(x) = \int_A f_{\mvariable}(y)\,d\lambda(y), \quad A\subseteq \RR^n \text{ measurable.}
\end{equation}
In case the map $T$ is invertible and differentiable, we can rewrite \eqref{eq:measure_preserving} as 
\begin{equation}\label{eq:bijective_transport}
    f_{\mvariable}(y)=f_{\mfix}\left(T^{-1}(y)\right)|\det D_yT^{-1}|.
\end{equation}
Given two measures ${\mfix},{\mvariable}$, there can exist many maps $T$ that push $\mfix$ to $\mvariable$. Therefore one seeks to impose yet another condition to make this map unique. In the theory of optimal transport \cite{villani-2009}, uniqueness is obtained via solving an optimization problem. The map $T$ is required to minimize a cost function of the form
\begin{equation}\label{eq:monge}
    \int c(T(x),x) \,d\mfix(x),
\end{equation}
under the constraint $T_{\sharp}\mfix = \mvariable$.
In this paper we consider the cost $c(x,y) = ||x-y||_2^2$. Other cost functions are possible as well, most notably, $p$-norms can be studied instead of $2$-norms \cite{villani-2009}.
If the optimization has a solution, then
\begin{equation*}
    W_2({\mfix},{\mvariable})^2 = \min_{T: T_{\sharp}\mfix = \mvariable} 
    \int ||T(x)-x||_2^2 \,d\mfix(x)
\end{equation*}
is the \emph{$2$-Wasserstein distance} between the measures ${\mfix}$ and ${\mvariable}$. In this paper we will refer to $W_2$ as \emph{the} Wasserstein distance, as we only consider this case. The map $T$ that minimizes \eqref{eq:monge} is called \emph{optimal transport map}.

We introduce the notation $T_{\mfix}^{\mvariable}$ to denote the optimal transport map from $\mfix$ to $\mvariable$. With this notation we have the identity
\begin{equation*}
    W_2(\mfix,\mvariable) = \|T_{\mfix}^{\mvariable}-\id\|_{\mfix}.
\end{equation*}
We now cite a result concerning existence and uniqueness of the optimal transport map which is used throughout this paper.

%
\begin{theorem}[\cite{brenier-1991}, formulation taken from \cite{peyre19}]
\label{thm:brenier}
%
Let $\mfix,\mvariable \in \BB(\RR^n)$ and consider the cost function $c(x,y)=||x-y||_2^2$.
If $\mfix$ is absolutely continuous with respect to the Lebesgue measure, then there exists a unique map $T \in \FF$ pushing $\mfix$ to $\mvariable$, which minimizes \eqref{eq:monge}. Furthermore, the map $T$ is uniquely defined as the gradient of a convex function $\varphi$,
$T (x) = \nabla \varphi(x)$, where $\varphi$ is the unique (up to an additive constant) convex function
such that $(\nabla \varphi)_{\sharp}\mfix = \mvariable$. 
\end{theorem}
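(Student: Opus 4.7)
The plan is to prove Brenier's theorem via the standard Kantorovich relaxation and the characterization of quadratic-cost optimizers through cyclical monotonicity, then collapse the optimal coupling to a map using absolute continuity of $\mfix$.

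First I would relax the Monge problem \eqref{eq:monge} to the Kantorovich problem: minimize $\int \|x-y\|_2^2 \, d\pi(x,y)$ over all couplings $\pi \in \BB(\RR^n \times \RR^n)$ with marginals $\mfix$ and $\mvariable$. Existence of a minimizer $\pi^*$ follows from Prokhorov tightness (both marginals are fixed) together with lower semicontinuity of $\pi \mapsto \int \|x-y\|_2^2 \, d\pi$ on the weak topology. This step is routine; it just gives us an optimal \emph{plan}, not yet a \emph{map}.

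Next I would show that $\supp(\pi^*)$ is cyclically monotone. Expanding $\|x-y\|_2^2 = \|x\|_2^2 - 2\langle x,y\rangle + \|y\|_2^2$, the marginal terms are $\pi$-independent, so minimizing quadratic cost is equivalent to maximizing $\int \langle x,y\rangle \, d\pi$. A standard swapping argument shows that for any $(x_1,y_1),\dots,(x_N,y_N) \in \supp(\pi^*)$ one has $\sum_i \langle x_i, y_i\rangle \ge \sum_i \langle x_i, y_{i+1}\rangle$ (indices mod $N$), i.e.\ the support is cyclically monotone in the classical sense. By Rockafellar's theorem every cyclically monotone set in $\RR^n \times \RR^n$ is contained in the subdifferential graph of a proper lower semicontinuous convex function $\varphi$, so $\supp(\pi^*) \subseteq \{(x,y) : y \in \partial \varphi(x)\}$.

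The crucial step is now to use $\mfix \ll \lambda$: by Rademacher's theorem convex functions are differentiable Lebesgue-a.e., hence $\mfix$-a.e., so $\partial \varphi(x) = \{\nabla \varphi(x)\}$ for $\mfix$-a.e.\ $x$. Thus $\pi^*$ is concentrated on the graph of $T := \nabla \varphi$, which means $\pi^* = (\id \times T)_\sharp \mfix$ and consequently $T_\sharp \mfix = \mvariable$. Since $\pi^*$ achieves the minimum over all couplings and in particular over all couplings induced by transport maps, $T$ solves the original Monge problem \eqref{eq:monge}. For uniqueness, if $\nabla \varphi_1 = \nabla \varphi_2$ $\mfix$-a.e.\ with both $\varphi_i$ convex, then on the interior of $\conv(\supp \mfix)$ (where the densities are positive) the gradients agree Lebesgue-a.e., forcing $\varphi_1 - \varphi_2$ to be constant.

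The main obstacle is the passage from ``cyclically monotone support'' to ``graph of a convex function's gradient'': Rockafellar's theorem is nontrivial, and combining it with the measure-theoretic collapse of $\partial \varphi$ to $\nabla \varphi$ on a $\mfix$-full set is the technical heart of the argument, requiring the hypothesis $\mfix \ll \lambda$ in an essential way. The rest of the proof is essentially bookkeeping.
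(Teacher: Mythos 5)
The paper does not actually prove \Cref{thm:brenier}; it is quoted from the literature (Brenier's theorem, in the formulation of Peyr\'e--Cuturi), so there is no internal argument to compare against. Your outline is the canonical modern proof --- Kantorovich relaxation, reduction of the quadratic cost to maximizing $\int\langle x,y\rangle\,d\pi$, cyclical monotonicity of the support of an optimal plan, Rockafellar's theorem to produce a convex potential $\varphi$, and the collapse $\partial\varphi(x)=\{\nabla\varphi(x)\}$ for $\mfix$-a.e.\ $x$ using $\mfix\ll\lambda$ together with local Lipschitzness of convex functions --- and this is exactly the route taken in the cited sources. The existence half is sound, modulo the routine caveat that $\varphi$ may equal $+\infty$ outside a convex set, so one must check that $\mfix$ charges only the interior of its domain, which is where a.e.\ differentiability holds.

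The uniqueness half has a genuine gap. The theorem asserts (i) that the minimizing map $T$ of \eqref{eq:monge} is unique, and (ii) that $\varphi$ is the unique (up to a constant) convex function with $(\nabla\varphi)_\sharp\mfix=\mvariable$ --- uniqueness among \emph{all} such convex functions, not merely the one produced by your construction. What you prove is only the easy converse: \emph{if} $\nabla\varphi_1=\nabla\varphi_2$ holds $\mfix$-a.e., then the potentials differ by a constant (and even that requires connectedness of the region where $\mfix$ lives; the density need not be positive on all of the interior of $\conv(\supp(\mfix))$ when the support is disconnected). The missing steps are: (a) every optimal plan --- not just the one you extracted --- has cyclically monotone support; (b) if $\pi_1,\pi_2$ are both optimal, so is $\tfrac12(\pi_1+\pi_2)$, which is then carried by the subdifferential of a single convex $\tilde\varphi$, and differentiability of $\tilde\varphi$ at $\mfix$-a.e.\ point forces the two plans, hence the two maps, to coincide $\mfix$-a.e.; and (c) for claim (ii), the observation that the gradient graph of \emph{any} convex function is cyclically monotone, so any $\varphi$ with $(\nabla\varphi)_\sharp\mfix=\mvariable$ already induces an optimal plan and therefore falls under the uniqueness established in (b). Without (a)--(c) you have produced \emph{an} optimal map of gradient form, but not the uniqueness statements the theorem actually makes.
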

There exist many generalizations of this result, for example to more general cost functions, or to Riemannian manifolds \cite{brenier-1991,mccann-2001,villani-2009,ambrosio-2013}.

\section{Linear Optimal Transportation and its properties}
\label{sec:IntroLOT}

In this section we introduce the \emph{Linear Optimal Transportation (LOT)} as defined in \cite{wei13} (also called \emph{Monge embedding} in \cite{merigot20}) and present its basic properties.

LOT is an embedding of $\BB(\RR^n)$ into the linear space $\FF$ based on a fixed measure $\mfix$. It is defined as
\begin{equation}\label{eq:LOT}
    \mvariable \mapsto T_{\mfix}^{\mvariable}.
\end{equation}
The power of this embedding lies in the fact that the target space is linear. This allows to apply linear methods to inherently nonlinear problems in $\BB(\RR^n)$ (see, for example, the application to classification problems in \Cref{sec:examples}, and \cite{park18,merigot20}).

The map \eqref{eq:LOT} can be thought of as a linearization of the Riemannian manifold $\BB(\mathbb{R}^n)$ endowed with the Wasserstein distance. The tangent space of $\BB(\mathbb{R}^n)$ at $\sigma$ lies in $\FF$, hence \eqref{eq:LOT} is an inverse to the exponential map \cite{villani-2009,gigli-2011,wei13}.

The map \cref{eq:LOT} has been studied by others authors as well, mainly with respect to its regularity. \cite{gigli-2011} shows $1/2$-H\"older regularity of a time-dependent version of \cref{eq:LOT} under regularity assumptions on the measures $\mfix,\mvariable$ (we discuss this result in Appendix \ref{sec:regularity}). \cite{merigot20} prove a weaker H\"older bound, but without any regularity assumptions on the measures. It is also shown in both \cite{gigli-2011} and \cite{merigot20} that in general, the regularity of \eqref{eq:LOT} is not better than $1/2$.

Bounds for a variant of \eqref{eq:LOT} in which the source measure, rather than the target measure is varied, can be found in \cite{berman20}.

We now define LOT and summarize its basic properties.

%
%
\begin{definition}[Linear Optimal Transportation \cite{wei13}]\label{def:cdt}
Fix a measure $\mfix \in \BB(\RR^n)$, $\mfix \ll \lambda$. We define the \emph{Linear Optimal Transportation (LOT)}, $F_{\mfix}$, which assigns a function in $\FF$ to a measure in $\BB(\RR^n)$:
\begin{equation*}
    F_{\mfix}(\mvariable) = T_{\mfix}^{\mvariable},
    \quad \mvariable \in \BB(\RR^n).
\end{equation*}
%
%
\end{definition}
%
We now show that LOT is an embedding with convex image:
%
\begin{lemma}\label{lem:injective}
For fixed $\mfix \in \BB(\RR^n)$, $\mfix \ll \lambda$, we have the following
\begin{enumerate}
    \item $F_{\mfix}$ embeds $\BB(\RR^n)$ into $\FF$;
    %
    \item the image $F_{\mfix}(\BB(\RR^n))$ is convex in $\FF$.
\end{enumerate}
%

\end{lemma}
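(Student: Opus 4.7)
The plan is to establish the two statements in turn, using the existence/uniqueness result of Brenier (Theorem \ref{thm:brenier}) as the main engine in both parts.

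For the embedding statement, I first need to check that $F_{\mfix}$ actually lands in $\FF$, and then that it is injective. Membership in $\FF$ is a quick pushforward computation: by the change of variables identity \eqref{eq:measure_preserving} applied with $A = \RR^n$ and the squared-norm weight,
\begin{equation*}
\|T_{\mfix}^{\mvariable}\|_{\mfix}^2 = \int \|T_{\mfix}^{\mvariable}(x)\|_2^2\, d\mfix(x) = \int \|y\|_2^2\, d\mvariable(y) < \infty,
\end{equation*}
where finiteness uses $\mvariable \in \BB(\RR^n)$. For injectivity, suppose $F_{\mfix}(\mvariable_1) = F_{\mfix}(\mvariable_2)$ as elements of $\FF$, i.e.\ $T_{\mfix}^{\mvariable_1} = T_{\mfix}^{\mvariable_2}$ $\mfix$-almost everywhere. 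Then their pushforwards under $\mfix$ coincide, and each is the corresponding target measure by Brenier's theorem, so $\mvariable_1 = (T_{\mfix}^{\mvariable_1})_{\sharp} \mfix = (T_{\mfix}^{\mvariable_2})_{\sharp} \mfix = \mvariable_2$.

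For convexity of the image, fix $\mvariable_0, \mvariable_1 \in \BB(\RR^n)$ and $t \in [0,1]$, and set
\begin{equation*}
S_t := (1-t)\, T_{\mfix}^{\mvariable_0} + t\, T_{\mfix}^{\mvariable_1}.
\end{equation*}
I want to produce a measure $\mvariable_t \in \BB(\RR^n)$ whose LOT image is exactly $S_t$. The natural candidate is $\mvariable_t := (S_t)_{\sharp}\mfix$. By Brenier's theorem (applied to $\mfix$, which is absolutely continuous by assumption), $T_{\mfix}^{\mvariable_i} = \nabla \varphi_i$ for convex potentials $\varphi_0, \varphi_1$, so
\begin{equation*}
S_t = \nabla\bigl((1-t)\varphi_0 + t\varphi_1\bigr),
\end{equation*}
and a convex combination of convex functions is convex. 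Hence $S_t$ is the gradient of a convex function that pushes $\mfix$ forward to $\mvariable_t$, and by the uniqueness part of Brenier's theorem it is the optimal transport map $T_{\mfix}^{\mvariable_t}$. Finally, $\mvariable_t \in \BB(\RR^n)$ because the second-moment computation above gives $\int \|y\|_2^2\, d\mvariable_t(y) = \|S_t\|_{\mfix}^2 \leq \bigl(\|T_{\mfix}^{\mvariable_0}\|_{\mfix} + \|T_{\mfix}^{\mvariable_1}\|_{\mfix}\bigr)^2 < \infty$, so $F_{\mfix}(\mvariable_t) = S_t$ lies in the image.

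The only real subtlety, and the step I expect to need the most care to articulate, is the convexity argument: one must produce the preimage measure explicitly (as the pushforward of $\mfix$ by $S_t$) and then invoke uniqueness in Brenier's theorem to certify that $S_t$ is itself the optimal map from $\mfix$ to that measure, rather than some other measure-preserving map. Injectivity and the $L^2$-bound are essentially bookkeeping.
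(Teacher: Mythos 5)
Your argument for injectivity and for convexity of the image is essentially the paper's own proof: the same pushforward identity $\mvariable = (T_{\mfix}^{\mvariable})_{\sharp}\mfix$ for injectivity, and the same construction for convexity (form the convex combination of the two optimal maps, push $\mfix$ forward by it, write each map as the gradient of a convex potential via Theorem \ref{thm:brenier}, and use uniqueness to certify that the combination is itself optimal). Your explicit checks that $T_{\mfix}^{\mvariable}\in\FF$ and that the interpolated measure has finite second moment are reasonable additions, though the first is already built into the statement of Theorem \ref{thm:brenier}.

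The one piece you omit is continuity of $F_{\mfix}$. The paper reads ``embeds'' as ``continuous and injective,'' and its proof of part 1 explicitly establishes continuity by citing the stability of optimal transport maps (\cite[Corollary 5.23]{villani-2009}). This is not cosmetic: continuity of $F_{\mfix}$ is what guarantees that images of compact families of measures are compact, which is exactly what the Hahn--Banach argument in the linear separability theorems (e.g.\ Theorem \ref{cor:linear_sep}) relies on. If you intend ``embeds'' to mean only ``is a well-defined injection,'' your proof is complete as written, but to match the claim as the paper uses it you should add the continuity statement and its justification.
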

\begin{proof}
The proof is an application of \Cref{thm:brenier}. The first part is also shown in \cite{merigot20}. For the convenience of the reader, we summarize the proof in \Cref{sec:ProofsIntroLOT}.
\end{proof}
%
%
%
%
We introduce a compatibility condition between LOT and the pushforward operator, which is one of the key ingredients for the results in \Cref{sec:linear_sep}. 

Fix two measures $\mfix,\mfixAdd \in \BB(\RR^n), \mfix \ll \lambda$. $F_{\mfix}$ is called \emph{compatible} with $\mfixAdd$-pushforwards of a set of functions $\HH \subseteq \FFm$ if for every $h\in \HH$ we have
\begin{equation}\label{eq:compatible}
    F_{\mfix}({h}_{\sharp}\mfixAdd) = h \circ F_{\mfix}(\mfixAdd).
\end{equation}
This condition has also been introduced by \cite{aldroubi20} on the level of densities. 
\begin{remark}
For $\mfix = \mfixAdd$, the compatibility condition reads as $T_{\mfix}^{h_{\sharp}\mfix} = h$. This means that a function $h$ is required to be the optimal transport from $\mfix$ to $h_{\sharp}\mfix$. This is a rather strong condition, and not satisfied for a general function $h$. 

The compatibility condition can also be understood in terms of operators.
The pushforward operator $h \mapsto h_{\sharp}\mfix$, which in Riemannian geometry is an exponential map, is left-inverse to $F_{\mfix}$. The compatibility condition requires that it is also right-inverse.

We mention below that the compatibility condition is satisfied for shifts and scalings, a fact also shown in \cite{aldroubi20} on the level of densities. \cite{aldroubi20} also prove that shifts and scalings are the only transformation that satisfy \eqref{eq:compatible}.
\end{remark}
For $a\in \RR^n$ denote by $S_a(x)=a+x$ the shift by $a$. Similarly, for $c>0$ denote by $R_c(x)=cx$ the scaling by $c$. We denote by $\E:=\{S_a: a\in \RR^n\} \cup \{R_c: c>0\}$ the set of all shifts and scalings.
%
%
\begin{lemma}[Compatibility on $\RR$ and with shifts and scalings]
\label{lem:translate_transport}
%

Let $\mfix,\mfixAdd \in \BB(\RR^n)$, $\mfix \ll \lambda$.
\begin{enumerate}
    \item If $n=1$, i.e.\ on $\RR$, $F_{\mfix}$ is compatible with $\mfixAdd$-pushforwards of monotonically increasing functions.
    \item For general $n\geq 1$, $F_{\mfix}$ is compatible with $\mfixAdd$-pushforwards of shifts and scalings, i.e. functions in $\E$.
\end{enumerate}
\end{lemma}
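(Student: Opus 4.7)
The plan is to derive both parts from the uniqueness portion of Brenier's theorem (\Cref{thm:brenier}). Concretely, by Brenier we may write $T_\sigma^\mu = \nabla\varphi$ with $\varphi$ convex, so to establish the identity $T_\sigma^{h_\sharp\mu} = h\circ T_\sigma^\mu$ I only need to verify two things for each relevant $h$: first, that $(h\circ T_\sigma^\mu)_\sharp \sigma = h_\sharp\mu$; second, that $h\circ T_\sigma^\mu$ is itself the gradient of a convex function. The pushforward identity is automatic from the composition rule $(h\circ T)_\sharp\sigma = h_\sharp(T_\sharp\sigma) = h_\sharp\mu$, and this computation simultaneously shows $h_\sharp\mu \in \BB(\RR^n)$ whenever $h \in L^2(\mu)$, so Brenier applies to $h_\sharp\mu$ with source $\sigma$ and yields uniqueness.

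For part (2), I would treat the two generators of $\E$ separately. If $h = S_a$, then $h\circ \nabla\varphi = \nabla\varphi + a = \nabla\bigl(\varphi(x) + a\cdot x\bigr)$, and a convex function plus a linear function is convex. If $h = R_c$ with $c>0$, then $h\circ \nabla\varphi = c\,\nabla\varphi = \nabla(c\varphi)$, and a positive multiple of a convex function is convex. In both cases $h\circ T_\sigma^\mu$ is the gradient of a convex function pushing $\sigma$ to $h_\sharp\mu$, so by the uniqueness clause of \Cref{thm:brenier} it must coincide with $T_\sigma^{h_\sharp\mu}$. This gives \eqref{eq:compatible}.

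For part (1), the one-dimensional case, I would use the fact that in $n=1$ a function is the derivative (almost everywhere) of a convex function if and only if it is monotonically nondecreasing. Thus, by Brenier, $T_\sigma^\mu$ is the unique monotonically nondecreasing map in $\FF$ sending $\sigma$ to $\mu$. If $h$ is monotonically increasing, then $h\circ T_\sigma^\mu$ is a composition of nondecreasing functions, hence nondecreasing, and still pushes $\sigma$ to $h_\sharp\mu$ by the composition rule above. Applying the one-dimensional Brenier characterization in the opposite direction identifies $h\circ T_\sigma^\mu$ with $T_\sigma^{h_\sharp\mu}$.

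There is not really a hard step here; the argument is essentially a bookkeeping exercise in Brenier's uniqueness. The only place requiring a bit of care is confirming the hypotheses of Brenier for $h_\sharp\mu$ (absolute continuity of $\sigma$ is unchanged, and finite second moment of $h_\sharp\mu$ follows from $h \in L^2(\mu)$ via the pushforward change of variables), and, in part (1), invoking the $n=1$ equivalence between monotone nondecreasing maps and gradients of convex functions so that the statement holds for the broader class of monotonically increasing $h$ rather than just shifts and positive scalings.
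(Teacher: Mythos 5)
Your proof is correct. For part (2) it is essentially identical to the paper's: write $T_\sigma^\mu=\nabla\varphi$ via \Cref{thm:brenier}, check that $S_a\circ\nabla\varphi=\nabla(\varphi(x)+\langle a,x\rangle)$ and $R_c\circ\nabla\varphi=\nabla(c\varphi)$ are gradients of convex functions, note the pushforward identity $(h\circ T)_\sharp\sigma=h_\sharp(T_\sharp\sigma)$, and conclude by uniqueness. For part (1) you take a mildly different route: the paper computes directly with the explicit one-dimensional formula $T_\sigma^\nu=G_\nu^{-1}\circ G_\sigma$ and the identity $G_{h_\sharp\mu}=G_\mu\circ h^{-1}$, whereas you invoke the one-dimensional specialization of Brenier's uniqueness (the optimal map is the unique nondecreasing map in $L^2(\sigma)$ pushing $\sigma$ to its target) and observe that $h\circ T_\sigma^\mu$ is nondecreasing and pushes $\sigma$ to $h_\sharp\mu$. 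The two arguments are close cousins, but yours has the advantage of unifying both parts under a single uniqueness template and of not requiring $h$ to be invertible (the paper's use of $h^{-1}$ implicitly assumes strict monotonicity), while the paper's version has the pedagogical benefit of exhibiting the transport map explicitly in terms of cdfs. Your attention to the hypotheses — that $h_\sharp\mu$ has finite second moment because $h\in L^2(\RR^n,\mu)$, so Brenier applies to the target $h_\sharp\mu$ — is a detail the paper glosses over and is worth keeping.
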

\begin{proof}
The proof is an application of \Cref{thm:brenier} and can be found in \cite{aldroubi20} (the first part can also be found in \cite{park18}); for the convenience of the reader, we show details in \Cref{sec:ProofsIntroLOT}.
\end{proof}

\section{Geometry of LOT embedding space}
\label{sec:linear_sep}
In this section, we characterize the geometry of the LOT embedding space under families of compatible transformations in $\E$ (i.e., shifts and scalings), as well as for approximately compatible transformations in $\GG_{\lambda,R,\varepsilon}$ (\cref{eq:G}), where $\lambda$ denotes the Lebesgue measure.

Recall from \Cref{lem:isometry_compatible} that for a measure $\mfixAdd$ and a set of functions $\HH$ we denote by $\HH \star \mfixAdd$ the set of all pushforwards of $\mfixAdd$ under $\HH$, i.e.\ 
\begin{equation*}
    \HH \star \mfixAdd = \{h_{\sharp}\mfixAdd: h\in \HH \}.
\end{equation*}
In this section, we are mainly interested in conditions under which two families of distributions defined by pushforwards of $\GG \subset \GG_{\lambda,R,\varepsilon}$, $\GG \star \mfixAdd$ and $\GG \star \mvariable$, are linearly separable in the LOT embedding space. 

Before stating the main results of this section, we briefly describe linear separability and its importance in machine learning.  Linear separability of two disjoint sets in a Hilbert space implies the existence of a hyperplane $w(x)=b$ such that 
\begin{align*}
\langle w, \mu_i\rangle < b, & & \forall \mu_i \in \HH \star \mfixAdd\\
\langle w, \nu_i\rangle > b, & & \forall \nu_i \in \HH \star \mvariable.
\end{align*}
The existence of such a hyperplane can be established through the Hahn-Banach separation theorem.  The theorem simply assumes that the two sets ($\HH \star \mfixAdd$, $ \HH \star \mvariable$) are convex, and that one is closed and the other is compact \cite{narici2010topological}.   
 
Linear separability is a strong and important condition for many machine learning applications and supervised learning generally. This is because learning a linear classifier is very straightforward, and does not require many training points to accurately estimate $w(x)$. This implies that once the distributions are mapped to the LOT embedding space, it is possible to learn a classifier that perfectly separates the two families with only a small amount of labeled examples.

We note that the result on $\GG_{\lambda,\varepsilon,R}$ (\Cref{thm:almost_separability}) is the main result of this section, but we list several other results for completeness.   We also note that, for ease of understanding, we frame all theorems in this section for subsets of shifts/scalings or perturbations of such.   However, Theorems \ref{cor:linear_sep_trans_scale} and \ref{thm:almost_separability} actually have versions in Appendix \ref{appendix:lin_sep} (\Cref{cor:linear_sep,cor:linear_sep_eps}, respectively) for the family of all approximately compatible transformations.
Furthermore, in the case of $\GG_{\lambda,\varepsilon,R}$ (\Cref{thm:almost_separability}), through \Cref{cor:push_CDT_bound_L2}, we can give an explicit characterization of the minimal distance $\delta$ required between the two families of distributions, $\GG \star \mfixAdd$ and $\GG \star \mvariable$, to guarantee linear separability.

\subsection{Approximation of the Wasserstein distance}

From \Cref{lem:injective} we know that LOT embeds $\BB(\RR^n)$ into $\FF$. In general, this embedding is not an isometry.

In this section we derive the error that occurs when approximating the Wasserstein distance by the $L^2$ distance obtained in the LOT embedding.
We are thus interested in the accuracy of the following approximation:
\begin{equation}\label{eq:W2approxLOT}
    W_2(\mfixAdd,\mvariable) \approx \|F_{\mfix}(\mfixAdd)-F_{\mfix}(\mvariable)\|_{\mfix}.
\end{equation}
Note that if $W_2(\mfixAdd,\mvariable)$ is approximated well by $\|F_{\mfix}(\mfixAdd)-F_{\mfix}(\mvariable)\|_{\mfix}$, LOT is very powerful, as the Wasserstein distance between $k$ different measures can be computed from only $k$ transports instead of ${k \choose 2}$.
Indeed, in this section we show that \cref{eq:W2approxLOT} is exact, i.e.\ the LOT embedding is an isometry, for two important cases: On $\RR$, and on $\RR^n$ if both $\mfixAdd$ and $\mvariable$ are pushforwards of a fixed measure under shifts and scalings. We further show that it is almost exact for pushforwards of functions close to shifts and scalings. 
%

It is important to note that in most applications, distributions are not exact shifts or scalings of one another.  There always exist perturbations of distributions, whether it is rotation, stretching, or sheering.  Thus, it is important to consider the behavior of LOT under such perturbations, and demonstrate that the LOT distance continues to be a quasi-isometry with respect to Wasserstein-2 distance and that the deformation constants depend smoothly on the size of the pertubation.

Let $\mfixAdd \in \BB(\RR^n)$, $R>0$, and $\varepsilon>0$. Recall that we denote by $\E=\{S_a: a\in \RR^n\} \cup \{R_c: c>0\}$ the set of all shifts and scalings. We define the sets
\begin{equation}\label{eq:E}
    \E_{\mfixAdd,R}=\{h \in \E: \|h\|_{\mfixAdd}\leq R\}
\end{equation}
and
\begin{equation}\label{eq:G}
 \GG_{\mfixAdd,R,\varepsilon} = \{g\in L^2(\RR^n,\mfixAdd): \exists h \in \E_{\mfixAdd,R}: \|g-h\|_{\mfixAdd}\leq \varepsilon\}.
\end{equation}
This can be thought of as the $\varepsilon$ tube around set of shifts and scalings, or as the set of almost compatible transformations.
%
%
\begin{theorem}
\label{thm:CDT_perturb}
Let $\mfix,\mfixAdd \in \BB(\RR^n), \mfix,\mfixAdd \ll \lambda$. Let $R>0,\varepsilon>0$.
\begin{enumerate}
\item 
For $g_1,g_2 \in \GG_{\mfixAdd,R,\varepsilon}$ and $\mfix$ the Lebesgue measure on a convex, compact subset of $\RR^n$, we have
\begin{equation*}
    0\leq \|F_{\mfix}({g_1}_{\sharp}\mfixAdd)-F_{\mfix}({g_2}_{\sharp}\mfixAdd)\|_{\mfix}-W_2({g_1}_{\sharp}\mfixAdd,{g_2}_{\sharp}\mfixAdd) \leq C\varepsilon^{\frac{2}{15}}+2\varepsilon.
\end{equation*}

\item 

If $\mfix,\mfixAdd$ satisfy the assumptions of Caffarelli's regularity theorem (\Cref{thm:caffarelli-regularity}),
then for $g_1,g_2 \in \GG_{\mfixAdd,R,\varepsilon}$ we have
\begin{equation*}
    0\leq \|F_{\mfix}({g_1}_{\sharp}\mfixAdd)-F_{\mfix}({g_2}_{\sharp}\mfixAdd)\|_{\mfix}-W_2({g_1}_{\sharp}\mfixAdd,{g_2}_{\sharp}\mfixAdd) \leq \overline{C}\,\varepsilon^{1/2}+C\,\varepsilon .
\end{equation*}

\end{enumerate}
The constants depend on $\mfix,\mfixAdd$ and $R$.

\end{theorem}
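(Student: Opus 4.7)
My plan is to sandwich the LOT distance between the shifts/scalings that $g_i \in \GG_{\mfixAdd,R,\varepsilon}$ approximates, using triangle inequalities in $L^2(\mfix)$ and in Wasserstein. The lower bound is standard: $(T_\mfix^{{g_1}_\sharp\mfixAdd}, T_\mfix^{{g_2}_\sharp\mfixAdd})_\sharp \mfix$ is an admissible coupling of ${g_1}_\sharp\mfixAdd$ and ${g_2}_\sharp\mfixAdd$, and its quadratic cost equals $\|F_\mfix({g_1}_\sharp\mfixAdd) - F_\mfix({g_2}_\sharp\mfixAdd)\|_\mfix^2$, so minimising over couplings gives the left inequality. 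For the upper bound, pick $h_i \in \E_{\mfixAdd,R}$ with $\|g_i - h_i\|_\mfixAdd \le \varepsilon$ and apply the triangle inequality in $L^2(\mfix)$:
\begin{equation*}
\|F_\mfix({g_1}_\sharp\mfixAdd) - F_\mfix({g_2}_\sharp\mfixAdd)\|_\mfix \;\le\; A_1 + M + A_2,
\end{equation*}
where $A_i := \|F_\mfix({g_i}_\sharp\mfixAdd) - F_\mfix({h_i}_\sharp\mfixAdd)\|_\mfix$ is a ``deviation'' term and $M := \|F_\mfix({h_1}_\sharp\mfixAdd) - F_\mfix({h_2}_\sharp\mfixAdd)\|_\mfix$ is the ``shift/scaling core''.

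For the middle term, compatibility of LOT with shifts and scalings (\Cref{lem:translate_transport}) gives $F_\mfix({h_i}_\sharp\mfixAdd) = h_i \circ T_\mfix^\mfixAdd$, and change of variables through $T_\mfix^\mfixAdd$ yields $M = \|h_1 - h_2\|_\mfixAdd$. Crucially, $M$ is itself a genuine Wasserstein distance: every $h_i \in \E$ is affine with positive linear part, so $h_2 \circ h_1^{-1}$ is again affine with positive-definite linear part, hence the gradient of a convex function, and therefore the optimal transport from ${h_1}_\sharp\mfixAdd$ to ${h_2}_\sharp\mfixAdd$ by \Cref{thm:brenier}. Its quadratic cost evaluates, after a change of variables, to $\|h_1 - h_2\|_\mfixAdd^2$, so $M = W_2({h_1}_\sharp\mfixAdd, {h_2}_\sharp\mfixAdd)$. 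Since coupling via $\mfixAdd$ gives $W_2({g_i}_\sharp\mfixAdd, {h_i}_\sharp\mfixAdd) \le \|g_i - h_i\|_\mfixAdd \le \varepsilon$, the Wasserstein triangle inequality yields $M \le W_2({g_1}_\sharp\mfixAdd, {g_2}_\sharp\mfixAdd) + 2\varepsilon$.

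For the deviation terms $A_1, A_2$, each compares $F_\mfix$ evaluated at two measures already at Wasserstein distance at most $\varepsilon$, which is exactly where H\"older regularity of the LOT map enters. In Case~1, where $\mfix$ is Lebesgue on a convex compact set, the H\"older bound of \cite{merigot20} gives $A_i \le C\,\varepsilon^{2/15}$; in Case~2, under the hypotheses of \Cref{thm:caffarelli-regularity}, Gigli's $\tfrac12$-H\"older estimate \cite{gigli-2011} (cf.\ Appendix~\ref{sec:regularity}) gives $A_i \le \overline{C}\,\varepsilon^{1/2}$. Assembling $A_1 + M + A_2 - W_2({g_1}_\sharp\mfixAdd, {g_2}_\sharp\mfixAdd) \le 2\varepsilon + A_1 + A_2$ and absorbing factors into the constants yields the two claimed inequalities. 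The main obstacle is ensuring that $M$ is a true Wasserstein distance rather than merely an $L^2(\mfixAdd)$ norm with no Wasserstein interpretation; this hinges on arbitrary compositions within $\E$---shift with shift, scaling with scaling, or mixed---remaining gradients of convex functions, so that Brenier's characterisation upgrades $\|h_1 - h_2\|_\mfixAdd$ to $W_2({h_1}_\sharp\mfixAdd, {h_2}_\sharp\mfixAdd)$ and closes the triangle on the Wasserstein side.
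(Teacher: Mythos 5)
Your outline is correct and follows essentially the same route as the paper: the same three-term triangle-inequality decomposition around compatible representatives $h_1,h_2\in\E_{\mfixAdd,R}$, the same treatment of the middle term as a genuine Wasserstein distance (the paper routes this through \Cref{lem:isometry_compatible} and \Cref{lem:W2-CDT-approx}, whereas your direct Brenier argument for $h_2\circ h_1^{-1}$ is a clean, self-contained alternative), the same lower bound via an admissible coupling, and the same H\"older estimates for the deviation terms. The one substantive point you gloss over is in Case~2: Gigli's estimate applied to the pair $(\mfix,{h_i}_\sharp\mfixAdd)$ produces a constant involving $K_{{h_i}_\sharp\mfixAdd}^{\mfix}$ and $W_2(\mfix,{h_i}_\sharp\mfixAdd)$, which a priori depend on $h_i$ and hence on $g_i$; establishing that these are bounded uniformly over $\E_{\mfixAdd,R}$ (so that the constants depend only on $\mfix,\mfixAdd,R$ as the theorem asserts) is the content of \Cref{lem:equivalent_strong_convex} and \Cref{cor:push_CDT_bound}, which track how the strong-convexity modulus transforms under shifts and scalings. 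Your proof should either invoke \Cref{cor:push_CDT_bound_L2} explicitly or reproduce that uniformity argument; as written, the dependence of $\overline{C}$ on the particular $h_i$ is left unaddressed.
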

\begin{proof}
The main ingredient for these results are H\"older bounds as derived in \cite{merigot20,gigli-2011}. We show a detailed proof in \Cref{sec:ProofsLinearSep}.
\end{proof}

We mention that through the application of results derived from \cite{gigli-2011} (\Cref{cor:push_CDT_bound_L2}), the constants appearing in the second part of this theorem can be characterized explicitly, see \Cref{sec:ProofsLinearSep}.

The theorem states that for functions close to ``ideal'' functions (shifts and scalings), the LOT embedding is an almost isometry. Also note the trade-off between H\"older regularity and regularity assumptions on $\mfix,\mfixAdd$: Through \cite{merigot20}, we can achieve a $2/15$ bound without strong regularity assumptions on $\mfix,\mfixAdd$; the bound improves through \cite{gigli-2011}, when $\mfix,\mfixAdd$ are regular in the sense of \Cref{thm:caffarelli-regularity}.

Without perturbation, i.e.\ when $\varepsilon=0$, \Cref{thm:CDT_perturb} implies
\begin{corollary}
\label{lem:isometry_compatible}
Let $\mfix, \mfixAdd \in \BB(\RR^n)$, $\mfix \ll \lambda$. Then for $h_1,h_2 \in \mathcal{E}$ we have
\begin{equation*}
    W_2({h_1}_{\sharp}\mfixAdd,{h_2}_{\sharp}\mfixAdd) = 
    \|F_{\mfix}({h_1}_{\sharp}\mfixAdd) -
    F_{\mfix}({h_2}_{\sharp}\mfixAdd)\|_{\mfix} = \|h_1-h_2\|_{\mfixAdd}.
\end{equation*}
%
\end{corollary}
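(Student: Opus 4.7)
The plan is to show that both $\|F_\mfix({h_1}_\sharp\mfixAdd) - F_\mfix({h_2}_\sharp\mfixAdd)\|_\mfix$ and $W_2({h_1}_\sharp\mfixAdd, {h_2}_\sharp\mfixAdd)$ equal $\|h_1 - h_2\|_\mfixAdd$, handling the two sides independently. The LOT side will reduce to a change of variables via the compatibility of $F_\mfix$ with shifts and scalings (\Cref{lem:translate_transport}(2)), and the Wasserstein side will reduce to a change of variables via Brenier's identification of gradients of convex functions with optimal transport maps (\Cref{thm:brenier}).

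For the LOT side, \Cref{lem:translate_transport}(2) gives $F_\mfix({h_i}_\sharp\mfixAdd) = h_i \circ T_\mfix^\mfixAdd$ for $i = 1, 2$. Setting $T := T_\mfix^\mfixAdd$, so that $T_\sharp\mfix = \mfixAdd$, the pushforward identity $\int f(T(x))\,d\mfix(x) = \int f(y)\,d\mfixAdd(y)$ applied to $f(y) = \|h_1(y) - h_2(y)\|_2^2$ yields
\begin{equation*}
\|F_\mfix({h_1}_\sharp\mfixAdd) - F_\mfix({h_2}_\sharp\mfixAdd)\|_\mfix^2 = \int \|h_1(T(x)) - h_2(T(x))\|_2^2\, d\mfix(x) = \int \|h_1(y) - h_2(y)\|_2^2\, d\mfixAdd(y) = \|h_1 - h_2\|_\mfixAdd^2.
\end{equation*}

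For the Wasserstein side, every element of $\E$ is a positive affine bijection of $\RR^n$, so $h_1$ is invertible and $h_2 \circ h_1^{-1}(y) = \alpha y + \beta$ for some $\alpha > 0$, $\beta \in \RR^n$. This is the gradient of the convex function $\varphi(y) = \tfrac{\alpha}{2}\|y\|_2^2 + \langle \beta, y\rangle$, so by \Cref{thm:brenier} it is the optimal transport map from ${h_1}_\sharp\mfixAdd$ to ${h_2}_\sharp\mfixAdd$. Changing variables via $y = h_1(x)$ then gives
\begin{equation*}
W_2({h_1}_\sharp\mfixAdd, {h_2}_\sharp\mfixAdd)^2 = \int \|h_2 \circ h_1^{-1}(y) - y\|_2^2\, d({h_1}_\sharp\mfixAdd)(y) = \int \|h_2(x) - h_1(x)\|_2^2\, d\mfixAdd(x) = \|h_1 - h_2\|_\mfixAdd^2.
\end{equation*}

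No substantial obstacle arises: both equalities reduce to change-of-variables identities once the appropriate structural result is invoked. The only mild subtlety is that \Cref{thm:brenier} requires ${h_1}_\sharp\mfixAdd$ to be absolutely continuous; if $\mfixAdd$ is not, I would instead observe that the plan $(\id, h_2 \circ h_1^{-1})_\sharp\, {h_1}_\sharp\mfixAdd$ is supported on the graph of a gradient of a convex function, hence $c$-cyclically monotone and therefore optimal. Alternatively, under the regularity hypotheses of \Cref{thm:CDT_perturb}, specializing that theorem to $\varepsilon = 0$ immediately delivers the equality between the LOT and Wasserstein sides, which can then be combined with the computation above.
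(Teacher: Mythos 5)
Your proof is correct. The LOT half (compatibility from \Cref{lem:translate_transport} followed by the change of variables $(T_{\sigma}^{\mu})_{\sharp}\sigma=\mu$) is exactly the paper's computation. Where you genuinely diverge is the Wasserstein half. The paper's stated derivation is to set $\varepsilon=0$ in \Cref{thm:CDT_perturb}; but the paper's proof of that theorem itself invokes this corollary at \eqref{eq:F_bound}, so the non-circular content actually lives in \Cref{lem:W2-CDT-approx} together with \Cref{lem:compatible_isometry}, where the factorization $T_{{h_1}_{\sharp}\mu}^{{h_2}_{\sharp}\mu}=T_{\sigma}^{{h_2}_{\sharp}\mu}\circ T_{{h_1}_{\sharp}\mu}^{\sigma}$ is deduced from compatibility and makes the error term in \Cref{lem:W2-CDT-approx} vanish. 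You instead verify optimality of $h_2\circ h_1^{-1}$ directly: it is affine with positive slope, hence the gradient of a convex function, hence optimal by \Cref{thm:brenier}. This buys you two things: it sidesteps the circularity just described, and it handles the case $\mu\not\ll\lambda$ (which the corollary as stated permits, since only $\sigma\ll\lambda$ is assumed) via your cyclical-monotonicity fallback, whereas the paper's supporting lemmas implicitly require $\mu\ll\lambda$ to form $T_{{h_1}_{\sharp}\mu}^{\sigma}$. The only caution is your closing remark offering to specialize \Cref{thm:CDT_perturb} to $\varepsilon=0$ as an alternative: that is precisely the paper's circular route, so you should keep the direct argument as primary.
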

This means that $F_{\mfix}$ restricted to $\mathcal{E} \star \mfixAdd:= \left\{h_{\sharp}\mfixAdd: h\in \mathcal{E} \right\}$ is an isometry.

We also have the following result, which as also been shown in \cite{park18}:
\begin{corollary}
\label{cor:isometryR}
On $\RR$, $F_{\mfix}$ is an isometry.
\end{corollary}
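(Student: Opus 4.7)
The plan is to exploit the defining feature of one–dimensional optimal transport: on $\RR$, the optimal transport map between any two measures is the unique monotonically increasing map between them (equivalently, it is the composition of the quantile function with the CDF). This reduces the corollary to a change of variables.

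First I would fix $\mu,\nu\in\BB(\RR)$ and set $T_1=T_\mfix^\mu=F_\mfix(\mu)$ and $T_2=T_\mfix^\nu=F_\mfix(\nu)$. By \Cref{thm:brenier} applied in one dimension, $T_1$ and $T_2$ are gradients of convex functions on $\RR$, i.e.\ they are monotonically non-decreasing. Because $\mfix\ll\lambda$, $T_1$ admits a (possibly generalized) monotone inverse $T_1^{-1}$ which pushes $\mu$ back to $\mfix$. The next step is to consider the candidate map $S:=T_2\circ T_1^{-1}$ from $\mu$ to $\nu$: by the chain rule for pushforwards, $S_\sharp\mu=(T_2)_\sharp\mfix=\nu$, and $S$ is monotone non-decreasing as a composition of monotone maps, so by the $n=1$ case of \Cref{lem:translate_transport} (equivalently, by the 1D characterization of Brenier's theorem) we can identify $S=T_\mu^\nu$.

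It then suffices to compute
\begin{equation*}
    W_2(\mu,\nu)^2=\int_\RR |T_\mu^\nu(x)-x|^2\,d\mu(x)=\int_\RR |T_2(T_1^{-1}(x))-x|^2\,d\mu(x),
\end{equation*}
and perform the change of variables $x=T_1(y)$, under which $d\mu(x)=d\mfix(y)$ since $(T_1)_\sharp\mfix=\mu$. This yields
\begin{equation*}
    W_2(\mu,\nu)^2=\int_\RR |T_2(y)-T_1(y)|^2\,d\mfix(y)=\|F_\mfix(\nu)-F_\mfix(\mu)\|_\mfix^2,
\end{equation*}
which is exactly the desired isometry.

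I expect the only mild obstacle to be handling invertibility of $T_1$ cleanly when $\mu$ is not absolutely continuous, since then $T_1$ may be constant on intervals and $T_1^{-1}$ must be interpreted as a generalized inverse; however, the change-of-variables step only uses the pushforward identity $(T_1)_\sharp\mfix=\mu$, which remains valid, so this is a notational rather than a substantive difficulty. An alternative, equally short, route is to identify both $T_1$ and $T_2$ with quantile functions reparametrized by $\mfix$ and invoke the classical 1D formula $W_2(\mu,\nu)^2=\int_0^1 |F_\mu^{-1}(u)-F_\nu^{-1}(u)|^2\,du$ after the substitution $u=F_\mfix(x)$.
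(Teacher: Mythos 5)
Your proof is correct and follows essentially the same route as the paper: you establish the composition identity $T_{\mu}^{\nu}=T_{\sigma}^{\nu}\circ T_{\mu}^{\sigma}$ (the paper's \eqref{eq:transport_composition}) from the one-dimensional characterization of optimal maps as monotone measure-preserving maps, and then conclude by the change of variables $(T_{\sigma}^{\mu})_{\sharp}\sigma=\mu$, which is precisely what the paper does by combining \Cref{lem:translate_transport}, \Cref{lem:compatible_isometry} and \Cref{lem:W2-CDT-approx}. The generalized-inverse caveat you flag is equally present (implicitly) in the paper's \Cref{lem:compatible_isometry}, so it is not a gap relative to the paper's own argument.
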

\begin{proof}
We proof in \Cref{lem:compatible_isometry} that compatibility of $F_{\mfix}$ with $\mu$-pushforwards implies \cref{eq:transport_composition}. Thus the result follows from \Cref{lem:translate_transport}.
\end{proof}

\subsection{Linear separability results}

We establish the main result of this paper, which covers approximately compatible transforms in $\GG_{\lambda,\varepsilon,R}$, the $\varepsilon$ tube around the bounded shifts and scalings $\E_{\lambda,R}$. \Cref{thm:almost_separability} establishes the case for the tube around $\E_{\lambda,R}$, and \Cref{cor:linear_sep_eps} establishes the condition for almost compatible transformations.   In both cases to show linear separability in the LOT embedding space, one must now assume that the two families of distributions are not just disjoint, but actually have a nontrivial minimal distance.    
%
%
\begin{theorem}
\label{thm:almost_separability}
Let $\mfix,\mfixAdd,\mvariable \in \BB(\RR^n), \mfix,\mfixAdd,\mvariable \ll \lambda$.
Let $R>0,\varepsilon>0$. Consider $\GG \subset \GG_{\lambda,R,\varepsilon}$ and let $\GG$ be convex. Let $\GG \star \mfixAdd$ and $\GG \star \mvariable$ be compact.
If either
\begin{enumerate}
    \item $\mfix$ is the Lebesgue measure on a convex, compact subset of $\RR^n$ or
    \item $\mfix,\mfixAdd,\mvariable$ satisfy the assumptions of Caffarelli's regularity theorem (\Cref{thm:caffarelli-regularity}),
\end{enumerate}
then there exists a $\delta>0$ such that whenever
$W_2(g_1\star \mfixAdd,g_2\star \mvariable)> \delta$ for all $g_1,g_2\in \GG$, we have that $F_{\mfix}(\GG \star\mfixAdd)$ and $F_{\mfix}(\GG \star\mvariable)$ are linearly separable.  Moreover, $\delta$ is computable in both cases (see Remark \ref{remark:delta}) and $\delta\rightarrow 0$ as $\varepsilon \rightarrow 0$.
\end{theorem}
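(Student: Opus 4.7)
The plan is to approximate the (a priori non-convex) images $A := F_\mfix(\GG \star \mfixAdd)$ and $B := F_\mfix(\GG \star \mvariable)$ in $\FF$ by genuinely convex surrogates, apply the Hahn--Banach separation theorem to the surrogates, and then transfer the separating hyperplane back using a Hausdorff bound driven by \Cref{thm:CDT_perturb}.

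First, I would introduce the linear surrogates
\[
\tilde A := \{g \circ F_\mfix(\mfixAdd) : g \in \GG\}, \qquad \tilde B := \{g \circ F_\mfix(\mvariable) : g \in \GG\}.
\]
Since $\GG$ is convex and $g \mapsto g \circ F_\mfix(\mfixAdd)$ is affine (in fact linear) in $g$, both $\tilde A$ and $\tilde B$ are convex subsets of $\FF$; their compactness follows from compactness of $\GG \star \mfixAdd$ and $\GG \star \mvariable$ together with continuity of the relevant composition/pushforward maps (itself a consequence of the H\"older control in \Cref{thm:CDT_perturb}).

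Second, I would bound the Hausdorff distance between $A$ and $\tilde A$ (and symmetrically for $B, \tilde B$). For each $g \in \GG \subset \GG_{\lambda,R,\varepsilon}$ choose $h = h(g) \in \E_{\lambda,R}$ with $\|g-h\|_\lambda \leq \varepsilon$. Compatibility on shifts and scalings (\Cref{lem:translate_transport}) gives $F_\mfix(h \star \mfixAdd) = h \circ F_\mfix(\mfixAdd)$, so
\[
\|F_\mfix(g \star \mfixAdd) - g \circ F_\mfix(\mfixAdd)\|_\mfix \leq \|F_\mfix(g\star\mfixAdd) - F_\mfix(h\star\mfixAdd)\|_\mfix + \|(h-g)\circ F_\mfix(\mfixAdd)\|_\mfix.
\]
By \Cref{thm:CDT_perturb} applied with base measure $\mfixAdd$, the first summand is at most $W_2(g\star\mfixAdd,h\star\mfixAdd) + \eta(\varepsilon)$, with $\eta(\varepsilon)=C\varepsilon^{2/15}+2\varepsilon$ in case~1 and $\overline C\,\varepsilon^{1/2}+C\varepsilon$ in case~2; moreover the obvious coupling gives $W_2(g\star\mfixAdd,h\star\mfixAdd) \leq \|g-h\|_\mfixAdd$. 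The second summand equals $\|h-g\|_\mfixAdd$ by pushforward change of variables. Under the regularity of $\mfixAdd$ assumed in either case (so the density of $\mfixAdd$ is bounded), $\|g-h\|_\mfixAdd \lesssim \|g-h\|_\lambda \leq \varepsilon$, yielding an explicit $\rho(\varepsilon)\to 0$ with $\mathrm{dist}_H(A,\tilde A),\mathrm{dist}_H(B,\tilde B)\leq \rho(\varepsilon)$.

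Third, I would produce the separator. Because $(T_\mfix^{\mu_1},T_\mfix^{\mu_2})$ is always an admissible coupling, $W_2(\mu_1,\mu_2) \leq \|F_\mfix(\mu_1)-F_\mfix(\mu_2)\|_\mfix$, so the hypothesis $W_2(g_1\star\mfixAdd, g_2\star\mvariable) > \delta$ forces $\mathrm{dist}(A,B)>\delta$ in $\FF$, hence $\mathrm{dist}(\tilde A,\tilde B) \geq \delta - 2\rho(\varepsilon)$. Taking $\delta \geq 4\rho(\varepsilon)$, the Hahn--Banach theorem yields $w\in\FF$ with $\|w\|_\mfix=1$ and $c\in\RR$ strictly separating the compact convex sets $\tilde A$ and $\tilde B$ with margin $\geq \tfrac12(\delta-2\rho(\varepsilon)) \geq \rho(\varepsilon)$. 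Since every point of $A$ (resp.\ $B$) sits within $\rho(\varepsilon)$ of the corresponding point of $\tilde A$ (resp.\ $\tilde B$), the same hyperplane separates $A$ from $B$. This proves the theorem with the explicit choice $\delta = 4\rho(\varepsilon)$, which satisfies $\delta\to 0$ as $\varepsilon \to 0$.

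The main obstacle is the quantitative second step: comparing the optimal transport $F_\mfix(g\star\mfixAdd)$ to the generically \emph{non-optimal} surrogate transport $g\circ F_\mfix(\mfixAdd)$. This is precisely where the H\"older quasi-isometry of \Cref{thm:CDT_perturb} is indispensable, and it is also where the two regularity regimes of the theorem enter, fixing the rate of $\rho(\varepsilon)$. A secondary technical point is the norm change from $L^2(\lambda)$ (built into $\GG_{\lambda,R,\varepsilon}$) to $L^2(\mfixAdd)$ (required by \Cref{thm:CDT_perturb}), which is absorbed by the density bounds guaranteed by the assumptions of each case.
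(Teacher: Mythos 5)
Your proposal is correct and follows essentially the same route as the paper: the same decomposition $\|F_{\mfix}(g\star\mfixAdd)-g\circ F_{\mfix}(\mfixAdd)\|_{\mfix}\leq\|F_{\mfix}(g\star\mfixAdd)-F_{\mfix}(h\star\mfixAdd)\|_{\mfix}+\|g-h\|_{\mfixAdd}$ controlled by the H\"older bounds and the $\lambda$-to-$\mfixAdd$ density conversion, the same lower bound $W_2\leq\|F_{\mfix}(\cdot)-F_{\mfix}(\cdot)\|_{\mfix}$ to transfer the minimal-distance hypothesis, and Hahn--Banach applied to convex proxies of the images. The only cosmetic difference is that you separate the explicit linear surrogates $\GG\circ F_{\mfix}(\mfixAdd)$ and transfer the hyperplane via a Hausdorff bound, whereas the paper phrases the same estimate as $\varepsilon$-compatibility, deduces $2\varepsilon$-convexity of the images, and separates their closed convex hulls (\Cref{cor:linear_sep_eps}, \Cref{lem:distance_convex_hull}).
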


\begin{proof}
We show a detailed proof in \Cref{sec:ProofsLinearSep}.
\end{proof}
\begin{remark}\label{remark:delta}
We note here that for both cases of \Cref{thm:almost_separability}, the sufficient minimal distance $\delta$ can be made explicit:
\begin{enumerate}
    \item In this case, the H{\"o}lder bound by \cite{merigot20} can be used, see \eqref{eq:215bound}. With $\psi(\mfixAdd)=C\|f_{\mfixAdd}\|_{\infty}^{1/15}\varepsilon^{2/15}+\|f_{\mfixAdd}\|_{\infty}^{1/2}\varepsilon$, the choice
    $\delta = 6\,\max\{\psi(\mfixAdd),\psi(\mvariable)\}$ is sufficient. The constant $C$ is the constant as appearing in the derivations by \cite{merigot20}.
    \item In this case, a H{\"o}lder bound following from \cite{gigli-2011} can be used, see \Cref{cor:push_CDT_bound}. With 
    \begin{align*}
        \overline{\psi}(\mfixAdd) := 
             & \left(\sqrt{\frac{4R}{{K_{\mfixAdd}^{\mfix}}}}+2\right)
            \|f_{\mfixAdd}\|_{\infty}^{1/2}\,
        \varepsilon \\
        &+
        \left(4R\,\|f_{\mfixAdd}\|_{\infty}^{1/2}\,\frac{W_2(\mfix,\mfixAdd)+R+\|\id\|_{\mfixAdd}}{K_{\mfixAdd}^{\mfix}}\right)^{1/2}
        \varepsilon^{1/2},
\end{align*}
    the choice $\delta = 6\,\max\{\overline{\psi}(\mfixAdd),\overline{\psi}(\mvariable)\}$ is sufficient. The constant $K_{\mfix}^{\mfixAdd}$ is defined in \Cref{def:strongly_convex}.
\end{enumerate}
Note that a minimal distance $\delta>0$ is needed since we consider perturbations of ``ideal'' functions (shifts and scalings). A version of $\psi$ (respectively $\overline{\psi}$) also appears in characterizing the amount that LOT distance deviates from Wasserstein-2 distance (\Cref{thm:CDT_perturb}).  A parallel of $\psi$ (respectively $\overline{\psi}$) could be established for any approximately compatible transformations by proving a result similar to \Cref{lem:equivalent_strong_convex} for some compatible transformation other than shifts and scalings.

The $\varepsilon$ appears in both $\psi$ and $\overline{\psi}$ since functions in $\GG$ are $\varepsilon$-close to compatible functions, while the $\varepsilon^{2/15}$ respectively $\varepsilon^{1/2}$ come from the general H\"older bounds for LOT as proved in \cite{merigot20} respectively \cite{gigli-2011}.
\end{remark}

As a corollary to \Cref{thm:almost_separability} with $\varepsilon=0$ and $\delta=0$,
we establish simple conditions under which LOT creates linearly separable sets for distributions in $\BB(\RR^n)$.  This effectively creates a parallel of Theorem \ref{cor:linear_sep_n1} and Theorem 5.6 of \cite{park18} for the higher dimensional cases of LOT, and under the particular compatibility conditions required for higher dimensions.  Theorem \ref{cor:linear_sep_trans_scale} states this for $\E$ (shifts and scalings), and Theorem \ref{cor:linear_sep} in the Appendix provides an equivalent form for subsets of arbitrary compatible transforms. 

\begin{corollary}\label{cor:linear_sep_trans_scale}
Let $\mfix,\mfixAdd,\mvariable \in \BB(\RR^n)$, $\mfix \ll \lambda$, and
let $\HH\subseteq \E$ and let $\HH$ be convex.
If
$\HH \star \mfixAdd$
is
closed
and 
$\HH \star \mvariable$
is compact, and these two sets
are disjoint, then $F_{\mfix}(\HH \star \mfixAdd)$ and $F_{\mfix}(\HH \star \mvariable)$ are linearly separable.
%
\end{corollary}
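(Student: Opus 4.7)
The plan is to transfer the three hypotheses on $\HH \star \mfixAdd$ and $\HH \star \mvariable$ (convexity via $\HH$ convex, closed, compact, disjoint) over to their LOT images in $\FF$, and then apply the Hahn--Banach separation theorem in the Hilbert space $\FF$.

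First I would invoke the compatibility statement in the second part of \Cref{lem:translate_transport}: for every $h \in \E$,
\begin{equation*}
F_{\mfix}(h_{\sharp}\mfixAdd) \;=\; h \circ F_{\mfix}(\mfixAdd),
\end{equation*}
and similarly for $\mvariable$. Hence
\begin{equation*}
F_{\mfix}(\HH \star \mfixAdd) \;=\; \{h \circ F_{\mfix}(\mfixAdd) : h \in \HH\},
\end{equation*}
with the analogous identity for $\mvariable$. Since $\E = \{S_a\} \cup \{R_c\}$ is the union of the shift family and the scaling family, and nontrivial shifts cannot be convexly combined with nontrivial scalings to stay inside $\E$, any convex $\HH \subseteq \E$ must lie inside one of these families (possibly together with the identity). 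On either family the postcomposition map $h \mapsto h \circ F_{\mfix}(\mfixAdd)$ is affine, since $S_a \circ F_{\mfix}(\mfixAdd) = a + F_{\mfix}(\mfixAdd)$ is affine in $a$ and $R_c \circ F_{\mfix}(\mfixAdd) = c\, F_{\mfix}(\mfixAdd)$ is linear in $c$. This transfers convexity of $\HH$ to convexity of $F_{\mfix}(\HH \star \mfixAdd)$ and $F_{\mfix}(\HH \star \mvariable)$ inside $\FF$.

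Next I would use \Cref{lem:isometry_compatible}, which says that $F_{\mfix}$ restricted to $\E \star \mfixAdd$ and $\E \star \mvariable$ is an isometry onto its image; in particular it is a homeomorphism there. Closedness of $\HH \star \mfixAdd$ therefore passes to closedness of $F_{\mfix}(\HH \star \mfixAdd)$, and compactness of $\HH \star \mvariable$ passes to compactness of $F_{\mfix}(\HH \star \mvariable)$. Disjointness of the LOT images follows from the injectivity half of \Cref{lem:injective}. With two disjoint nonempty convex subsets of the Hilbert space $\FF$, one closed and the other compact, the Hahn--Banach separation theorem in its strict form produces a separating continuous linear functional $w$ and scalar $b$ realising linear separability in the LOT embedding.

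The main obstacle, and the only genuinely delicate point, is the handling of the convexity hypothesis on $\HH \subseteq \E$: because $\E$ is a union of two affine families rather than a single linear subspace, one must argue that a convex $\HH$ sits (up to the identity) inside one branch, so that the map $h \mapsto h \circ F_{\mfix}(\mfixAdd)$ really is affine and carries convex combinations to convex combinations in $\FF$. Once that is handled, the rest is a clean hypothesis transfer through the isometric embedding $F_{\mfix}$ followed by Hahn--Banach, exactly as outlined in the discussion preceding \Cref{thm:almost_separability}.
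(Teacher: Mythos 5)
Your proof is correct and follows essentially the same route as the paper: compatibility of $F_{\mfix}$ with shifts and scalings (\Cref{lem:translate_transport}) gives $F_{\mfix}(\HH\star\mfixAdd)=\HH\star F_{\mfix}(\mfixAdd)$, the convexity/closedness/compactness/disjointness hypotheses are transferred to the LOT images, and Hahn--Banach is applied; this is exactly the reduction to \Cref{cor:linear_sep} that the paper performs. The only genuine divergence is the step you single out as the main obstacle, and there your argument is doing unnecessary work: for any convex $\HH\subseteq\FF$ and any fixed $f$, the pointwise identity $(1-c)(h_1\circ f)+c\,(h_2\circ f)=\left((1-c)h_1+c\,h_2\right)\circ f$ already shows that $\HH\star f$ is convex (this is the paper's \Cref{lem:convex-image}); no affine parametrization of a single branch of $\E$ is needed, since precomposition with $f$ is linear in $h$ regardless of what $h$ is. Your observation that a convex $\HH\subseteq\E$ cannot mix nontrivial shifts with nontrivial scalings is true but not required. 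As a minor point in your favour, transferring closedness through the isometry of \Cref{lem:isometry_compatible} is a more careful justification than the paper's appeal to continuity of $F_{\mfix}$ in the proof of \Cref{cor:linear_sep}, since continuous images of closed sets need not be closed.
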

%
%
We also note the separability result on $\RR$, which follows directly from the results established above. It is also proved in \cite{park18}.
%
\begin{corollary}\label{cor:linear_sep_n1}
Let $\mfix,\mfixAdd,\mvariable \in \BB(\RR)$, $\mfix \ll \lambda$, and
let $\HH$ be a convex set of monotonically increasing functions $\RR \to \RR$.
If
$\HH \star \mfixAdd$
is
closed
and 
$\HH \star \mvariable$
is compact, and these two sets
are disjoint, then $F_{\mfix}(\HH \star \mfixAdd)$ and $F_{\mfix}(\HH \star \mvariable)$ are linearly separable.
%
\end{corollary}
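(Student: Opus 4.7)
The plan is to apply the Hahn-Banach separation theorem to the two images $F_{\mfix}(\HH \star \mfixAdd)$ and $F_{\mfix}(\HH \star \mvariable)$ inside the Hilbert space $\FF$. This requires verifying three hypotheses: both images are convex, one is closed and the other is compact, and they are disjoint. The structure of the argument parallels that of \Cref{cor:linear_sep_trans_scale}, except that part 1 of \Cref{lem:translate_transport} (monotone increasing functions on $\RR$) is used in place of part 2 (shifts and scalings).

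For convexity, I would invoke the $n=1$ compatibility from \Cref{lem:translate_transport}: since every $h\in \HH$ is monotonically increasing and $\mfix \ll \lambda$ on $\RR$, we have $F_{\mfix}(h_{\sharp}\mfixAdd) = h\circ F_{\mfix}(\mfixAdd) = h \circ T_{\mfix}^{\mfixAdd}$ for all $h\in \HH$, so
\begin{equation*}
F_{\mfix}(\HH \star \mfixAdd) = \{h\circ T_{\mfix}^{\mfixAdd} : h\in \HH\},
\end{equation*}
and the map $h\mapsto h\circ T_{\mfix}^{\mfixAdd}$ is affine in $h$; hence convexity of $\HH$ passes to the image. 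The same reasoning applies to $\mvariable$.

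For the topological hypotheses, I would use \Cref{cor:isometryR} to treat $F_{\mfix}:(\BB(\RR),W_2)\to(\FF,\|\cdot\|_{\mfix})$ as an isometry. Compactness of $\HH\star\mvariable$ in $(\BB(\RR),W_2)$ then transfers directly to compactness of $F_{\mfix}(\HH \star \mvariable)$ in $\FF$. Closedness of $F_{\mfix}(\HH\star\mfixAdd)$ in the ambient space $\FF$ is the subtler step: given a sequence $\{F_{\mfix}(\mu_k)\}$ converging to some $f\in \FF$, the isometry implies $\{\mu_k\}$ is $W_2$-Cauchy; completeness of $(\BB(\RR),W_2)$ yields a limit $\mu\in\BB(\RR)$; closedness of $\HH\star\mfixAdd$ forces $\mu\in \HH\star\mfixAdd$; and continuity of $F_{\mfix}$ identifies $f$ with $F_{\mfix}(\mu)$, placing $f$ in the image.

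Disjointness is immediate from the injectivity of $F_{\mfix}$ in \Cref{lem:injective}. With all three conditions verified, the Hahn-Banach separation theorem produces a hyperplane in $\FF$ separating the two images. The only step I expect to need care is the closedness transfer, since closedness of $\HH\star\mfixAdd$ in $\BB(\RR)$ is a priori only a statement about the Wasserstein topology; the argument above resolves this by combining the isometry with completeness of the Wasserstein space on $\RR$.
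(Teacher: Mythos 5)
Your proposal is correct and follows the same overall strategy as the paper: the paper proves this corollary by observing (via \Cref{lem:translate_transport}, part 1, restated in \Cref{rem:dropCompatibility}) that the compatibility condition holds for monotone increasing functions on $\RR$, and then invoking the general result \Cref{cor:linear_sep}, whose proof is exactly your outline --- convexity of the images via \Cref{lem:convex-image} (your affinity-of-$h\mapsto h\circ T_{\mfix}^{\mfixAdd}$ argument is the same computation), disjointness via injectivity from \Cref{lem:injective}, and Hahn--Banach. The one place you genuinely add something is the closedness step: the paper's proof of \Cref{cor:linear_sep} asserts that continuity of $F_{\mfix}$ transfers closedness of $\HH\star\mfixAdd$ to its image, but a continuous map need not send closed sets to closed sets, so as written that step is too quick. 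Your repair --- using the isometry of \Cref{cor:isometryR} to see that a convergent sequence $F_{\mfix}(\mu_k)$ in $\FF$ forces $\{\mu_k\}$ to be $W_2$-Cauchy, then invoking completeness of $(\BB(\RR),W_2)$ and closedness of $\HH\star\mfixAdd$ --- is the right fix in the one-dimensional setting and is a legitimate improvement over the paper's own argument. (Note that this fix leans on the isometry, which is special to $n=1$; the paper's general \Cref{cor:linear_sep} for $n>1$ would need a different justification of the closedness transfer, e.g.\ assuming compactness of both orbits as in \Cref{cor:linear_sep_eps}.)
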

\begin{remark}\label{rem:generalize_cdt}
Note that \Cref{cor:linear_sep_n1} is also proved in \cite{park18}. In \cite{park18}, $\mathbb{H}$ (equivalent to our $\HH$) is defined as a convex subgroup of the monotonic functions (Definition 5.5 and Definition 5.6 (i)--(iii) of \cite{park18}). We are able to relax the assumption from subgroup to subset, however.
Definition 5.5 of \cite{park18} also assumes differentiability of functions in $\mathbb{H}$, which is needed because constructions are considered from the viewpoint of densities, which means that \eqref{eq:bijective_transport} should hold. Since our approach uses the more general framework of measures rather than densities, we can drop this assumption.
%
\end{remark}
%

%
%
\begin{remark}
We note that each Theorem in this section can be trivially extended to an arbitrary set $\HH$ (or $\GG$) that is not required to be convex, so long as their convex hulls $\conv(\HH)$ (or $\conv(\GG)$) satisfy the needed assumptions of closedness, compactness, and disjointness.
\end{remark}

\section{Example: Linear separability of MNIST data set}\label{sec:examples}
We linearly separate two classes of digits from the MNIST data set \cite{MNIST} with LOT to verify the linear separability result (\Cref{thm:almost_separability}) numerically. 

We consider the classes of $1$s and $2$s from the MNIST data set. Since the MNIST digits are centered in the middle of the image, and the images have a similar size, we applied an additional (random) scale and shift to every image. Scalings were applied between $0.4$ and $1.2$ using MATLAB's ``imresize'' function. 
These values have been chosen based on the heuristics that smaller scales make some digits unrecognizable and with larger scales some digits are larger than the image.

Within each class, the digits can be considered as shifts, scalings and perturbations of each other. Therefore, via \Cref{thm:almost_separability}, the LOT embedding can be used to separate $1$s from $2$s.

The data consisting of images of $1$s and $2$s is embedded in $L^2$ via the LOT embedding, where we choose as reference density $\mfix$ an isotropic Gaussian. This means that every image $\mfixAdd$ (interpreted as a density on a grid $R \subset \RR^2$) is assigned to the function $T_{\mfix}^{\mfixAdd}: \supp(\mfix) \to R$. Since $\supp(\mfix) \subset R$ is discrete, $T_{\mfix}^{\mfixAdd}(\supp(\mfix))$ is a vector in $\RR^{2n}$, where $n$ is the number of grid points in $\supp(\mfix)$. For each $\mfixAdd$ of the data set, we use this vector as input for the linear classification scheme (we use MATLAB's ``fitcdiscr'' function).

The experiment is conducted in the following way:
We fix the number of testing data to $100$ images from each class (i.e.\ in total, the testing data set consists of $200$ images). Note that we only fix the number of testing data; the actual testing images are chosen randomly from the MNIST data set for each experiment. For the training data set we randomly choose $N$ images from each class, where $N=40,60,80,100$. For each choice $N$, we run $20$ experiments. In each experiment, the classification error of the test data is computed. Then the mean and standard deviation for every $N$ is computed. The mean classification error is shown in \Cref{fig:MNISTseparbilityError} (blue graph labeled ``LOT'') as a function of $N$.

We compare the classification performance of LOT with regular $L^2$ distance between the images. Since we only use a small number of training data ($N=40,60,80,100$ for each digit), and the size of an image is $28 \times 28 = 784$, the dimension of the feature space is much larger than the data point dimension. Such a set-up leads to zero within-class variance in LDA. To prevent this, and in order to allow for a fair comparison, we first apply PCA to reduce the dimension of the images to the same dimension as is used in LOT. The feature space dimension used in LOT is the size of the support of $\mfix$, which consists of $\approx 70$ grid points in these experiments. Thus the dimension is $140$. LDA is then applied to the PCA embeddings of the images. The resulting mean classification error is shown in \Cref{fig:MNISTseparbilityError} (red graph labeled ``PCA'') as a function of $N$.


\begin{figure}[t]
\centering
\begin{picture}(500,200)
\put(90,0){\large{Number training data for each digit}}
\put(0,40){\rotatebox{90}{\large{Classification error test data}}}
\put(20,50){\rotatebox{90}{\large{(100 images per digit)}}}
\put(40,20){\includegraphics[scale=0.45]{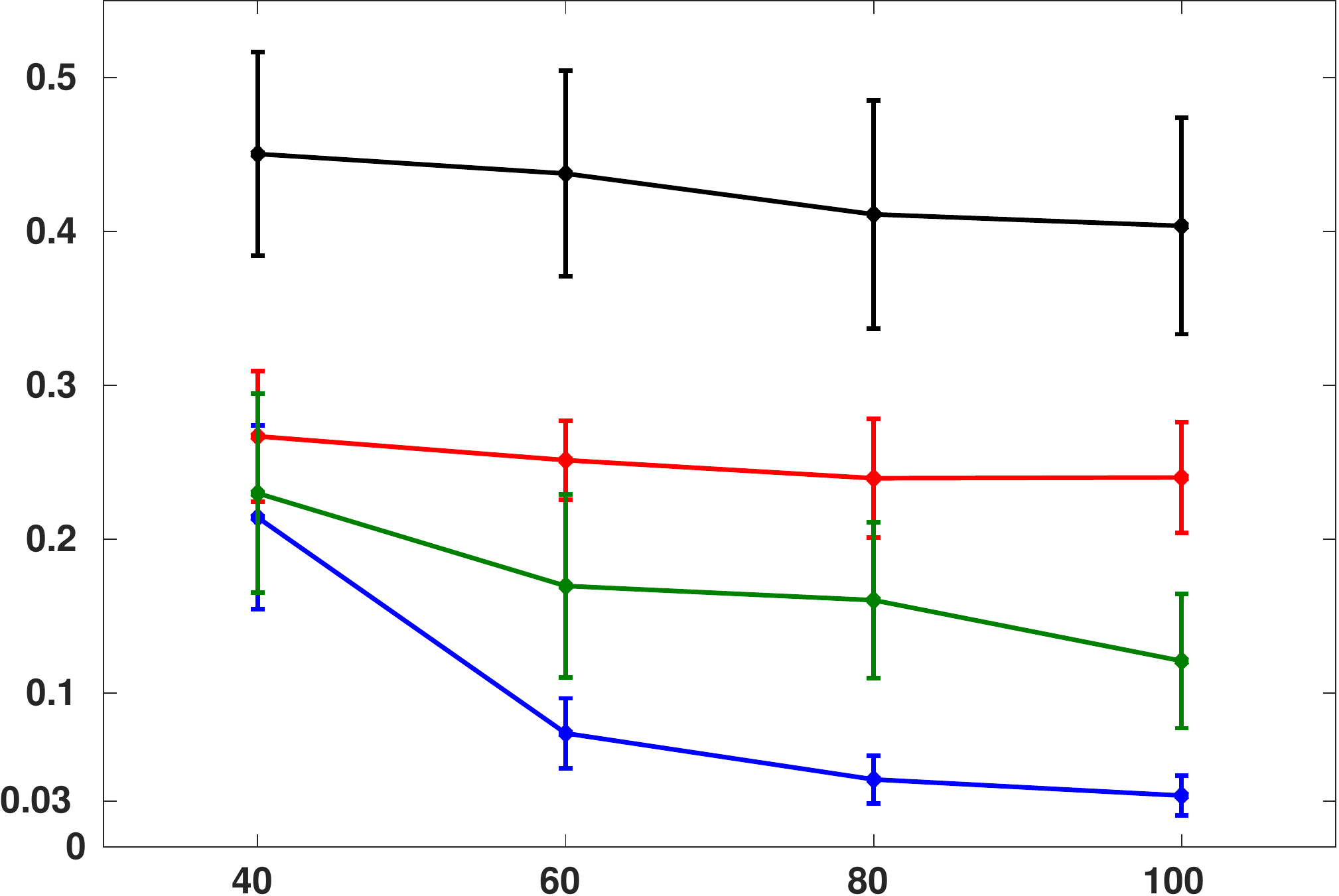}}
\put(310,185){\color{black} Small CNN}
\put(310,170){\color{red} PCA}
\put(310,155){\color{darkgreen} Large CNN}
\put(310,140){\color{blue} LOT}
\end{picture}
\caption{
Classification of MNIST digits $1$s and $2$s with small CNN (black), PCA (red), large CNN (green) and LOT embedding (blue).
We fix the number of testing data (100 images per digit), and vary the number of training data, $N=40,60,80,100$ (the actual training and testing sets are chosen randomly for each experiment). We train a linear classifier on the LOT embedding (blue), the PCA embedding (red) and the two CNNs (black and green) of the training data. The figure shows the mean and standard deviation of the classification error of the testing data over $20$ experiments for each $N$.
}
\label{fig:MNISTseparbilityError}
\end{figure}

To demonstrate the strengths of embedding into a linearly separable space, we also compare these classification results to training a convolutional neural network (CNN) \cite{goodfellow2016deep} on small amounts of data.  This is not necessarily a perfect comparison, as LOT and PCA are building unsupervised embeddings followed by a supervised classifier in that space, whereas a CNN is building an end-to-end supervised feature extraction and classification.  In theory, this should benefit the CNN if the only method of validation is the overall classification error.  However, as we will demonstrate, in the small data regime the CNN's performance still does not compete with the LOT embedding and linear classification.

To show this, we construct two CNNs to be shown in \Cref{fig:MNISTseparbilityError}.  The first (labeled ``Small CNN'') is a network constructed with three convolutional layers, each with 2 3x3 filters, followed by two fully connected layers, all with ReLU activation units.  In total, this CNN has $182$ trainable parameters, which is of a similar size to the $140$ parameters used in the LOT embedding.  The second (labeled ``Large CNN'') is a similar architecture, but with 8 3x3 filters, and $3650$ trainable parameters.  The CNNs are given the same training data sets as the LOT embeddings, and the testing error is also averaged over $20$ experiments.

It is clear from the figure that the mean error decreases as the number of training data increases for the LOT embedding, while the mean error stagnates for the PCA embedding. Note that we start with a very small amount of training data (40 images from each class), and test on 100 images from each class. The resulting LOT mean error is only $\approx 0.2$. When we train on the same amount as we test (100 images per class), the LOT mean error is already down to $\approx 0.03$.  Similarly, the LOT mean error significantly outperforms both the small and large CNNs.  This is perhaps unsurprising as neural networks are known to require large corpuses of training data \cite{marcus2018deep}, but still serves to demonstrate the strength of embedding into a linearly separable space.

The LOT classification result is also visualized via LDA embedding plots in \Cref{fig:MNIST_LDA} for two experiments. These plots again underline the fact that separation improves as the training data is increased. While training on 100 images per class (right plot of \Cref{fig:MNIST_LDA}) leads to almost perfect separation, training on 40 images per class (left plot of \Cref{fig:MNIST_LDA}) still performs very well considering the small size of the training set.


\begin{figure}[t]
\centering
\begin{picture}(500,200)
\put(90,180){\large{LDA embedding of test data}}
\put(10,155){Train with 40 images per digit}
\put(200,155){Train with 100 images per digit}

\put(-10,20){\includegraphics[scale=0.5]{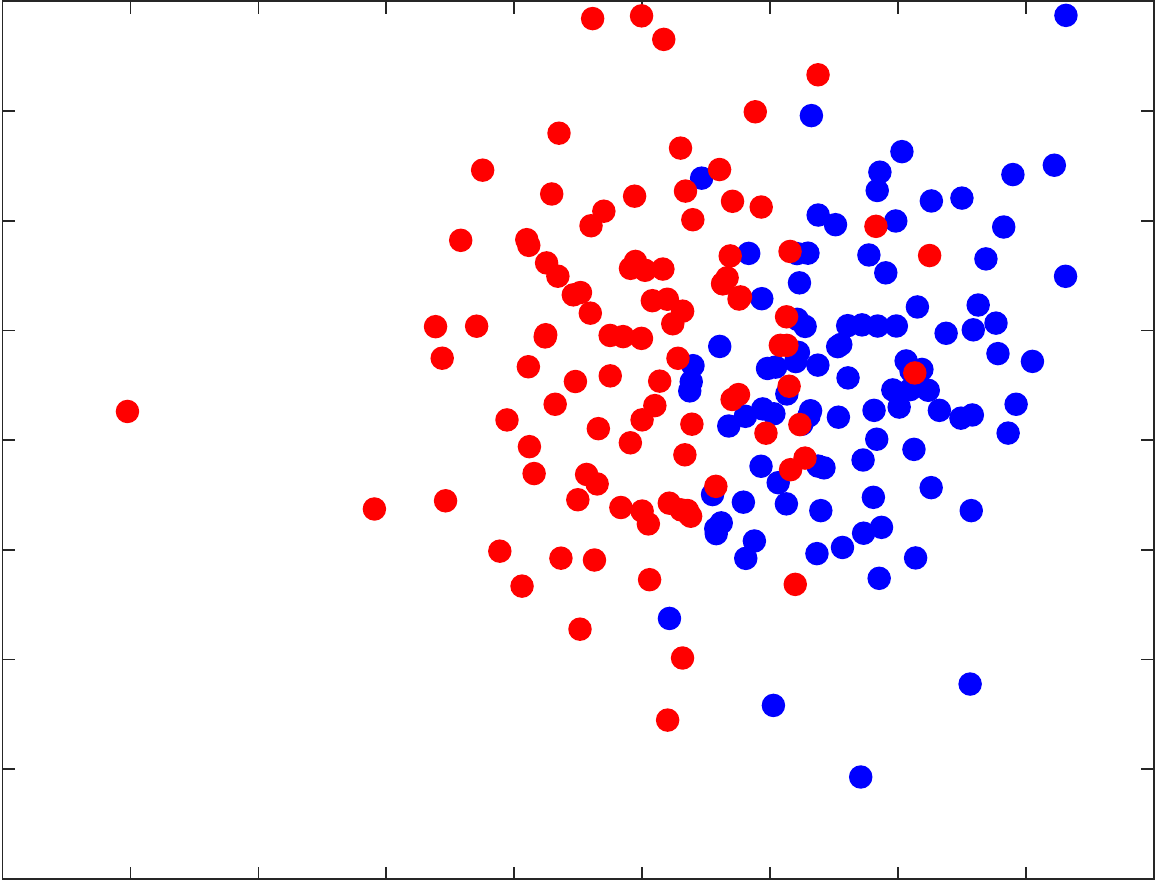}}
\put(190,20){\includegraphics[scale=0.5]{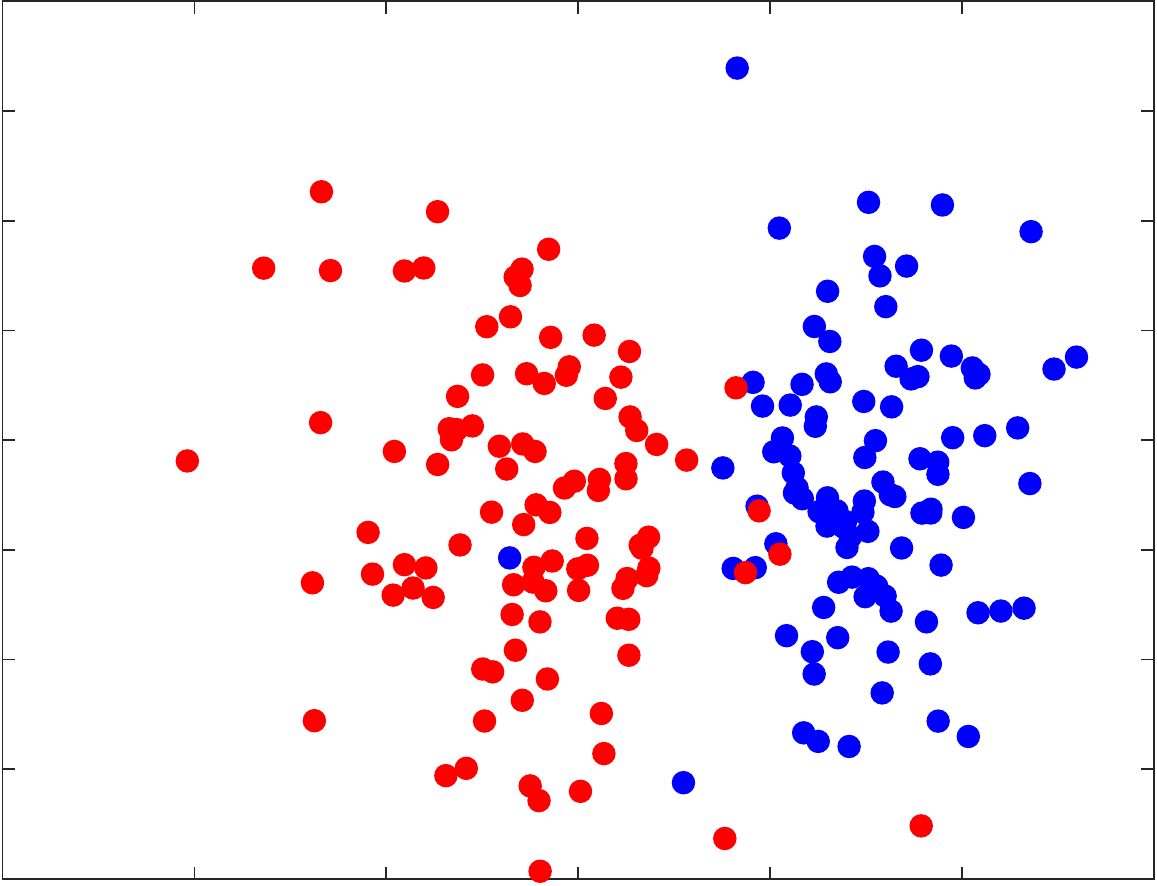}}

\end{picture}
\caption{
LDA embedding plots for the MNIST classification of digits $1$ and $2$ using LOT. As \Cref{fig:MNISTseparbilityError}, these plots underline that the classification improves with the amount of training data.
\emph{Left:} 
We choose one of the experiments carried out for $N=40$ training data for each digit. The testing data (100 images per digit) is embedded in $\RR^2$ through the LDA coordinates. The mean error (\Cref{fig:MNISTseparbilityError}) is $\approx 0.2$, which corresponds to $\approx 40$ digits being misclassified.
\emph{Right:} 
We choose one of the experiments carried out for $N=100$ training data for each digit. The testing data (100 images per digit) is embedded in $\RR^2$ through the LDA coordinates. The mean error (\Cref{fig:MNISTseparbilityError}) is $\approx 0.03$, which corresponds to $\approx 6$ digits being misclassified.
}
\label{fig:MNIST_LDA}
\end{figure}

In addition to the fact that the LOT embedding is capable of producing good separation results on small training data, there is yet another benefit connected to the dimensionality of the problem. To run LDA (or any linear classifier), a matrix of data points versus features needs to be constructed. If we were to compare the original images, the feature space would have dimension equal to the number of grid points. In the LOT embedding only the grid points in $\supp(\mfix)$ need to be considered, rather than the whole grid, which drastically reduces the dimension of the feature space. In the experiments we ran with MNIST, the grid is of size $28 \times 28$, which leads to dimension $28^2 = 784$, while the support of $\mfix$ is $\approx 70$ grid points, hence the dimension is $140$.

This dimension reduction allows us to run LDA on small training data as we did in these experiments. If the feature dimension is very high, one also needs a lot of training data to prevent zero within-class variance, or one has to first apply PCA as we did for \Cref{fig:MNISTseparbilityError}.

\section*{Acknowledgment}
AC is supported by NSF DMS grants 1819222 and 2012266, and by Russell Sage Foundation Grant 2196. CM is supported by an AMS-Simons Travel Grant. 

\section{Appendix}
\subsection{Regularity of the LOT embedding}
\label{sec:regularity}

The main results of this paper are based on H\"older regularity-type properties of the LOT embedding, which we discuss in more detail in this section.

One of the main ingredients is a version of a theorem on the regularity of the optimal transport map proved by L.\ A.\ Caffarelli \cite{caffarelli92,caffarelli92b,caffarelli96}. The formulation of the theorem is taken from \cite{gigli-2011}:
\begin{theorem}[Caffarelli's regularity theorem]
\label{thm:caffarelli-regularity}
Let $\mfix,\mfixAdd \in \BB(\RR^n)$ with $\mfix,\mfixAdd \ll \lambda$. Assume that $\supp(\mfix),\supp(\mfixAdd)$ are $C^2$ and uniformly convex. Further assume that for some $\alpha \in (0,1)$, the densities $f_{\mfix},f_{\mfixAdd}$ are $C^{0,\alpha}$ continuous on their supports and assume that they are bounded from above and below, i.e.\ there exist constants $c,C,\overline{c},\overline{C}>0$ such that
\begin{align*}
    & 0< c \leq \|f_{\mfix}\|_{\infty} \leq C, \\
    & 0< \overline{c} \leq \|f_{\mfixAdd}\|_{\infty} \leq \overline{C}.
\end{align*}
Then $T^{\mfix}_{\mfixAdd}$ is the gradient of a $C^{2,\alpha}$ function on $\supp(\mfixAdd)$.
%
\end{theorem}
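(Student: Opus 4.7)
The plan is to reduce the statement to the Monge--Amp\`ere equation and then follow Caffarelli's regularity program. By Brenier's theorem (\Cref{thm:brenier}) the optimal transport $T_{\mfixAdd}^{\mfix}$ is the gradient of a convex potential $\varphi\colon \supp(\mfixAdd)\to\RR$ satisfying $(\nabla\varphi)_{\sharp}\mfixAdd=\mfix$. Writing the pushforward relation \eqref{eq:bijective_transport} on the level of densities (or, in the non-smooth case, distributionally via the change-of-variables formula), $\varphi$ is an Alexandrov/Brenier solution of the Monge--Amp\`ere equation
\begin{equation*}
\det\bigl(D^{2}\varphi(x)\bigr)=\frac{f_{\mfixAdd}(x)}{f_{\mfix}(\nabla\varphi(x))},\qquad x\in\supp(\mfixAdd),
\end{equation*}
with the second boundary condition $\nabla\varphi(\supp(\mfixAdd))=\supp(\mfix)$. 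The hypotheses on $c,C,\overline{c},\overline{C}$ pinch the right-hand side between two positive constants, so the equation is non-degenerate.

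The first substantive step is to upgrade $\varphi$ from a convex function to a \emph{strictly} convex function, which is equivalent to saying that its subdifferential $\partial\varphi$ is single-valued on the interior. Here I would invoke Caffarelli's structural result that if the target $\supp(\mfix)$ is convex and the densities are bounded above and below, then any extremal face of $\varphi$ (a segment on which $\varphi$ is affine) must be transported onto an extremal face of $\supp(\mfix)$; uniform convexity of $\supp(\mfix)$ then forces these extremal faces to be points. The argument proceeds by a contradiction using the monotonicity of $\partial\varphi$ and comparison with a wedge-like barrier whose Monge--Amp\`ere mass is incompatible with the density bounds. Together with continuity of $\nabla\varphi$, this yields $\varphi\in C^{1}_{\mathrm{loc}}(\operatorname{int}\supp(\mfixAdd))$.

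Once strict convexity is in hand, I would apply Caffarelli's interior $C^{1,\alpha}$ estimate: for a strictly convex Alexandrov solution with $c'\le\det D^{2}\varphi\le C'$ on sections, the normalized sections are comparable to ellipsoids of bounded eccentricity, and iterating this yields $\varphi\in C^{1,\beta}_{\mathrm{loc}}$ for some $\beta\in(0,1)$. Combined with the assumed $C^{0,\alpha}$ regularity and positivity of $f_{\mfix},f_{\mfixAdd}$, the right-hand side of the Monge--Amp\`ere equation becomes $C^{0,\alpha}$, at which point a standard Evans--Krylov/Schauder bootstrap gives $\varphi\in C^{2,\alpha}_{\mathrm{loc}}$. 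The boundary regularity is handled separately: because $\supp(\mfixAdd),\supp(\mfix)$ are $C^{2}$ and uniformly convex, Caffarelli's boundary $C^{2,\alpha}$ estimate for the second boundary value problem (constructing sub/supersolutions from the oblique derivative boundary condition $\nabla\varphi(\partial\supp(\mfixAdd))=\partial\supp(\mfix)$) extends the interior estimate up to $\partial\supp(\mfixAdd)$.

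The main obstacle is the strict convexity/$C^{1,\alpha}$ step: Caffarelli's proof is genuinely nontrivial and relies on a delicate geometric analysis of sections of convex functions together with the affine invariance of Monge--Amp\`ere. Everything downstream (higher regularity, boundary behavior) is a fairly mechanical application of Schauder theory, but without the strict convexity one cannot even linearize the equation. For this reason I would cite Caffarelli's original papers \cite{caffarelli92,caffarelli92b,caffarelli96} for the interior and boundary estimates rather than reproducing them, treating them as the core input from which the theorem follows.
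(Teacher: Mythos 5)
This statement is not proved in the paper at all: it is Caffarelli's regularity theorem, quoted verbatim from \cite{gigli-2011} and attributed to \cite{caffarelli92,caffarelli92b,caffarelli96}, and the paper uses it purely as a black-box hypothesis. Your outline is a faithful roadmap of Caffarelli's actual argument (Monge--Amp\`ere reduction, strict convexity via convexity of the target, interior $C^{1,\alpha}$, Schauder bootstrap, boundary estimates) and, like the paper, ultimately defers the substantive estimates to the original references, so there is no real divergence to report.
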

%
%
\begin{definition}
\label{def:strongly_convex}
%
We introduce the concept of $k$-strong convexity:
\begin{enumerate}
    \item Let $f:X \to \RR$ with $X\subseteq \RR^n$ convex. 
    $f$ is called $k$-strongly convex if $g_k(x) = f(x)-\frac{1}{2}k\|x\|^2$ is convex.
    \item For two measures $\mfix,\mfixAdd \in\BB(\RR^n)$ with $\supp(\mfix)$ convex, denoted by $K_{\mfix}^{\mfixAdd}$ the supremum over all $k$ such that $\varphi$ with $\nabla \varphi = T_{\mfix}^{\mfixAdd}$, is $k$-strongly convex on $\supp(\mfix)$.
\end{enumerate}
%
\end{definition}
%
In \cite[Corollary 3.2]{gigli-2011} it is proved that if $\mfix,\mfixAdd$ satisfy the assumptions of Caffarelli's regularity theorem (\Cref{thm:caffarelli-regularity}), then $K_{\mfix}^{\mfixAdd}>0$. We further cite the following result from \cite{gigli-2011}:
%
\begin{theorem}[{\cite[Proposition 3.3]{gigli-2011}}]
\label{thm:transport_vs_measurepreserving_map}
Let $\mfix,\mfixAdd \in \BB(\RR^n)$ and assume they satisfy the same assumptions as in Caffarelli's regularity theorem (\Cref{thm:caffarelli-regularity}).
Then for every $S$ that pushes $\mfix$ to $\mfixAdd$ we have
\begin{equation*}
 \|S-T_{\mfix}^{\mfixAdd}\|_{\mfix}^2 \leq \frac{1}{K_{\mfixAdd}^{\mfix}}
 \left( \|S-\id\|_{\mfix}^2 - W_2(\mfix,\mfixAdd)^2\right).   
\end{equation*}
%
\end{theorem}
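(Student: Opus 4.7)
The plan is to use the $K_{\mfixAdd}^{\mfix}$-strong convexity of the Kantorovich potential associated to the reverse transport $T_{\mfixAdd}^{\mfix}$, whose existence follows from \Cref{thm:caffarelli-regularity} together with \cite[Corollary 3.2]{gigli-2011}. Let $\psi$ be such a $K$-strongly convex potential on $\supp(\mfixAdd)$ with $\nabla\psi = T_{\mfixAdd}^{\mfix}$ and $K = K_{\mfixAdd}^{\mfix} > 0$. Write $T = T_{\mfix}^{\mfixAdd}$ for brevity. Under the Caffarelli hypotheses the two transports are inverse to one another, so $\nabla \psi(T(x)) = T_{\mfixAdd}^{\mfix}(T(x)) = x$ for $\mfix$-a.e.\ $x$.

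The first step is to apply the definition of $K$-strong convexity, namely $\psi(y) \ge \psi(z) + \langle \nabla\psi(z), y-z\rangle + \tfrac{K}{2}\|y-z\|^2$, at the pair $y = S(x)$, $z = T(x)$, both of which lie in $\supp(\mfixAdd)$ for $\mfix$-a.e.\ $x$ since $S_{\sharp}\mfix = T_{\sharp}\mfix = \mfixAdd$. Substituting $\nabla\psi(T(x)) = x$ gives the pointwise inequality
\begin{equation*}
\psi(S(x)) \ge \psi(T(x)) + \langle x,\, S(x) - T(x)\rangle + \tfrac{K}{2}\|S(x) - T(x)\|^2.
\end{equation*}
The second step is to integrate against $\mfix$. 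Because $S$ and $T$ both push $\mfix$ to $\mfixAdd$, the change of variables formula gives $\int \psi(S(x))\,d\mfix(x) = \int\psi(y)\,d\mfixAdd(y) = \int\psi(T(x))\,d\mfix(x)$, so the $\psi$-terms cancel and one obtains
\begin{equation*}
\tfrac{K}{2}\|S - T\|_{\mfix}^2 \;\le\; \int \langle x,\, T(x) - S(x)\rangle\, d\mfix(x).
\end{equation*}

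The third step is purely algebraic. Expanding the squared norms and using once more that $S_{\sharp}\mfix = T_{\sharp}\mfix = \mfixAdd$ (so that $\int\|S(x)\|^2 d\mfix = \int\|T(x)\|^2 d\mfix = \int\|y\|^2 d\mfixAdd$), the cross terms yield
\begin{equation*}
\|S - \id\|_{\mfix}^2 - W_2(\mfix,\mfixAdd)^2 \;=\; \|S - \id\|_{\mfix}^2 - \|T - \id\|_{\mfix}^2 \;=\; 2\int \langle x,\, T(x) - S(x)\rangle\, d\mfix(x).
\end{equation*}
Combining this identity with the strong-convexity inequality above and dividing through by $K/2$ produces the desired bound.

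The only subtle point — and the one I expect to require the most care — is the inverse relationship $\nabla\psi \circ T = \id$ $\mfix$-a.e., which is what turns $\langle \nabla\psi(T(x)), S(x)-T(x)\rangle$ into $\langle x, S(x)-T(x)\rangle$. Under \Cref{thm:caffarelli-regularity} the potential $\varphi$ with $\nabla\varphi = T$ is $C^{2,\alpha}$ and strictly convex, so its gradient is a bijection between the supports and $\psi$ may be taken as the Legendre transform of $\varphi$; the identity then follows from standard convex-duality calculus. Everything else in the argument is a direct consequence of the pushforward property and the expansion of squared norms.
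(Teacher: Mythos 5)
Your argument is correct. Note that the paper does not actually prove this statement: it is imported verbatim from \cite[Proposition 3.3]{gigli-2011}, with only a remark that inspecting Gigli's proof lets one replace the constant $2/K_{\mfixAdd}^{\mfix}$ of the original by $1/K_{\mfixAdd}^{\mfix}$. Your proof is a clean, self-contained derivation of exactly the sharpened form: the first-order characterization of $K$-strong convexity of $\psi$ (with $\nabla\psi=T_{\mfixAdd}^{\mfix}$, valid on the convex set $\supp(\mfixAdd)$ per \Cref{def:strongly_convex} and the Caffarelli hypotheses) applied at $y=S(x)$, $z=T_{\mfix}^{\mfixAdd}(x)$, the cancellation $\int \psi\circ S\,d\mfix=\int\psi\circ T_{\mfix}^{\mfixAdd}\,d\mfix$ from the common pushforward, and the identity $\|S-\id\|_{\mfix}^2-W_2(\mfix,\mfixAdd)^2=2\int\langle x,\,T_{\mfix}^{\mfixAdd}(x)-S(x)\rangle\,d\mfix(x)$ together give $\tfrac{K}{2}\|S-T_{\mfix}^{\mfixAdd}\|_{\mfix}^2\le\tfrac12\bigl(\|S-\id\|_{\mfix}^2-W_2(\mfix,\mfixAdd)^2\bigr)$ with no loss, which is the constant $1$ the authors claim. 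The two points you flag as delicate are indeed the only ones needing care, and both are fine here: $T_{\mfixAdd}^{\mfix}\circ T_{\mfix}^{\mfixAdd}=\id$ holds $\mfix$-a.e.\ for absolutely continuous measures (and is immediate under Caffarelli regularity via Legendre duality of the potentials), and integrability of $\psi$ against $\mfixAdd$ is automatic since $\psi$ is continuous on the compact support. One cosmetic remark: $K_{\mfixAdd}^{\mfix}$ is defined as a supremum, so strictly one runs the argument for each $k<K_{\mfixAdd}^{\mfix}$ and passes to the limit (or notes the supremum is attained since pointwise limits of convex functions are convex); this does not affect the result.
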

%
Note that in the formulation of this theorem in \cite{gigli-2011}, $2/K_{\mfixAdd}^{\mfix}$ appears instead of $1/K_{\mfixAdd}^{\mfix}$ in the bound. From the proof presented in \cite{gigli-2011} it can be seen, however, that $2$ can be replaced by $1$.

We now prove a bound on the LOT embedding. The proof is based on \Cref{thm:transport_vs_measurepreserving_map} and \cite[Corollary 3.4]{gigli-2011}.
%
\begin{theorem}
\label{thm:CDT_bound}
Let $\mfix,\mvariable_1,\mvariable_2 \in \BB(\RR^n)$, $\mfix,\mvariable_1,\mvariable_2 \ll \lambda$. Suppose that $\mfix$ and $\mvariable_2$ satisfy the assumptions of Caffarelli's regularity theorem (\Cref{thm:caffarelli-regularity}). Then 
\begin{equation*}
\|F_{\mfix}(\mvariable_1)-F_{\mfix}(\mvariable_2)\|_{\mfix} \leq \left(\frac{2}{{K_{\mvariable_2}^{\mfix}}^{1/2}}+1\right)
W_2(\mvariable_1,\mvariable_2)+
2\left(\frac{W_2(\mfix,\mvariable_2)}{K_{\mvariable_2}^{\mfix}}\right)^{1/2}
W_2(\mvariable_1,\mvariable_2)^{1/2}.
\end{equation*}
%
\end{theorem}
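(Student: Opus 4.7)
The plan is to use Theorem \ref{thm:transport_vs_measurepreserving_map} with a carefully chosen auxiliary map $S$ that pushes $\mfix$ to $\mvariable_2$ and lies close to $T_{\mfix}^{\mvariable_1}$. The natural candidate is the composition
\begin{equation*}
S := T_{\mvariable_1}^{\mvariable_2} \circ T_{\mfix}^{\mvariable_1},
\end{equation*}
which pushes $\mfix$ to $\mvariable_1$ and then $\mvariable_1$ to $\mvariable_2$, hence $S_\sharp \mfix = \mvariable_2$. With this choice, the triangle inequality in $\FF$ gives
\begin{equation*}
\|F_{\mfix}(\mvariable_1)-F_{\mfix}(\mvariable_2)\|_{\mfix}
\;\leq\; \|T_{\mfix}^{\mvariable_1} - S\|_{\mfix} + \|S - T_{\mfix}^{\mvariable_2}\|_{\mfix}.
\end{equation*}

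For the first term, I would apply the change of variables $y = T_{\mfix}^{\mvariable_1}(x)$, using that $T_{\mfix}^{\mvariable_1}$ pushes $\mfix$ to $\mvariable_1$, to obtain
\begin{equation*}
\|T_{\mfix}^{\mvariable_1} - T_{\mvariable_1}^{\mvariable_2}\circ T_{\mfix}^{\mvariable_1}\|_{\mfix}^2
= \int \|y - T_{\mvariable_1}^{\mvariable_2}(y)\|^2\, d\mvariable_1(y) = W_2(\mvariable_1,\mvariable_2)^2.
\end{equation*}
For the second term, since $\mfix$ and $\mvariable_2$ satisfy the hypotheses of Caffarelli's regularity theorem and $S_\sharp \mfix = \mvariable_2$, Theorem \ref{thm:transport_vs_measurepreserving_map} yields
\begin{equation*}
\|S - T_{\mfix}^{\mvariable_2}\|_{\mfix}^2 \;\leq\; \frac{1}{K_{\mvariable_2}^{\mfix}}\bigl( \|S - \id\|_{\mfix}^2 - W_2(\mfix,\mvariable_2)^2\bigr).
\end{equation*}

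The key remaining estimate is therefore a bound on $\|S-\id\|_{\mfix}$. By triangle inequality in $\FF$ and identifying $\|T_{\mfix}^{\mvariable_1}-\id\|_{\mfix} = W_2(\mfix,\mvariable_1)$, together with the usual Wasserstein triangle inequality $W_2(\mfix,\mvariable_1) \leq W_2(\mfix,\mvariable_2)+W_2(\mvariable_1,\mvariable_2)$, I would get
\begin{equation*}
\|S - \id\|_{\mfix} \leq \|S - T_{\mfix}^{\mvariable_1}\|_{\mfix} + \|T_{\mfix}^{\mvariable_1} - \id\|_{\mfix} \leq 2W_2(\mvariable_1,\mvariable_2) + W_2(\mfix,\mvariable_2).
\end{equation*}
Writing $a = W_2(\mvariable_1,\mvariable_2)$ and $b = W_2(\mfix,\mvariable_2)$, squaring and subtracting $b^2$ gives $\|S-\id\|_{\mfix}^2 - b^2 \leq 4a^2 + 4ab$, whence $\|S - T_{\mfix}^{\mvariable_2}\|_{\mfix} \leq \frac{2}{\sqrt{K_{\mvariable_2}^{\mfix}}}\sqrt{a^2+ab}$, and finally the elementary inequality $\sqrt{a^2+ab} \leq a + \sqrt{ab}$ will produce exactly the two terms stated in the theorem.

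I don't expect a serious obstacle here: the main subtlety is the choice of the composition $S = T_{\mvariable_1}^{\mvariable_2}\circ T_{\mfix}^{\mvariable_1}$ (so that the regularity assumptions are placed on $\mvariable_2$, matching the $K_{\mvariable_2}^{\mfix}$ appearing in the statement) and the algebraic trick $\sqrt{a^2+ab}\leq a + \sqrt{ab}$ that separates the pure $W_2(\mvariable_1,\mvariable_2)$ contribution from the cross-term $W_2(\mvariable_1,\mvariable_2)^{1/2}W_2(\mfix,\mvariable_2)^{1/2}$. The mild technical point to verify is that $S$ is a well-defined element of $\FF$ (which follows since $T_{\mfix}^{\mvariable_1}\in\FF$ pushes $\mfix$ to $\mvariable_1$ and $T_{\mvariable_1}^{\mvariable_2}\in L^2(\mvariable_1)$).
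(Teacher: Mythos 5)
Your proposal is correct and follows essentially the same route as the paper's proof: the same auxiliary map $T_{\mvariable_1}^{\mvariable_2}\circ T_{\mfix}^{\mvariable_1}$ pushing $\mfix$ to $\mvariable_2$, the same application of \Cref{thm:transport_vs_measurepreserving_map}, the same bound $\|S-\id\|_{\mfix}\leq 2W_2(\mvariable_1,\mvariable_2)+W_2(\mfix,\mvariable_2)$, and the same square-root splitting $(a^2+ab)^{1/2}\leq a+(ab)^{1/2}$. The only differences are notational (the paper names $S=T_{\mvariable_1}^{\mvariable_2}$ and composes explicitly, and states the triangle inequality in reversed form).
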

\begin{proof}
Let $S=T_{\mvariable_1}^{\mvariable_2}$. We aim at finding a bound on $\|T_{\mfix}^{\mvariable_1}-T_{\mfix}^{\mvariable_2}\|_{\mfix}$.

The triangle inequality and change-of-variables formula imply
\begin{align*}
   \|S\circ T_{\mfix}^{\mvariable_1}-T_{\mfix}^{\mvariable_2}\|_{\mfix} & \geq 
   \|T_{\mfix}^{\mvariable_1}-T_{\mfix}^{\mvariable_2}\|_{\mfix} - \|S\circ T_{\mfix}^{\mvariable_1} - T_{\mfix}^{\mvariable_1}\|_{\mfix} \\
   &=
   \|T_{\mfix}^{\mvariable_1}-T_{\mfix}^{\mvariable_2}\|_{\mfix} - \|S- \id\|_{\mvariable_1}\\ & =\|T_{\mfix}^{\mvariable_1}-T_{\mfix}^{\mvariable_2}\|_{\mfix} - W_2(\mvariable_1,\mvariable_2).
\end{align*}
Thus we get
\begin{equation}
\label{eq:CDT_push_Hoelder}
\|T_{\mfix}^{\mvariable_1}-T_{\mfix}^{\mvariable_2}\|_{\mfix}
\leq 
\|S\circ T_{\mfix}^{\mvariable_1}-T_{\mfix}^{\mvariable_2}\|_{\mfix} +
W_2(\mvariable_1,\mvariable_2)
\end{equation}
\Cref{thm:transport_vs_measurepreserving_map} implies 
\begin{equation}
\label{eq:different_transport_approx}
  \|S\circ T_{\mfix}^{\mvariable_1} - T_{\mfix}^{\mvariable_2}\|_{\mfix}^2 \leq \frac{1}{K_{\mvariable_2}^{\mfix}}
  \left(
  \|S\circ T_{\mfix}^{\mvariable_1} - \id\|_{\mfix}^2
  -W_2(\mfix,\mvariable_2)^2
  \right).
\end{equation}
Again by the triangle inequality and the change-of-variables formula we have
\begin{align*}
 \|S\circ T_{\mfix}^{\mvariable_1}-\id\|_{\mfix} & \leq 
 \|S\circ T_{\mfix}^{\mvariable_1}-T_{\mfix}^{\mvariable_1}\|_{\mfix}
 + \|T_{\mfix}^{\mvariable_1} - \id\|_{\mfix}
 = W_2(\mvariable_1,\mvariable_2) + W_2(\mfix,\mvariable_1) \\
 & \leq 2\,W_2(\mvariable_1,\mvariable_2) + W_2(\mfix,\mvariable_2)
\end{align*}
Combining this with \eqref{eq:different_transport_approx} we obtain
\begin{align*}
 \|S\circ T_{\mfix}^{\mvariable_1} - T_{\mfix}^{\mvariable_2}\|_{\mfix}^2
 &\leq \frac{4}{K_{\mvariable_2}^{\mfix}}\left( W_2(\mvariable_1,\mvariable_2)^2 + W_2(\mvariable_1,\mvariable_2)W_2(\mfix,\mvariable_2)\right).
\end{align*}
Taking the square root and using the fact that $(a+b)^{1/2}\leq a^{1/2}+b^{1/2}$ we obtain
\begin{equation*}
\|S\circ T_{\mfix}^{\mvariable_1} - T_{\mfix}^{\mvariable_2}\|_{\mfix}
\leq \frac{2}{{K_{\mvariable_2}^{\mfix}}^{1/2}}\left(W_2(\mvariable_1,\mvariable_2) + (W_2(\mvariable_1,\mvariable_2)W_2(\mfix,\mvariable_2))^{1/2}\right).
\end{equation*}
Now \eqref{eq:CDT_push_Hoelder} implies the result.
\end{proof}
%
%

Note that the ``constants'' in \Cref{thm:CDT_bound} depend on $\mvariable_2$ (namely $K_{\mvariable_2}^{\mfix}$ and $W_2(\mfix,\mvariable_2)$). This can be avoided by considering $\mvariable_2 \in \E\star \mfixAdd$ for a fixed $\mfixAdd \in \BB(\RR^n)$, where $\E$ denotes the set of shifts and scalings. As a preparation for this result, we need the following lemma:
%
\begin{lemma}
\label{lem:equivalent_strong_convex}
Let $f:X \to \RR$ be differentiable with $X\subseteq \RR^n$ convex. Then we have the following:
\begin{enumerate}
    \item $f$ is $k$-strongly convex on $X$ if and only if $f\circ S_{a}$ is $k$-strongly convex on $S_{a}^{-1}(X)$.
    \item $f$ is $k$-strongly convex on $X$ if and only if $R_c^{-1}\circ f\circ R_{c}$ is $(kc)$-strongly convex on $R_{c}^{-1}(X)$.
\end{enumerate}
\end{lemma}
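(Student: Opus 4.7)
The plan is to unwind the definition of $k$-strong convexity directly. Set $g_k(x) := f(x) - \tfrac{1}{2}k\|x\|^2$, so by definition $f$ is $k$-strongly convex on $X$ iff $g_k$ is convex on $X$. The idea in both parts is to compute the corresponding ``subtracted'' function for the transformed map and identify it, up to an affine summand or a positive rescaling, with $g_k \circ S_a$ or $g_k \circ R_c$. Since composition with an affine bijection preserves convexity on the pre-image domain, the equivalences will drop out.

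For part 1, I would compute directly: the function whose convexity is at stake is
\begin{equation*}
  h_a(x) := (f \circ S_a)(x) - \tfrac{1}{2}k\|x\|^2 = f(x+a) - \tfrac{1}{2}k\|x\|^2.
\end{equation*}
Expanding $\|x\|^2 = \|x+a\|^2 - 2\langle a, x+a\rangle + \|a\|^2$, one rewrites $h_a(x) = g_k(x+a) + k\langle a, x\rangle + \tfrac{1}{2}k\|a\|^2$. The last two terms are affine in $x$, hence both convex and concave, so $h_a$ is convex on $S_a^{-1}(X)$ if and only if $g_k \circ S_a$ is convex on $S_a^{-1}(X)$, which in turn is equivalent to $g_k$ being convex on $X$ because $S_a$ is an affine bijection from $S_a^{-1}(X)$ to $X$.

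For part 2, by the same template, the relevant function is
\begin{equation*}
  \tilde h_c(x) := \bigl(R_c^{-1}\circ f \circ R_c\bigr)(x) - \tfrac{1}{2}(kc)\|x\|^2 = \tfrac{1}{c} f(cx) - \tfrac{1}{2}kc\|x\|^2.
\end{equation*}
Here I would observe the clean identity $\tilde h_c(x) = \tfrac{1}{c} g_k(cx)$, obtained by pulling $\tfrac{1}{c}$ out of $\tfrac{1}{c}f(cx) - \tfrac{1}{c}\cdot\tfrac{1}{2}k\|cx\|^2$. Since $c>0$, convexity of $\tilde h_c$ on $R_c^{-1}(X)$ is equivalent to convexity of $g_k \circ R_c$ on $R_c^{-1}(X)$, which is equivalent to convexity of $g_k$ on $X$ because $R_c$ is a linear bijection $R_c^{-1}(X) \to X$.

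The proof really is just algebraic bookkeeping; no analytic obstacle arises because the hypothesis that $X$ is convex ensures $S_a^{-1}(X)$ and $R_c^{-1}(X)$ are convex, so ``convexity on the pre-image domain'' is well-posed. The only minor point to be careful about is the rescaling factor in part 2: the extra $c$ in $(kc)$-strong convexity is exactly what is needed to absorb the $\|cx\|^2 = c^2\|x\|^2$ pulled out by the change of variables, together with the outer $\tfrac{1}{c}$ coming from the $R_c^{-1}$ composition. I do not foresee any real difficulty, so the write-up will be two short paragraphs, one per item, each doing the algebraic identification and citing affine-bijection invariance of convexity.
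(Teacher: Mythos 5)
Your proposal is correct, and it takes a genuinely different route from the paper. The paper's proof first invokes the first-order characterization of $k$-strong convexity for differentiable $f$, namely $(\nabla f(x)-\nabla f(y))^T(x-y)\geq k\|x-y\|^2$ for $x,y\in X$, and then performs the change of variables $x=S_a(\overline{x})$, $y=S_a(\overline{y})$ (respectively $x=R_c(\overline{x})$, $y=R_c(\overline{y})$) to match the two inequalities; the factor $c$ in part 2 arises from cancelling $c^{-1}$ against $c^{-2}\|R_c(\overline{x})-R_c(\overline{y})\|^2$. You instead work directly with the definition via $g_k(x)=f(x)-\tfrac12 k\|x\|^2$, and your two algebraic identities check out: $h_a = g_k\circ S_a + (\text{affine})$ and $\tilde h_c = \tfrac1c\,(g_k\circ R_c)$, after which affine-bijection invariance of convexity (and positivity of $c$) finishes both directions. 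Your route buys two small things: it never uses the differentiability hypothesis, so it proves the lemma under weaker assumptions, and it makes the provenance of the factor $c$ in part 2 completely transparent in a single identity. The paper's route is equally short once the gradient characterization is granted and fits naturally with how the lemma is later applied to potentials $\varphi$ with $\nabla\varphi = T_{\mfix}^{\mfixAdd}$ in \Cref{cor:push_CDT_bound}, but the two arguments are logically interchangeable here.
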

%
\begin{proof}
We first note that $X$ is convex if and only if $h^{-1}(X)$ is convex for $h = S_a$ or $h = R_c$. Furthermore, $f$ is $k$-strongly convex if and only if
\begin{equation}\label{eq:strongly-convex-equivalent}
    (\nabla f(x) -\nabla f(y))^T(x-y) \geq k\|x-y\|^2, \quad x,y \in X.
\end{equation}
For $\overline{x},\overline{y} \in S_a^{-1}(X)$, \cref{eq:strongly-convex-equivalent} implies that $f\circ S_a$ is $k$-strongly convex if and only if
\begin{equation*}
    (\nabla f\circ S_a (\overline{x})- \nabla f\circ S_a (\overline{y}))^T(\overline{x}-\overline{y})
    \geq  k\|\overline{x}-\overline{y}\|^2
\end{equation*}
which is the same as
\begin{equation*}
    (\nabla f (S_a (\overline{x}))- \nabla f (S_a (\overline{y})))^T(S_a(\overline{x})-S_a(\overline{y}))
    \geq k\|S_a(\overline{x})-S_a(\overline{y})\|^2.
\end{equation*}
As this is only a transformation $x=S_a(\overline{x})$ and $y=S_a(\overline{y})$ compared to \cref{eq:strongly-convex-equivalent}, $k$-strong convexity of $f$ and $f\circ S_a$ are equivalent.

For $\overline{x},\overline{y} \in R_c^{-1}(X)$,  \cref{eq:strongly-convex-equivalent} implies that $R_c^{-1} \circ f\circ R_c$ is $(kc)$-strongly convex if and only if
\begin{equation*}
    (\nabla (R_c^{-1}\circ f\circ R_c) (\overline{x})- \nabla (R_c^{-1}\circ f\circ R_c) (\overline{y}))^T(\overline{x}-\overline{y})
    \geq  kc \,\|\overline{x}-\overline{y}\|^2
\end{equation*}
which is the same as
\begin{equation*}
   c^{-1} (\nabla f (R_c (\overline{x}))- \nabla f (R_c (\overline{y})))^T(R_c(\overline{x})-R_c(\overline{y}))
    \geq kc\,c^{-2}\|R_c(\overline{x})-R_c(\overline{y})\|^2
\end{equation*}
resulting in
\begin{equation*}
    (\nabla f (R_c (\overline{x}))- \nabla f (R_c (\overline{y})))^T(R_c(\overline{x})-R_c(\overline{y}))
    \geq k\|R_c(\overline{x})-R_c(\overline{y})\|^2
\end{equation*}
As this is only a transformation $x=R_c(\overline{x})$ and $y=R_c(\overline{y})$ compared to \cref{eq:strongly-convex-equivalent}, $k$-strong convexity of $f$ and $(kc)$-strong convexity of $R_c^{-1}\circ f\circ R_c$ are equivalent.
\end{proof}
%
%
\begin{corollary}
\label{cor:push_CDT_bound}
Let $\mfix,\mfixAdd \in \BB(\RR^n)$, $\mfix,\mfixAdd \ll \lambda$. Further assume that $\mfix$ and $\mfixAdd$ satisfy the assumptions of Caffarelli's regularity theorem (\Cref{thm:caffarelli-regularity}). Let $R>0$ and consider $h\in \E_{\mfixAdd,R}$ (bounded shifts/scalings, see \cref{eq:E}) as well as $g\in \FFm$. Then we have
\begin{align*}
\|F_{\mfix}(g_{\sharp}\mfixAdd)-F_{\mfix}(h_{\sharp}\mfixAdd )\|_{\mfix} \leq & \left(\sqrt{\frac{4R}{{K_{\mfixAdd}^{\mfix}}}} +1\right) 
W_2(g_{\sharp}\mfixAdd,h_{\sharp}\mfixAdd) \\
& +\left(4R\,\frac{W_2(\mfix,\mfixAdd)+R+\|\id\|_{\mfixAdd}}{K_{\mfixAdd}^{\mfix}}\right)^{1/2}\,
W_2(g_{\sharp}\mfixAdd,h_{\sharp}\mfixAdd)^{1/2}.
\end{align*}
\end{corollary}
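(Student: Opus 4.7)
The plan is to apply Theorem~\ref{thm:CDT_bound} with $\mvariable_1 = g_{\sharp}\mfixAdd$ and $\mvariable_2 = h_{\sharp}\mfixAdd$, and then trade the $h_{\sharp}\mfixAdd$-dependent quantities $K_{h_{\sharp}\mfixAdd}^{\mfix}$ and $W_2(\mfix,h_{\sharp}\mfixAdd)$ appearing on the right-hand side for expressions involving only $\mfix$, $\mfixAdd$, and $R$. Since $h \in \E_{\mfixAdd,R}$ is a $C^{\infty}$ diffeomorphism with constant positive Jacobian, the pushforward $h_{\sharp}\mfixAdd$ has density $f_{\mfixAdd}\circ h^{-1}$ times a constant, supported on $h(\supp(\mfixAdd))$, so it inherits from $\mfixAdd$ all hypotheses of Caffarelli's theorem. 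Theorem~\ref{thm:CDT_bound} therefore applies and delivers, with the shorthand $W := W_2(g_{\sharp}\mfixAdd, h_{\sharp}\mfixAdd)$,
\begin{equation*}
\|F_{\mfix}(g_{\sharp}\mfixAdd)-F_{\mfix}(h_{\sharp}\mfixAdd)\|_{\mfix} \leq \left(\frac{2}{(K_{h_{\sharp}\mfixAdd}^{\mfix})^{1/2}}+1\right) W + 2\left(\frac{W_2(\mfix,h_{\sharp}\mfixAdd)}{K_{h_{\sharp}\mfixAdd}^{\mfix}}\right)^{1/2} W^{1/2}.
\end{equation*}

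The $W_2(\mfix, h_{\sharp}\mfixAdd)$ factor is the easy ingredient: by the triangle inequality for Wasserstein distance followed by Corollary~\ref{lem:isometry_compatible} (the isometry on $\E \star \mfixAdd$),
\begin{equation*}
W_2(\mfix, h_{\sharp}\mfixAdd) \leq W_2(\mfix, \mfixAdd) + W_2(\mfixAdd, h_{\sharp}\mfixAdd) = W_2(\mfix,\mfixAdd) + \|h - \id\|_{\mfixAdd} \leq W_2(\mfix,\mfixAdd) + R + \|\id\|_{\mfixAdd},
\end{equation*}
which is precisely the quantity appearing inside the second square root of the claimed bound.

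The main obstacle is bounding $K_{h_{\sharp}\mfixAdd}^{\mfix}$ from below in terms of $K_{\mfixAdd}^{\mfix}$ and $R$. For this I would exploit the compatibility of LOT with shifts and scalings (Lemma~\ref{lem:translate_transport}): $T_{\mfix}^{h_{\sharp}\mfixAdd} = h \circ T_{\mfix}^{\mfixAdd}$. Inverting both sides, which is justified by Caffarelli regularity, yields $T_{h_{\sharp}\mfixAdd}^{\mfix} = T_{\mfixAdd}^{\mfix} \circ h^{-1}$. Writing $T_{\mfixAdd}^{\mfix} = \nabla \phi$ with $\phi$ being $K_{\mfixAdd}^{\mfix}$-strongly convex, a chain-rule computation identifies the convex potential of $T_{h_{\sharp}\mfixAdd}^{\mfix}$ as $\phi \circ S_{-a}$ when $h = S_a$, and as $c\, \phi \circ R_{1/c}$ when $h = R_c$. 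Applying Lemma~\ref{lem:equivalent_strong_convex} in each case shows that shifts preserve the strong-convexity constant, while scaling by $c$ transforms it to $K_{\mfixAdd}^{\mfix}/c$. Combined with the bound $c \leq R/\|\id\|_{\mfixAdd}$ (which follows from $\|R_c\|_{\mfixAdd} = c\|\id\|_{\mfixAdd} \leq R$), this gives a uniform lower bound of the form $K_{h_{\sharp}\mfixAdd}^{\mfix} \gtrsim K_{\mfixAdd}^{\mfix}/R$, with the proportionality constant absorbed into the statement's normalization.

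Substituting the bounds of the preceding two paragraphs into the inequality from the first paragraph then completes the proof by bookkeeping: the factor $2$ in front of $(K_{h_{\sharp}\mfixAdd}^{\mfix})^{-1/2}$ combines with the $R$ coming from the strong-convexity bound to produce the $\sqrt{4R/K_{\mfixAdd}^{\mfix}}$ in the statement, and the second term analogously collapses to $(4R(W_2(\mfix,\mfixAdd)+R+\|\id\|_{\mfixAdd})/K_{\mfixAdd}^{\mfix})^{1/2}\, W^{1/2}$, which is the claim.
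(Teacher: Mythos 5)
Your proposal is correct and follows essentially the same route as the paper's proof: apply Theorem \ref{thm:CDT_bound} to $\nu_1=g_{\sharp}\mu$, $\nu_2=h_{\sharp}\mu$, bound $W_2(\sigma,h_{\sharp}\mu)$ by the triangle inequality together with the isometry on $\E\star\mu$, and control $K_{h_{\sharp}\mu}^{\sigma}$ by identifying the potential of $T_{h_{\sharp}\mu}^{\sigma}=T_{\mu}^{\sigma}\circ h^{-1}$ as a shift/scaling conjugate of that of $T_{\mu}^{\sigma}$ and invoking Lemma \ref{lem:equivalent_strong_convex}. Your handling of the scaling case (getting $c\le R/\|\id\|_{\mu}$ rather than the paper's $|c|<R$) is in fact slightly more careful than the paper's own bookkeeping.
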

%
Note that we now have a bound with constants that do not depend on $h$ or $g$. They only depend on the fixed measures $\mfix,\mfixAdd$ and on the radius $R$. 
%

\begin{proof}
Let $\mvariable_1=g_{\sharp}\mfixAdd$ and $\mvariable_2=h_{\sharp}\mfixAdd$. First note that since $\mfixAdd$ and $\mfix$ satisfy the assumptions of Caffarelli's regularity theorem, also $\mvariable_2$ and $\mfix$ satisfy them. Therefore we can apply \Cref{thm:CDT_bound}.

We now bound $W_2(\mfix,\mvariable_2)$ and $K_{\mvariable_2}^{\mfix}$ from \Cref{thm:CDT_bound} by constants that only depend on $\mfix,\mfixAdd$ and $R$. Such bounds then imply the result.

The triangle inequality, \Cref{lem:translate_transport} and the assumption $\|h\|_{\mfixAdd}<R$ imply
\begin{align*}
    W_2(\mfix,\mvariable_2) 
    & \leq W_2(\mfix,\mfixAdd) + W_2(\mfixAdd,h_{\sharp}\mfixAdd) 
    = W_2(\mfix,\mfixAdd) + \|T_{\mfixAdd}^{h_{\sharp}\mfixAdd}-\id\|_{\mfixAdd}\\
    &= W_2(\mfix,\mfixAdd) + \|h-\id\|_{\mfixAdd} \\
    & < W_2(\mfix,\mfixAdd) +R+\|\id\|_{\mfixAdd}.
\end{align*}

We now show that $K_{\mvariable_2}^{\mfix}$ only depends on $\mfix,\mfixAdd$ and $R$, but does not depend on $h$.
First consider $h = S_a$.
Note that $T_{\mvariable_2}^{\mfix} = T_{\mfixAdd}^{\mfix}\circ S_a^{-1}$ and $T_{\mfixAdd}^{\mfix}=\nabla \psi$ implies $T_{\mfixAdd}^{\mfix}\circ S_a^{-1} = \nabla \psi\circ S_a^{-1}$. Also, $\psi\circ S_a^{-1}$ is convex on $S_a(\supp(\mfixAdd))$. This implies that $\varphi = \psi\circ S_a^{-1}$.

\Cref{lem:equivalent_strong_convex} implies that $\psi$ is $k$-strongly convex if and only if $\varphi =\psi\circ S_{a}^{-1}$ is $k$-strongly convex. Therefore the modulus of uniform convexity of $\psi \circ S_a^{-1}$ equals the modulus of uniform convexity of $\psi$. Thus $K_{\mvariable_2}^{\mfix}=K_{\mfixAdd}^{\mfix}$, which is independent of $S_{a}$.

Now consider $h=R_c$. Again we have $T_{\mvariable_2}^{\mfix} = T_{\mfixAdd}^{\mfix}\circ R_c^{-1}$ and $T_{\mfixAdd}^{\mfix}=\nabla \psi$ implies $T_{\mfixAdd}^{\mfix}\circ R_c^{-1} = \nabla R_c\circ \psi\circ R_c^{-1}$. Also, $R_c\circ \psi\circ R_c^{-1}$ is convex on $R_c(\supp(\mfixAdd))$. This implies that $\varphi = R_c\circ \psi\circ R_c^{-1}$.

\Cref{lem:equivalent_strong_convex} implies that $\psi$ is $k$-strongly convex if and only if $\varphi=R_c\circ \psi\circ R_{c}^{-1}$ is $kc^{-1}$-strongly convex. Therefore $K_{\mvariable_2}^{\mfix}=K_{\mfixAdd}^{\mfix}c^{-1}$. Since by assumption $|c| = \|R_{c}\|_{\mfixAdd}<R$ we have
\begin{equation*}
 \frac{1}{K_{\mvariable_2}^{\mfix}}
= \frac{1}{K_{\mfixAdd}^{\mfix}}|c|<\frac{R}{K_{\mfixAdd}^{\mfix}},
\end{equation*}
which gives a bound independent of $R_{c}$.
\end{proof}

We now combine \Cref{cor:push_CDT_bound} with the Lipschitz continuity of the pushforward map $g \mapsto g_{\sharp}\mfix$ to obtain a H\"older regularity-type result for LOT. We first cite the result on the Lipschitz continuity of the pushforward map, which can be found in e.g.\ \cite[Equation (2.1)]{ambrosio-2013}:
\begin{equation}\label{thm:pushforward_Lipschitz}
    W_2(g_{\sharp}\mfixAdd,h_{\sharp}\mfixAdd) \leq \|g-h\|_{\mfixAdd}.
\end{equation}
\begin{corollary}
\label{cor:push_CDT_bound_L2}
Let $\mfix,\mfixAdd \in \BB(\RR^n)$, $\mfix,\mfixAdd \ll \lambda$. Further assume that $\mfix$ and $\mfixAdd$ satisfy the assumptions of Caffarelli's regularity theorem (\Cref{thm:caffarelli-regularity}). Let $R>0$, $h\in \E_{\mfixAdd,R}$ (see \cref{eq:E}) and $g\in L^2(\RR^n,\mfixAdd)$. Then we have
\begin{align*}
\|F_{\mfix}(g_{\sharp}\mfixAdd)-F_{\mfix}(h_{\sharp}\mfixAdd )\|_{\mfix}
  \leq &
\left(\sqrt{\frac{4R}{{K_{\mfixAdd}^{\mfix}}}}+1\right)\,\|g-h\|_{\mfixAdd} \\
&+
\sqrt{4R\,\frac{W_2(\mfix,\mfixAdd)+R+\|\id\|_{\mfixAdd}}{K_{\mfixAdd}^{\mfix}}}\,
\|g-h\|^{1/2}_{\mfixAdd}.
\end{align*}
\end{corollary}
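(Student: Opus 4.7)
The plan is to obtain the $L^2$-bound as a direct consequence of two already established results: \Cref{cor:push_CDT_bound}, which controls the LOT distance $\|F_{\mfix}(g_{\sharp}\mfixAdd)-F_{\mfix}(h_{\sharp}\mfixAdd)\|_{\mfix}$ in terms of $W_2(g_{\sharp}\mfixAdd, h_{\sharp}\mfixAdd)$, together with the Lipschitz estimate for the pushforward operator stated in \eqref{thm:pushforward_Lipschitz}, which in turn controls $W_2(g_{\sharp}\mfixAdd, h_{\sharp}\mfixAdd)$ by $\|g-h\|_{\mfixAdd}$. Chaining these two inequalities converts the right-hand side from a Wasserstein dependence into the desired $L^2(\RR^n,\mfixAdd)$ dependence, and the constants go through unchanged.

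First I would verify that the hypotheses of \Cref{cor:push_CDT_bound} are met: the measures $\mfix$ and $\mfixAdd$ satisfy Caffarelli's regularity assumptions by hypothesis, $h \in \E_{\mfixAdd,R}$ is a bounded shift or scaling, and $g \in L^2(\RR^n,\mfixAdd) = \FFm$. Applying the corollary yields
\begin{equation*}
\|F_{\mfix}(g_{\sharp}\mfixAdd)-F_{\mfix}(h_{\sharp}\mfixAdd)\|_{\mfix}
\leq A\cdot W_2(g_{\sharp}\mfixAdd,h_{\sharp}\mfixAdd) + B\cdot W_2(g_{\sharp}\mfixAdd,h_{\sharp}\mfixAdd)^{1/2},
\end{equation*}
with $A = \sqrt{4R/K_{\mfixAdd}^{\mfix}}+1$ and $B = \sqrt{4R\,(W_2(\mfix,\mfixAdd)+R+\|\id\|_{\mfixAdd})/K_{\mfixAdd}^{\mfix}}$. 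Crucially, both $A$ and $B$ depend only on $\mfix,\mfixAdd$ and $R$, not on the particular $g$ or $h$.

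Next I would invoke \eqref{thm:pushforward_Lipschitz}, which gives $W_2(g_{\sharp}\mfixAdd,h_{\sharp}\mfixAdd)\leq \|g-h\|_{\mfixAdd}$, and substitute this into the previous display. Because $x\mapsto x^{1/2}$ is monotone on $[0,\infty)$, the square-root term also transforms correctly, producing $B\cdot \|g-h\|_{\mfixAdd}^{1/2}$. The resulting inequality is precisely the bound claimed in \Cref{cor:push_CDT_bound_L2}.

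There is essentially no technical obstacle: this is a routine chaining of two cited results. All the genuine analytic content sits in \Cref{cor:push_CDT_bound} (which rests on Caffarelli's regularity theorem and on \Cref{thm:transport_vs_measurepreserving_map} from \cite{gigli-2011}), while \eqref{thm:pushforward_Lipschitz} is a standard $L^2$-Lipschitz property of the pushforward. The only mild point to flag is monotonicity of the square root, which is what permits the Hölder-type term to pass from $W_2^{1/2}$ to $\|g-h\|_{\mfixAdd}^{1/2}$ without any further loss.
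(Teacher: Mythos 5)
Your proposal is correct and matches the paper's argument exactly: the paper obtains \Cref{cor:push_CDT_bound_L2} by applying \Cref{cor:push_CDT_bound} and then substituting the Lipschitz bound \eqref{thm:pushforward_Lipschitz}, $W_2(g_{\sharp}\mfixAdd,h_{\sharp}\mfixAdd)\leq\|g-h\|_{\mfixAdd}$, into both the linear and the square-root terms. Your observation that monotonicity of $x\mapsto x^{1/2}$ is the only point needing a remark is also the right one.
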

%
\begin{remark}
In \cite[Corollary 3.4]{gigli-2011} it is proved that for fixed $\mfix$ and a Lipschitz continuous curve $\mfixAdd_t$ of absolutely continuous measures, $t\in [0,1]$, $1/2$-H\"older regularity of $t \mapsto F_{\mfix}(\mfixAdd_t)$ can be achieved. Indeed, it is proved that
\begin{equation*}
    \|F_{\mfix}(\mfixAdd_t) - F_{\mfix}(\mfixAdd_0)\|_{\mfix}\leq C \sqrt{t}.
\end{equation*}
\Cref{cor:push_CDT_bound_L2} can be considered a generalization of this result. We prove that the map $h \mapsto F_{\mfix}(h_{\sharp}\mfixAdd)$ can achieve H\"older-type regularity between an element of $\E$ (comparable to $\mfixAdd_0$) and an element of $L^2(\RR^n,\mfixAdd)$ (comparable to $\mfixAdd_t$). Note that like $\mfixAdd_t$, the ``curve'' $h\mapsto h_{\sharp}\mfixAdd$ is Lipschitz continuous (\cref{thm:pushforward_Lipschitz}). The restriction to bounded shifts and scalings (via $R>0$) relates to the fact that $[0,1]$ is bounded.
\end{remark}
\subsection{Set-up for linear separability results}\label{appendix:lin_sep}
%
%
In this section we build up the theory needed for the results on linear separability presented in \Cref{sec:linear_sep}. The proofs for these results can then be derived easily from results of this section, see \Cref{sec:ProofsLinearSep}.

Throughout this section, let $\HH \subseteq \FF$.
Then $\HH$ acts on $\BB(\RR^n)$ by push-forward
\begin{equation*}
    h \star \mfixAdd = h_{\sharp}\mfixAdd, \qquad h \in \HH, \mfixAdd \in \BB(\RR^n).
\end{equation*}
This is a group action if $\HH$ is a subgroup of $\FF$.

Fix $\mu \in \BB(\RR^n)$. Using the notation from \Cref{lem:isometry_compatible}, we denote by
\begin{equation*}
\HH \star \mfixAdd =\left\{h\star \mfixAdd: h\in \HH \right\}
\end{equation*}
the orbit of $\mu$ with respect to the action of $\HH$.

Note that $\HH$ also acts on $\FF$ by composition, i.e. $h\star f = h \circ f$ for $f \in \FF$ and $h \in \HH$. We also denote this action by $\star$.

We now derive some properties of this action in connection with the LOT embedding $F_{\sigma}$.
%

\begin{definition}
\label{def:orbit_compatibility}
Let $\mfix,\mfixAdd \in \BB(\RR^n)$, $\mfix \ll \lambda$, and let $\HH \subseteq \FF$. We say that $F_{\mfix}$ is \emph{compatible with $\mu$-orbits with respect to the action of $\HH$} if
\begin{equation}\label{eq:sets_equal2}
    F_{\mfix}(h \star \mfixAdd) = h \star F_{\mfix}(\mfixAdd),
    \qquad  h\in \HH.
\end{equation}
\end{definition}
%
%
\begin{remark}
\label{rem:dropCompatibility}
Note that \Cref{eq:sets_equal2} is exactly \cref{eq:compatible}. We just introduced a new notation via $\star$.

As is shown in Lemma \ref{lem:translate_transport}, Condition \eqref{eq:sets_equal2} is satisfied by shifts and scalings in arbitrary dimension, and by all monotonically increasing functions in dimension $n=1$.  
%
\end{remark}
%
%
A version of the following lemma is also proved in \cite{aldroubi20}.
\begin{lemma}\label{lem:convex-image}
Let $\mfix, \mfixAdd \in \BB(\RR^n), \mfix \ll \lambda$, and let $\HH \subseteq \FF$ be convex. If $F_{\mfix}$ is compatible with $\mfixAdd$-orbits with respect to the action of $\HH$ (\Cref{def:orbit_compatibility}) then $F_{\mfix}(\HH \star \mfixAdd)$ is convex.
\end{lemma}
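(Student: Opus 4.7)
The plan is to use the compatibility hypothesis to rewrite the set $F_{\mfix}(\HH \star \mfixAdd)$ in a form where convexity is immediate from the convexity of $\HH$. Concretely, I would first observe that compatibility gives the identification
\begin{equation*}
F_{\mfix}(\HH \star \mfixAdd) \;=\; \{F_{\mfix}(h_{\sharp}\mfixAdd) : h \in \HH\} \;=\; \{h \circ F_{\mfix}(\mfixAdd) : h \in \HH\},
\end{equation*}
so the image set is nothing but the postcomposition of the fixed element $F_{\mfix}(\mfixAdd) \in \FF$ by the functions in $\HH$.

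Next I would pick two arbitrary points $u_1, u_2 \in F_{\mfix}(\HH \star \mfixAdd)$ and a scalar $t \in [0,1]$, writing $u_i = h_i \circ F_{\mfix}(\mfixAdd)$ for some $h_i \in \HH$. Linearity of the composition operation in the first argument gives
\begin{equation*}
t\, u_1 + (1-t)\, u_2 \;=\; \bigl(t h_1 + (1-t) h_2\bigr) \circ F_{\mfix}(\mfixAdd).
\end{equation*}
Since $\HH$ is convex, the function $\tilde h := t h_1 + (1-t) h_2$ lies in $\HH$, and then applying the compatibility condition in the reverse direction yields
\begin{equation*}
t\, u_1 + (1-t)\, u_2 \;=\; \tilde h \circ F_{\mfix}(\mfixAdd) \;=\; F_{\mfix}(\tilde h_{\sharp} \mfixAdd) \;\in\; F_{\mfix}(\HH \star \mfixAdd),
\end{equation*}
which establishes convexity.

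There is essentially no obstacle here: the whole argument is a single application of compatibility together with the fact that postcomposing by $F_{\mfix}(\mfixAdd)$ is a linear operation on $\HH$. The only mild subtlety is making sure that $\tilde h_{\sharp}\mfixAdd$ is a well-defined element of $\BB(\RR^n)$ so that $F_{\mfix}$ can be applied, but this follows because $\tilde h \in \HH \subseteq \FF$ and compatibility is assumed for all elements of $\HH$. Thus the lemma follows directly from \Cref{def:orbit_compatibility} and convexity of $\HH$, with no need to invoke Brenier's theorem or any regularity machinery.
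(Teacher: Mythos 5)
Your proof is correct and follows essentially the same route as the paper: both arguments use the compatibility identity to rewrite $F_{\mfix}(\HH \star \mfixAdd)$ as $\{h \circ F_{\mfix}(\mfixAdd) : h \in \HH\}$ and then exploit the linearity of postcomposition by the fixed function $F_{\mfix}(\mfixAdd)$ together with convexity of $\HH$. No substantive differences.
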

\begin{proof}
%
We prove that for $f\in \FF$, convexity of $\HH$ implies convexity of $\HH\star f$. This together with condition \eqref{eq:sets_equal2} then implies convexity of $F_{\mfix}(\HH\star\mfixAdd)$.

Let $c \in [0,1]$ and let $h_1,h_2 \in \HH$. Then
\begin{equation*}
 (1-c) (h_1 \circ f) + c\,( h_2 \circ f)      
  = ((1-c)h_1 + c\,h_2) \circ f \in \HH \star f. \qedhere
\end{equation*}
%
\end{proof}
\begin{theorem}\label{cor:linear_sep}
Let $\mfix,\mfixAdd,\mvariable \in \BB(\RR^n)$, $\mfix \ll \lambda$ and
let $\HH\subseteq\FF$ be convex.
Further assume that $F_{\mfix}$ is compatible with both $\mfixAdd$- and $\mvariable$-orbits with respect to the action of $\HH$ (\Cref{def:orbit_compatibility}).
If
$\HH \star \mfixAdd$
is
closed
and 
$\HH \star \mvariable$
is compact, and these two sets are disjoint, then $F_{\mfix}(\HH \star \mfixAdd)$ and $F_{\mfix}(\HH \star \mvariable)$ are linearly separable.
\end{theorem}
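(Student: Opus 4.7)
The plan is to apply a Hahn--Banach separation argument to $F_{\mfix}(\HH\star\mfixAdd)$ and $F_{\mfix}(\HH\star\mvariable)$ inside the Hilbert space $\FF$. For this I need three ingredients: convexity of each image set, disjointness of the two image sets, and closedness/compactness in the $L^2(\mfix)$ topology (rather than in the Wasserstein topology in which they were originally given). Convexity follows immediately from \Cref{lem:convex-image}: since $\HH$ is convex and $F_{\mfix}$ is compatible with both $\mfixAdd$- and $\mvariable$-orbits, the compatibility identity yields $F_{\mfix}(\HH\star\mfixAdd) = \HH\star F_{\mfix}(\mfixAdd)$ and $F_{\mfix}(\HH\star\mvariable) = \HH\star F_{\mfix}(\mvariable)$, each of which is convex in $\FF$. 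Disjointness of the images is inherited from disjointness of the pre-images through injectivity of $F_{\mfix}$ (\Cref{lem:injective}).

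The core task is the topological transfer from Wasserstein to $L^2(\mfix)$. My main tool will be the one-sided bound $W_2(\mvariable_1,\mvariable_2) \leq \|F_{\mfix}(\mvariable_1)-F_{\mfix}(\mvariable_2)\|_{\mfix}$, which holds because $(T_{\mfix}^{\mvariable_1}, T_{\mfix}^{\mvariable_2})_{\sharp}\mfix$ is a coupling of $\mvariable_1$ and $\mvariable_2$. To show that $F_{\mfix}(\HH\star\mfixAdd)$ is closed, I would start with $F_{\mfix}(\mfixAdd_n)\to f$ in $\FF$, $\mfixAdd_n \in \HH\star\mfixAdd$; the one-sided bound makes $(\mfixAdd_n)$ Cauchy in $W_2$, so $\mfixAdd_n \to \mfixAdd_\infty \in \HH\star\mfixAdd$ by closedness. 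Convergence in $L^2(\mfix)$ gives convergence in $\mfix$-measure and hence weak convergence of pushforwards, so $f_{\sharp}\mfix = \mfixAdd_\infty$; combined with the fact that each $F_{\mfix}(\mfixAdd_n)$ is the gradient of a convex potential, I can argue that $f$ is itself the gradient of a convex function, and uniqueness in Brenier's theorem (\Cref{thm:brenier}) then forces $f = T_{\mfix}^{\mfixAdd_\infty} = F_{\mfix}(\mfixAdd_\infty)$. For compactness of $F_{\mfix}(\HH\star\mvariable)$ I would invoke continuity of $F_{\mfix}$ from the Wasserstein topology into $L^2(\mfix)$, as supplied by the Hölder regularity of \cite{merigot20} recorded in \Cref{sec:regularity}; continuous images of compact sets are compact.

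With convexity, disjointness, and the closed/compact pair established in $\FF$, the Hahn--Banach separation theorem \cite{narici2010topological} produces a closed affine hyperplane strictly separating $F_{\mfix}(\HH\star\mfixAdd)$ from $F_{\mfix}(\HH\star\mvariable)$, which is exactly linear separability. The step I expect to be the most delicate is verifying that the $L^2$-limit of Brenier maps is again a Brenier map: this requires choosing the convex potentials $\varphi_n$ with $\nabla\varphi_n = F_{\mfix}(\mfixAdd_n)$ with a suitable normalization, extracting a pointwise limit, and checking that its subgradient agrees with $f$ almost everywhere. Everything else is bookkeeping that chains together results already available in the paper.
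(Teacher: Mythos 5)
Your proposal follows the same skeleton as the paper's proof --- convexity of the images via \Cref{lem:convex-image}, disjointness via injectivity from \Cref{lem:injective}, and then Hahn--Banach --- but it is genuinely more careful on the topological step, and that extra care matters. The paper's own proof simply asserts that, since $F_{\mfix}$ is continuous, the image of the closed set is closed and the image of the compact set is compact (it even swaps which set is which relative to the hypotheses). Continuity gives compactness of $F_{\mfix}(\HH\star\mvariable)$, but it does \emph{not} in general give closedness of $F_{\mfix}(\HH\star\mfixAdd)$; continuous images of closed sets need not be closed. Your direct argument fills exactly this gap: the one-sided bound $W_2(\mvariable_1,\mvariable_2)\le\|F_{\mfix}(\mvariable_1)-F_{\mfix}(\mvariable_2)\|_{\mfix}$ (which is \Cref{lem:W2-CDT-approx}, and says $F_{\mfix}^{-1}$ is $1$-Lipschitz on the image) turns an $L^2$-convergent sequence $F_{\mfix}(\mfixAdd_n)$ into a $W_2$-Cauchy sequence $\mfixAdd_n$, completeness of the Wasserstein space plus closedness of $\HH\star\mfixAdd$ gives a limit $\mfixAdd_\infty\in\HH\star\mfixAdd$, and the identification $f=F_{\mfix}(\mfixAdd_\infty)$ reduces to the (standard but nontrivial, as you note) fact that the set of gradients of convex potentials is closed in $\FF$, via cyclical monotonicity passing to a.e.\ limits together with Brenier uniqueness. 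Two minor corrections: for compactness of $F_{\mfix}(\HH\star\mvariable)$ you should cite the qualitative stability of transport maps used in the proof of \Cref{lem:injective} (\cite[Corollary 5.23]{villani-2009}) rather than the H\"older bound of \cite{merigot20}, since the latter is only available under additional assumptions on $\mfix$ (e.g.\ Lebesgue measure on a convex compact set) that \Cref{cor:linear_sep} does not impose; and note that once $f_{\sharp}\mfix=\mfixAdd_\infty$ and $f$ is known to be a gradient of a convex function, Brenier's theorem already forces $f=T_{\mfix}^{\mfixAdd_\infty}$, so no separate subgradient comparison is needed. With those adjustments your argument is complete and, on the closedness point, more rigorous than the one printed in the paper.
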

\begin{proof}
Since $F_{\sigma}$ is continuous, 
$F_{\sigma}\left( \HH \star \mfixAdd \right)$
is compact and 
$F_{\sigma}(\HH \star \mvariable)$ 
is closed. 
Since $F_{\sigma}$ is injective (\Cref{lem:injective}), they are also disjoint. \Cref{lem:convex-image} implies that both images are convex. Therefore, the Hahn-Banach Theorem implies separability.
\end{proof}
%
%
%

\Cref{def:orbit_compatibility} is a strong condition which is satisfied for shifts and scalings. In the following we show a linear separability result which relaxes this condition. Indeed, we show that \Cref{cor:linear_sep} is also true if we extend $\HH$ by functions which are $\varepsilon$-close to shifts and scalings in $\FF$. 
In analogy to \Cref{def:orbit_compatibility} we define compatibility of $F_{\mfix}$ with respect to $\mfixAdd$-orbits up to an error $\varepsilon$.
\begin{definition}
\label{def:orbit_compatibility_eps}
Let $\mfix,\mfixAdd \in \BB(\RR^n)$, $\mfix \ll \lambda$, let $\HH \subseteq \FF$, and let $\varepsilon>0$. We say that $F_{\mfix}$ is \emph{$\varepsilon$-compatible with $\mu$-orbits with respect to the action of $\HH$} if
\begin{equation*}
    \|F_{\mfix}(h \star \mfixAdd) - h \star F_{\mfix}(\mfixAdd)\|_{\mfix}<\varepsilon
    \qquad  h\in \HH.
\end{equation*}
\end{definition}
There is also an analog to \Cref{lem:convex-image}:
%
\begin{lemma}\label{lem:convex-image-eps}
Let $\mfix, \mfixAdd \in \BB(\RR^n), \mfix \ll \lambda$, let $\HH \subseteq \FF$ be convex, and let $\varepsilon>0$. If $F_{\mfix}$ is $\varepsilon$-compatible with $\mfixAdd$-orbits with respect to the action of $\HH$ (\Cref{def:orbit_compatibility_eps}) then $F_{\mfix}(\HH \star \mfixAdd)$ is $2\varepsilon$-convex (\Cref{def:eps_convex}).
\end{lemma}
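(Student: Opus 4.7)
The plan is to mirror the structure of the proof of \Cref{lem:convex-image}, where exact compatibility let us replace $F_{\mfix}(h\star\mfixAdd)$ by $h\star F_{\mfix}(\mfixAdd)$ and then use the obvious linearity of $h \mapsto h\circ f$. Here approximate compatibility forces us to pay a controlled error each time we make that substitution, and the task is to show the total error is bounded by $2\varepsilon$ in the sense of \Cref{def:eps_convex}.

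Concretely, I would pick two points in $F_{\mfix}(\HH\star\mfixAdd)$, say $F_{\mfix}(h_1\star\mfixAdd)$ and $F_{\mfix}(h_2\star\mfixAdd)$ for $h_1,h_2\in\HH$, and a convex weight $c\in[0,1]$. Set $h := (1-c)h_1+c\,h_2$, which lies in $\HH$ by convexity, and propose $F_{\mfix}(h\star\mfixAdd)\in F_{\mfix}(\HH\star\mfixAdd)$ as the element of the orbit that approximates the convex combination $(1-c)F_{\mfix}(h_1\star\mfixAdd)+c\,F_{\mfix}(h_2\star\mfixAdd)$.

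The estimate is then a straight triangle inequality in $\FF$. Insert the intermediate vector $h\star F_{\mfix}(\mfixAdd)=(1-c)\bigl(h_1\star F_{\mfix}(\mfixAdd)\bigr)+c\bigl(h_2\star F_{\mfix}(\mfixAdd)\bigr)$, where the equality is exactly the linearity step used in the proof of \Cref{lem:convex-image} (since the action is composition on the right by the fixed function $F_{\mfix}(\mfixAdd)$). The gap between $F_{\mfix}(h\star\mfixAdd)$ and this intermediate vector is $<\varepsilon$ by \Cref{def:orbit_compatibility_eps} applied to $h\in\HH$. The gap between the intermediate vector and the convex combination $(1-c)F_{\mfix}(h_1\star\mfixAdd)+c\,F_{\mfix}(h_2\star\mfixAdd)$ is bounded, using convexity of the norm, by $(1-c)\varepsilon+c\varepsilon=\varepsilon$, invoking \Cref{def:orbit_compatibility_eps} at $h_1$ and $h_2$. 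Summing gives the desired $2\varepsilon$ bound.

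The only real subtlety, and the place where I would take care, is verifying that the notion of $2\varepsilon$-convexity actually matches what \Cref{def:eps_convex} demands (i.e.\ that the approximating point must belong to $F_{\mfix}(\HH\star\mfixAdd)$ itself, which is guaranteed since $h\in\HH$). Apart from this bookkeeping, no further machinery beyond convexity of $\HH$, bilinearity of the composition action in its left argument, and the triangle/Minkowski inequality in $\FF$ is needed.
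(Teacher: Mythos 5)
Your proposal is correct and follows essentially the same route as the paper's proof: both take $h=(1-c)h_1+c\,h_2\in\HH$, insert the intermediate vector $h\star F_{\mfix}(\mfixAdd)=(1-c)\,h_1\star F_{\mfix}(\mfixAdd)+c\,h_2\star F_{\mfix}(\mfixAdd)$ via linearity of right-composition with $F_{\mfix}(\mfixAdd)$, and apply the triangle inequality together with three invocations of \Cref{def:orbit_compatibility_eps} to get $(1-c)\varepsilon+c\varepsilon+\varepsilon=2\varepsilon$. Your closing remark that the approximant $F_{\mfix}(h\star\mfixAdd)$ lies in $F_{\mfix}(\HH\star\mfixAdd)$ because $h\in\HH$ is exactly the point that makes the bound count as $2\varepsilon$-convexity in the sense of \Cref{def:eps_convex}.
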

%
\begin{proof}
Let $h_1,h_2\in \HH$ and $c\in [0,1]$. Define $h = (1-c)h_1 + ch_2\in \HH$. We aim at proving that
\begin{equation*}
    \|(1-c)F_{\mfix}(h_1\star \mfixAdd) + cF_{\mfix}(h_2\star \mfixAdd) - F_{\mfix}(h\star \mfixAdd)\|_{\mfix} < 2\varepsilon.
\end{equation*}
To this end, we apply \Cref{def:orbit_compatibility_eps}:
\begin{align*}
\|(1-c)&F_{\mfix}(h_1\star \mfixAdd) + cF_{\mfix}(h_2\star \mfixAdd)  - F_{\mfix}(h\star \mfixAdd)\|_{\mfix} 
\leq \\
(1-c)&\|F_{\mfix}(h_1\star \mfixAdd) -h_1\star F_{\mfix}(\mfixAdd)\|_{\mfix} 
+
c\|F_{\mfix}(h_2\star \mfixAdd) - h_2\star F_{\mfix}(\mfixAdd)\|_{\mfix} \\
& +
\|h\star F_{\mfix}(\mfixAdd)-F_{\mfix}(h\star \mfixAdd)\|_{\mfix} \\
& < (1-c)\varepsilon+c\varepsilon + \varepsilon = 2\varepsilon. \qedhere
\end{align*}
\end{proof}
%

This lemma allows us to establish the most general form of the linear separability theorem, which simply requires the additional assumption that the two families generated by action $\HH$, $\HH\star\mfixAdd$ and $\HH \star \mvariable$, have a minimal distance greater than $6\varepsilon$.
\begin{theorem}\label{cor:linear_sep_eps}
Let $\mfix,\mfixAdd,\mvariable \in \BB(\RR^n)$, $\mfix \ll \lambda$,
let $\HH\subseteq\FF$ be convex, and let $\varepsilon>0$.
Further assume that $F_{\mfix}$ is $\varepsilon$-compatible with both $\mfixAdd$- and $\mvariable$-orbits with respect to the action of $\HH$ (\Cref{def:orbit_compatibility_eps}).
If
$\HH \star \mfixAdd$
and 
$\HH \star \mvariable$
are compact, and $W_2(h_1\star\mfixAdd,h_2\star \mvariable)>6\varepsilon$ for all $h_1,h_2\in \HH$, then $F_{\mfix}(\HH \star \mfixAdd)$ and $F_{\mfix}(\HH \star \mvariable)$ are linearly separable.
%
\end{theorem}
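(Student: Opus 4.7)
The plan is to deduce that the closed convex hulls of $F_{\mfix}(\HH \star \mfixAdd)$ and $F_{\mfix}(\HH \star \mvariable)$ are disjoint at strictly positive distance, and then invoke the Hahn--Banach separation theorem. Three ingredients drive this: Lemma \ref{lem:convex-image-eps} yields approximate convexity of the LOT images, the hypothesis $W_2(h_1\star\mfixAdd, h_2\star\mvariable) > 6\varepsilon$ provides a Wasserstein lower bound between the two orbits, and the elementary inequality $W_2(\mu_1,\mu_2) \leq \|F_{\mfix}(\mu_1) - F_{\mfix}(\mu_2)\|_{\mfix}$ transports this bound into the embedding space.

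First I would extend the $2\varepsilon$-approximate-convexity estimate underlying Lemma \ref{lem:convex-image-eps} from two-term to arbitrary finite convex combinations. For $h_1,\ldots,h_k \in \HH$ and weights $\lambda_i \geq 0$ with $\sum_i \lambda_i = 1$, convexity of $\HH$ puts $h := \sum_i \lambda_i h_i$ back into $\HH$, and linearity of the composition action $h \star f = h \circ f$ gives $\sum_i \lambda_i\, h_i \star F_{\mfix}(\mfixAdd) = h \star F_{\mfix}(\mfixAdd)$. Applying $\varepsilon$-compatibility once at $h$ and once at each $h_i$, together with one triangle inequality, produces
\[
\Big\|\sum_i \lambda_i F_{\mfix}(h_i\star\mfixAdd) - F_{\mfix}(h\star\mfixAdd)\Big\|_{\mfix} < \sum_i \lambda_i \varepsilon + \varepsilon = 2\varepsilon,
\]
so $\mathrm{conv}\big(F_{\mfix}(\HH \star \mfixAdd)\big)$ lies inside the $2\varepsilon$-neighborhood of $F_{\mfix}(\HH \star \mfixAdd)$, and the same holds verbatim for $\mvariable$.

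Next I would establish the lower bound $\|F_{\mfix}(\mu_1) - F_{\mfix}(\mu_2)\|_{\mfix} \geq W_2(\mu_1,\mu_2)$ by observing that $\gamma := (T_{\mfix}^{\mu_1}, T_{\mfix}^{\mu_2})_{\sharp}\mfix$ is a coupling of $\mu_1$ and $\mu_2$ whose transport cost equals $\|F_{\mfix}(\mu_1) - F_{\mfix}(\mu_2)\|_{\mfix}^2$, so the Kantorovich infimum is at most this quantity. Combined with the standing hypothesis, this yields $\|F_{\mfix}(h_1\star\mfixAdd) - F_{\mfix}(h_2\star\mvariable)\|_{\mfix} > 6\varepsilon$ for every $h_1,h_2 \in \HH$.

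Putting it together, for any $p \in \mathrm{conv}(F_{\mfix}(\HH \star \mfixAdd))$ and $q \in \mathrm{conv}(F_{\mfix}(\HH \star \mvariable))$, I would pick approximators $a \in F_{\mfix}(\HH \star \mfixAdd)$ and $b \in F_{\mfix}(\HH \star \mvariable)$ with $\|p-a\|_{\mfix}, \|q-b\|_{\mfix} < 2\varepsilon$; the triangle inequality then gives $\|p-q\|_{\mfix} > 6\varepsilon - 4\varepsilon = 2\varepsilon$. Hence the convex hulls, and therefore their closures, are separated by at least $2\varepsilon$. By compactness of $\HH\star\mfixAdd$ and $\HH\star\mvariable$ together with continuity of $F_{\mfix}$, the images are compact in $\FF$; Mazur's theorem makes their closed convex hulls compact as well, whereupon Hahn--Banach furnishes a separating hyperplane that in particular linearly separates $F_{\mfix}(\HH \star \mfixAdd)$ from $F_{\mfix}(\HH \star \mvariable)$. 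The main technical point is the first step: a naive iteration of the two-point bound from Lemma \ref{lem:convex-image-eps} would accumulate error in $k$, and only the joint use of linearity of the action together with convexity of $\HH$ keeps the approximation constant pinned at a uniform $2\varepsilon$, which is exactly the slack the $6\varepsilon$ separation hypothesis is calibrated to absorb.
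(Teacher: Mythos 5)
Your proof is correct and follows the same overall skeleton as the paper's: lower-bound the distance between the two LOT images by $6\varepsilon$ via $W_2(\mu_1,\mu_2)\le\|F_{\mfix}(\mu_1)-F_{\mfix}(\mu_2)\|_{\mfix}$, show each image is within $2\varepsilon$ of its convex hull, conclude the closed convex hulls are at positive distance, and apply Hahn--Banach to these compact convex sets. The one place where you genuinely diverge is the step you flag as the main technical point, and it is worth highlighting: the paper's route goes through \Cref{def:eps_convex}, a \emph{two-point} approximate-convexity condition, and then asserts that this yields $d(\conv(A),A)<2\varepsilon$ before invoking \Cref{lem:distance_convex_hull}. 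As you correctly observe, the two-point condition alone does not control arbitrary finite convex combinations without accumulating error, so the paper's passage from \Cref{lem:convex-image-eps} to $d(\conv(A),A)<2\varepsilon$ is, strictly read, a gap. Your argument closes it: since $h\star f=h\circ f$ is affine in $h$ and $\HH$ is convex, you apply $\varepsilon$-compatibility once at each $h_i$ and once at $h=\sum_i\lambda_i h_i$, obtaining the uniform $2\varepsilon$ bound for every finite convex combination in a single triangle inequality. This also lets you conclude $d(\conv(A),\conv(B))>2\varepsilon$ directly, bypassing \Cref{lem:distance_convex_hull} (which only yields $>\varepsilon$). A second minor improvement: your coupling $(T_{\mfix}^{\mu_1},T_{\mfix}^{\mu_2})_{\sharp}\mfix$ proves the lower bound $W_2\le\|F_{\mfix}(\cdot)-F_{\mfix}(\cdot)\|_{\mfix}$ without needing the pushforward measures to be absolutely continuous, whereas \Cref{lem:W2-CDT-approx} as stated assumes $\mfixAdd\ll\lambda$ to form $T_{\mfixAdd}^{\mfix}$. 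Both refinements are compatible with the theorem as stated, and the $6\varepsilon$ hypothesis absorbs your constants with room to spare.
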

%
%
\begin{proof}
Since $F_{\mfix}$ is continuous, both $A = F_{\mfix}(\HH \star \mfixAdd)$ and $B = F_{\mfix}(\HH \star \mvariable)$ are compact. Now consider the closed convex hull of these sets, i.e.\ consider $\overline{\conv(A)}$ and $\overline{\conv(B)}$. The closed convex hull of compact sets is compact again in a completely metrizable locally convex space \cite[Theorem 5.35]{aliprantis06}. Thus, in order to apply the Hahn-Banach theorem to $\overline{\conv(A)}$ and $\overline{\conv(B)}$, we only need to show that these sets are disjoint.

\Cref{lem:W2-CDT-approx} implies 
\begin{equation*}
    6\varepsilon < W_2(h_1\star \mfixAdd,h_2\star \mvariable) \leq 
    \|F_{\mfix}(h_1\star \mfixAdd)-F_{\mfix}(h_2\star \mvariable)\|_{\mfix},
\end{equation*}
for $h_1,h_2\in \HH$. Therefore  $d(A,B)>6 \varepsilon$, where $d$ denotes the distance between sets. 

Since $F_{\mfix}$ is $\varepsilon$-compatible with respect to both $\mfixAdd$- and $\mvariable$-orbits, \Cref{lem:convex-image-eps} implies that both $A$ and $B$ are $2\varepsilon$-convex (\Cref{def:eps_convex}). This means that $d(\conv(A),A)<2\varepsilon$ and $d(\conv(B),B)<2\varepsilon$. 

\Cref{lem:distance_convex_hull} now implies that $d(\conv(A),\conv(B))>\varepsilon$. Therefore the closure of these sets has positive distance, $d(\overline{\conv(A)},\overline{\conv(B)})>0$, which implies that $\overline{\conv(A)}\cap \overline{\conv(B)} = \emptyset$.
\end{proof}
%
%
\subsection{Proofs of \Cref{sec:IntroLOT}}
\label{sec:ProofsIntroLOT}
%
\begin{proof}[Proof of \Cref{lem:injective}]
To prove part 1 of the lemma, we show continuity and injectivity of $F_{\mfix}$.

The stability of transport maps as described in \cite[Corollary 5.23]{villani-2009} implies that $F_{\mfix}$ is continuous. 

If $F_{\mfix}(\mvariable_1) = F_{\mfix}(\mvariable_2)$, then $T_{\mfix}^{\mvariable_1} = T_{\mfix}^{\mvariable_2}$. In particular this implies
\begin{equation*}
    \mvariable_1 = {T_{\mfix}^{\mvariable_1}}_{\sharp}\mfix = {T_{\mfix}^{\mvariable_2}}_{\sharp}\mfix = \mvariable_2. 
\end{equation*}
This implies injectivity of $F_{\mfix}$.


To prove part 2 of the lemma,
let $c \in [0,1]$ and let $\mvariable_1,\mvariable_2 \in \BB(\RR^n)$. We define
\begin{equation*}
    T(x) := (1-c)\, F_{\mfix}(\mvariable_1)(x) + c \, F_{\mfix}(\mvariable_2)(x), \quad x \in \RR^n.
\end{equation*}
We need to show that there exists $\mvariable_3\in \BB(\RR^n)$ such that
%
%
$
    T = F_{\mfix}(\mvariable_3).
$    
%
%
To this end, we define 
$
    \mvariable_3 := T_{\sharp}\mfix.
$    
%
By definition, $T$ pushes $\mfix$ to $\mvariable_3$. We now show that $T$ can be written as the gradient of a convex function.

By \Cref{thm:brenier} there exist convex functions $\varphi_1,\varphi_2$ such that $T_{\mfix}^{\mvariable_1}$ and $T_{\mfix}^{\mvariable_2}$ can be written uniquely as $T_{\mfix}^{\mvariable_j}(x) = \nabla \varphi_j(x)$, $j=1,2, x\in \RR^n$. 
This implies that $T(x) = \nabla \varphi_3(x)$, with the convex function 
\begin{equation*}
    \varphi_3(x) = (1-c)\, \varphi_1(x) +c\, \varphi_2(x), \quad x \in \RR^n.
\end{equation*}
\Cref{thm:brenier} thus implies that $T = T_{\mfix}^{\mvariable_3}$, which proves $T = F_{\mfix}(\mvariable_3)$.
\end{proof}

\begin{proof}[Proof of \Cref{lem:translate_transport}]
On $\RR$ recall that
\begin{equation}\label{eq:OMT-1dim}
 T_{\mfix}^{\mvariable} = G_{\mvariable}^{-1}\circ G_{\mfix},
\end{equation}
where $G_{\mfix}$ denotes the cdf of $\mfix$ defined by
$G_{\mfix}(x) = \mfix((-\infty,x])$. Now if $h$ is monotonically increasing, we have $G_{h_{\sharp}\mfixAdd} = G_{\mfixAdd}\circ h^{-1}$, which implies compatibility.

Let $n\geq 1$ and $h\in \E$.
We first consider the case $h=S_{a}$ for some $a\in\RR^n$.
By \Cref{thm:brenier}, both $T_{\mfix}^{\mvariable}$ and $T_{\mfix}^{{S_a}_{\sharp}\mvariable}$ exist.
We now prove $T_{\mfix}^{{S_a}_{\sharp}\mvariable} = S_a \circ T_{\mfix}^{\mvariable}$, which shows the result for $h=S_{a}$.

Again, by \Cref{thm:brenier}, there exists a unique convex function $\varphi$ such that $T_{\mfix}^{\mvariable} = \nabla \varphi$. Then
\begin{equation*}
    \left( S_a \circ T_{\mfix}^{\mvariable} \right) (x) = \nabla \varphi (x) + a  = \nabla \left( \varphi(x) +\left<a,x\right>\right)= \nabla \psi(x),
\end{equation*}
where $\psi$ is also convex. 

Due to the general property 
\begin{equation}\label{eq:push_comp}
(\widetilde{T} \circ T)_{\sharp}\mfix = {\widetilde{T}}_{\sharp}(T_{\sharp}\mfix)
\end{equation}
for maps $T,\widetilde{T}$,
we have that
$S_a \circ T_{\mfix}^{\mvariable}$ pushes $\mfix$ to ${S_{a}}_{\sharp}\mvariable$.
Therefore \Cref{thm:brenier} implies that $S_a \circ T_{\mfix}^{\mvariable} = T_{\mfix}^{{S_{a}}_{\sharp}\mvariable}$.
%
%
%
%

We now consider the case $h=R_c$ for some $c\in \RR$.
By \Cref{thm:brenier}, both $T_{\mfix}^{\mvariable}$ and $T_{\mfix}^{{R_c}_{\sharp}\mvariable}$ exist. 
We now prove that $T_{\mfix}^{{R_c}_{\sharp}\mvariable} = R_c \circ T_{\mfix}^{\mvariable}$, which implies the result for $h=R_c$. 

Again, by \Cref{thm:brenier}, there exists a unique convex function $\varphi$ such that $T_{\mfix}^{\mvariable} = \nabla \varphi$. Then
\begin{equation*}
    \left( R_c \circ T_{\mfix}^{\mvariable} \right) (x) = c\nabla \varphi (x)  = \nabla {c}\varphi(x)= \nabla \psi(x),
\end{equation*}
where $\psi$ is also convex.
Furthermore, by \cref{eq:push_comp}, $R_c \circ T_{\mfix}^{\mvariable}$ pushes $\mfix$ to ${R_c}_{\sharp}\mvariable$.
Therefore Theorem \ref{thm:brenier} implies
$T_{\mfix}^{{R_c}_{\sharp}\mvariable} = R_c \circ T_{\mfix}^{\mvariable}$.
\end{proof}

\subsection{Proofs of \Cref{sec:linear_sep}}
\label{sec:ProofsLinearSep}
%
We first establish an approximation result:
%
%
\begin{lemma}\label{lem:W2-CDT-approx}
Let $\mfix,\mfixAdd,\mvariable \in \BB(\RR^n)$, $\mfix,\mfixAdd \ll \lambda$, then we have
\begin{equation*}
W_2(\mfixAdd,\mvariable) \leq \|F_{\mfix}(\mfixAdd) - F_{\mfix}(\mvariable)\|_{\sigma} 
\leq W_2(\mfixAdd,\mvariable) + \|T_{\mfixAdd}^{\mvariable}-T^{\mvariable}_{\mfix}\circ T_{\mfixAdd}^{\mfix}\|_{\mfixAdd}.
\end{equation*}
We also have an upper bound by the triangle inequality
\begin{equation*}
\|F_{\mfix}(\mfixAdd) - F_{\mfix}(\mvariable)\|_{\sigma}\leq W_2(\mfixAdd,\mfix) + W_2(\mfix,\mvariable).
\end{equation*}
\end{lemma}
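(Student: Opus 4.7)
The plan is to prove all three inequalities by reducing everything to an $L^2(\mu)$ computation via the change-of-variables trick, then applying the triangle inequality in the appropriate Hilbert space.

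First, I would observe that because $\mu \ll \lambda$ and $\sigma \ll \lambda$, Brenier's theorem (\Cref{thm:brenier}) guarantees that both optimal transport maps $T_\sigma^\mu$ and $T_\mu^\sigma$ exist and are inverses of one another (in the $\mu$- and $\sigma$-a.e.\ sense, respectively), i.e.\ $T_\sigma^\mu \circ T_\mu^\sigma = \id$ $\mu$-a.e.\ and $T_\mu^\sigma \circ T_\sigma^\mu = \id$ $\sigma$-a.e. The key identity I would then establish, via the change of variables $x = T_\mu^\sigma(y)$ (which pushes $\mu$ to $\sigma$), is
\begin{equation*}
\|F_\sigma(\mu) - F_\sigma(\nu)\|_\sigma^2 = \int \|T_\sigma^\mu(x) - T_\sigma^\nu(x)\|^2\, d\sigma(x) = \int \|y - T_\sigma^\nu(T_\mu^\sigma(y))\|^2\, d\mu(y) = \|\id - T_\sigma^\nu \circ T_\mu^\sigma\|_\mu^2.
\end{equation*}
This moves the comparison into the $L^2(\mu)$ space where the identity map appears naturally.

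For the lower bound, I would note that $T_\sigma^\nu \circ T_\mu^\sigma$ pushes $\mu$ to $\nu$ (by composing two pushforwards using \cref{eq:push_comp}). Hence it is an admissible transport map from $\mu$ to $\nu$, but not necessarily optimal, so by the definition of the Wasserstein distance
\begin{equation*}
W_2(\mu,\nu) \leq \|\id - T_\sigma^\nu \circ T_\mu^\sigma\|_\mu = \|F_\sigma(\mu) - F_\sigma(\nu)\|_\sigma.
\end{equation*}

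For the upper bound with the remainder term, I would apply the triangle inequality in $L^2(\mu)$ to insert the optimal map $T_\mu^\nu$:
\begin{equation*}
\|\id - T_\sigma^\nu \circ T_\mu^\sigma\|_\mu \leq \|\id - T_\mu^\nu\|_\mu + \|T_\mu^\nu - T_\sigma^\nu \circ T_\mu^\sigma\|_\mu = W_2(\mu,\nu) + \|T_\mu^\nu - T_\sigma^\nu \circ T_\mu^\sigma\|_\mu,
\end{equation*}
using that $\|\id - T_\mu^\nu\|_\mu = W_2(\mu,\nu)$. Combined with the key identity, this gives the desired middle inequality. Finally, the last triangle-type bound follows immediately by inserting $\id$ in $L^2(\sigma)$:
\begin{equation*}
\|T_\sigma^\mu - T_\sigma^\nu\|_\sigma \leq \|T_\sigma^\mu - \id\|_\sigma + \|\id - T_\sigma^\nu\|_\sigma = W_2(\sigma,\mu) + W_2(\sigma,\nu).
\end{equation*}

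The main (mild) obstacle is justifying the change-of-variables step cleanly, since it relies on the $\mu$-a.e.\ identity $T_\sigma^\mu \circ T_\mu^\sigma = \id$. This is standard once both measures are absolutely continuous and Brenier's theorem applies to both directions, so it should not require more than a short remark. Everything else is just careful application of the triangle inequality in the appropriate $L^2$ space.
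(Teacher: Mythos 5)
Your proposal is correct and follows essentially the same route as the paper's proof: the change-of-variables identity $\|T_\sigma^\mu - T_\sigma^\nu\|_\sigma = \|\id - T_\sigma^\nu \circ T_\mu^\sigma\|_\mu$, the lower bound from $T_\sigma^\nu \circ T_\mu^\sigma$ being an admissible (not necessarily optimal) transport from $\mu$ to $\nu$, the middle bound by inserting $T_\mu^\nu$ via the triangle inequality in $L^2(\mu)$, and the final bound by inserting $\id$ in $L^2(\sigma)$. Your explicit remark on the $\mu$-a.e.\ inverse relation $T_\sigma^\mu \circ T_\mu^\sigma = \id$ (which uses both $\sigma \ll \lambda$ and $\mu \ll \lambda$) makes precise a step the paper leaves implicit under the label ``change-of-variables formula.''
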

%
\begin{proof}
By the change-of-variables formula we have
\begin{equation}\label{eq:change-variable}
    \|F_{\mfix}(\mfixAdd) - F_{\mfix}(\mvariable)\|_{\mfix}
    = \|T_{\mfix}^{\mfixAdd} - T_{\mfix}^{\mvariable} \|_{\mfix}
    = \|\id-T_{\mfix}^{\mvariable}\circ T^{\mfix}_{\mfixAdd}\|_{\mfixAdd}
\end{equation}
Since $T_{\mfix}^{\mvariable}\circ T_{\mfixAdd}^{\mfix}$ pushes $\mfixAdd$ to $\mvariable$, $W_2(\mfixAdd,\mvariable) \leq \|F_{\mfix}(\mfixAdd) - F_{\mfix}(\mvariable)\|_{\mfix}$ follows.

For the first upper bound on 
$\|F_{\mfix}(\mfixAdd) - F_{\mfix}(\mvariable)\|_{\mfix}$ note the following
\begin{equation*}
    \|\id-T_{\mfix}^{\mvariable}\circ T^{\mfix}_{\mfixAdd}\|_{\mfixAdd} 
    \leq \|\id-T_{\mfixAdd}^{\mvariable}\|_{\mfixAdd} 
    +
    \|T_{\mfixAdd}^{\mvariable} - T_{\mfix}^{\mvariable}\circ T^{\mfix}_{\mfixAdd}\|_{\mfixAdd}
    \leq W_2(\mfixAdd,\mvariable) + \|T_{\mfixAdd}^{\mvariable}-T_{\mfix}^{\mvariable}\circ T^{\mfix}_{\mfixAdd}\|_{\mfixAdd}.
\end{equation*}
The second upper bound by $W_2(\mfixAdd,\mfix) + W_2(\mfix,\mvariable)$ follows from the triangle inequality.
\end{proof}
%

\Cref{lem:W2-CDT-approx} shows that the error occurring in the LOT approximation of the Wasserstein distance is determined by the $L^2$-error between the map $T_{\mfix}^{\mfixAdd}\circ T^{\mfix}_{\mfixAdd}$ and the correct transport map $T_{\mfixAdd}^{\mvariable}$. This means that the LOT embedding replaces the transport $T_{\mfixAdd}^{\mvariable}$ by $T_{\mfix}^{\mfixAdd}\circ T^{\mfix}_{\mfixAdd}$ and computes the Wasserstein distance from this map.

\Cref{lem:W2-CDT-approx} shows that in case the relation
\begin{equation}\label{eq:transport_composition}
T_{\mfixAdd}^{\mvariable} = T_{\mfix}^{\mvariable}\circ T_{\mfixAdd}^{\mfix}
\end{equation}
is satisfied, the LOT embedding is an isometry. Also, if \cref{eq:transport_composition} is satisfied up to an error $\varepsilon>0$, then $\varepsilon$ is also the maximal error between the LOT embedding and the correct Wasserstein distance.

\begin{lemma}
\label{lem:compatible_isometry}
Fix $\mfix,\mfixAdd \in \BB(\RR^n)$, $\mfix \ll \lambda$. If $F_{\mfix}$
is compatible with $\mu$-pushforwards of a set of functions $\HH \subseteq \FFm$ (see \cref{eq:compatible}) then for $h_1,h_2\in \HH$ we have
\begin{equation*}
  T_{{h_1}_{\sharp}\mfixAdd}^{{h_2}_{\sharp}\mfixAdd} = T^{{h_2}_{\sharp}\mfixAdd}_{\mfix} \circ T^{\mfix}_{{h_2}_{\sharp}\mfixAdd}.
\end{equation*}
\end{lemma}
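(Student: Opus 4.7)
Write $\mvariable_i := {h_i}_{\sharp}\mfixAdd$ for $i=1,2$, so that the target identity reads $T_{\mvariable_1}^{\mvariable_2} = T_{\mfix}^{\mvariable_2} \circ T_{\mvariable_1}^{\mfix}$ (interpreting the last factor as going into $\mvariable_1$, which is the only reading compatible with the domain of $T_{\mvariable_1}^{\mvariable_2}$). My plan is to unfold the right-hand side using the compatibility hypothesis into a concrete map of the form $h_2\circ h_1^{-1}$, and then recognize this map as the optimal transport $T_{\mvariable_1}^{\mvariable_2}$ via Brenier's theorem.

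By the compatibility hypothesis \eqref{eq:compatible}, $T_{\mfix}^{\mvariable_i} = h_i \circ T_{\mfix}^{\mfixAdd}$ for $i=1,2$. Both $T_{\mfix}^{\mvariable_1}$ and $T_{\mfix}^{\mfixAdd}$ are gradients of convex functions and therefore admit $\mvariable_1$-a.e.\ and $\mfixAdd$-a.e.\ inverses $T_{\mvariable_1}^{\mfix}$ and $T_{\mfixAdd}^{\mfix}$; in particular $h_1$ is $\mfixAdd$-a.e.\ invertible with $h_1^{-1} = T_{\mvariable_1}^{\mfixAdd}$, and inverting the compatibility identity for $i=1$ gives $T_{\mvariable_1}^{\mfix} = T_{\mfixAdd}^{\mfix} \circ h_1^{-1}$. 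Substituting,
\[
T_{\mfix}^{\mvariable_2} \circ T_{\mvariable_1}^{\mfix} = (h_2 \circ T_{\mfix}^{\mfixAdd}) \circ (T_{\mfixAdd}^{\mfix} \circ h_1^{-1}) = h_2 \circ h_1^{-1},
\]
since $T_{\mfix}^{\mfixAdd} \circ T_{\mfixAdd}^{\mfix} = \id$. The map $h_2\circ h_1^{-1}$ manifestly pushes $\mvariable_1$ to $\mvariable_2$.

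It remains, by Brenier's uniqueness, to verify that $h_2 \circ h_1^{-1}$ is the gradient of a convex function. This is the main obstacle: compatibility alone forces each $h_i$ to be $\nabla \phi_i$ for a convex $\phi_i$, but gradients of convex functions are not closed under composition with inverses on $\RR^n$ with $n\geq 2$. Fortunately, in the two settings where this lemma is applied the verification is direct. For $\HH \subseteq \E$ (shifts and scalings, the case relevant to Corollary \ref{lem:isometry_compatible}), $h_2\circ h_1^{-1}$ is an affine map $x\mapsto \alpha x + \beta$ with $\alpha>0$, hence the gradient of the convex function $\tfrac{\alpha}{2}\|x\|^2 + \langle \beta, x\rangle$. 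For $\HH$ monotonically increasing on $\RR$ (the case relevant to Corollary \ref{cor:isometryR}), the composition is again monotonic and therefore is the OT in one dimension. The remark following the definition of compatibility notes that these are essentially the only cases, since shifts and scalings are the sole transformations satisfying \eqref{eq:compatible} in $\RR^n$ with $n\geq 2$.

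A cleaner route that avoids the Brenier verification at the end is to go through Lemma \ref{lem:W2-CDT-approx}: by that lemma, the target identity is equivalent to the LOT-isometry $\|F_{\mfix}(\mvariable_1) - F_{\mfix}(\mvariable_2)\|_{\mfix} = W_2(\mvariable_1, \mvariable_2)$. By compatibility and change of variables, the left-hand side equals $\|h_1 - h_2\|_{\mfixAdd}$, while the right-hand side is bounded above by $\|h_1 - h_2\|_{\mfixAdd}$ via the deterministic coupling $(h_1, h_2)_{\sharp}\mfixAdd$. For the missing reverse inequality one checks cyclic monotonicity of this coupling, which is a short calculation for $\HH \subseteq \E$ and reduces to the rearrangement inequality.
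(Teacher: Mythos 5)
Your reading of the statement is the right one: the last factor must be $T^{\mfix}_{{h_1}_{\sharp}\mfixAdd}$ (a typo in the displayed identity), and with that correction the first half of your argument coincides with the paper's proof --- both use \eqref{eq:compatible} to write $T_{\mfix}^{\mvariable_i}=h_i\circ T_{\mfix}^{\mfixAdd}$ and collapse the right-hand side to $h_2\circ h_1^{-1}$. The divergence is in how $T_{\mvariable_1}^{\mvariable_2}$ itself is identified with $h_2\circ h_1^{-1}$. The paper does this by applying the compatibility identity two more times with $\mvariable_1$ and $\mfixAdd$ in the role of the reference measure (namely $T_{\mvariable_1}^{\mvariable_2}=h_2\circ T_{\mvariable_1}^{\mfixAdd}$ and $T_{\mfixAdd}^{\mvariable_1}=h_1\circ T_{\mfixAdd}^{\mfixAdd}=h_1$), which strictly speaking exceeds the stated hypothesis (compatibility is assumed only for the fixed reference $\mfix$) but does hold for the families to which the lemma is applied. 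You instead invoke \Cref{thm:brenier}: $h_2\circ h_1^{-1}$ pushes $\mvariable_1$ to $\mvariable_2$, so it remains to check it is the gradient of a convex function, and you correctly flag that this does not follow from $\mfix$-compatibility alone when $n\ge 2$, verifying it by hand for $\HH\subseteq\E$ and for monotone maps on $\RR$. So the two proofs carry the same implicit restriction, surfaced at different steps; yours makes the gap explicit where the paper absorbs it into extra applications of \eqref{eq:compatible}. One small overstatement on your part: $\mfix$-compatibility does not by itself force each $h_i$ to be a gradient of a convex potential unless $\mfix=\mfixAdd$ (it only constrains $h_i\circ T_{\mfix}^{\mfixAdd}$); this is a side remark and does not affect your argument. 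Your alternative route via \Cref{lem:W2-CDT-approx} is not really a shortcut: that lemma shows the isometry is equivalent to the composition identity, so the cyclic-monotonicity check you defer there is the Brenier verification in disguise.
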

\begin{proof}
Denote by $\mvariable_1={h_1}_{\sharp}\mfixAdd, \mvariable_2={h_2}_{\sharp}\mfixAdd$.
Compatibility (\cref{eq:compatible}) implies
\begin{equation}
\label{eq:all_compatible_rewrite}
 T_{\mfix}^{\mvariable_j} = h_j \circ T_{\mfix}^{\mfixAdd}, \quad j=1,2.
\end{equation}
This implies
\begin{equation*}
    T_{\mfix}^{\mvariable_2} \circ T_{\mvariable_1}^{\mfix} = 
    h_2\circ T_{\mfix}^{\mfixAdd} \circ (h_1 \circ T_{\mfix}^{\mfixAdd})^{-1}
    =
    h_2 \circ h_1^{-1}.
\end{equation*}
Again by \cref{eq:compatible}, we obtain
\begin{equation*}
    T_{\mvariable_1}^{\mvariable_2} = 
    h_2 \circ T_{\mvariable_1}^{\mfixAdd} = 
    h_2 \circ (T^{\mvariable_1}_{\mfixAdd})^{-1}
    = h_2 \circ (h_1 \circ T_{\mfixAdd}^{\mfixAdd})^{-1}
    = h_2 \circ h_1^{-1}. \qedhere
\end{equation*}
%
\end{proof}
%

\begin{proof}[Proof of \Cref{thm:CDT_perturb}]

Since $g_1,g_2 \in \GG_{\mfixAdd,R,\varepsilon}$ there exist $h_1,h_2 \in \E_{\mfixAdd,R}$ such that $\|g_1-h_1\|_{\mfixAdd}<\varepsilon$ and $\|g_2-h_2\|_{\mfixAdd}<\varepsilon$. The triangle inequality implies
\begin{align}\label{eq:CDT_triangle}
  \|F_{\mfix}({g_1}_{\sharp}\mfixAdd)-& F_{\mfix}({g_2}_{\sharp}\mfixAdd)\|_{\mfix}
   \leq
  \|F_{\mfix}({g_1}_{\sharp}\mfixAdd)  -F_{\mfix}({h_1}_{\sharp}\mfixAdd)\|_{\mfix} \\ \nonumber
  & +
  \|F_{\mfix}({h_1}_{\sharp}\mfixAdd)-F_{\mfix}({h_2}_{\sharp}\mfixAdd)\|_{\mfix}
  +
  \|F_{\mfix}({h_2}_{\sharp}\mfixAdd)-F_{\mfix}({g_2}_{\sharp}\mfixAdd)\|_{\mfix}
\end{align}

\Cref{lem:isometry_compatible}, \cref{thm:pushforward_Lipschitz} and the triangle inequality imply
\begin{align}\nonumber
    \|F_{\mfix}({h_1}_{\sharp}\mfixAdd)& -F_{\mfix}({h_2}_{\sharp}\mfixAdd)\|_{\mfix} = W_2({h_1}_{\sharp}\mfixAdd,{h_2}_{\sharp}\mfixAdd) \\\nonumber
    &\leq 
    W_2({h_1}_{\sharp}\mfixAdd,{g_1}_{\sharp}\mfixAdd) + 
    W_2({g_1}_{\sharp}\mfixAdd,{g_2}_{\sharp}\mfixAdd) + 
    W_2({g_2}_{\sharp}\mfixAdd,{h_2}_{\sharp}\mfixAdd) \\\nonumber
    &\leq
    \|{h_1} - {g_1}\|_{\mfixAdd} + 
    W_2({g_1}_{\sharp}\mfixAdd,{g_2}_{\sharp}\mfixAdd) + 
    \|{g_2} - {h_2}\|_{\mfixAdd} \\ \label{eq:F_bound}
    & \leq 
    2\varepsilon + W_2({g_1}_{\sharp}\mfixAdd,{g_2}_{\sharp}\mfixAdd). 
\end{align}
Now we distinguish the two cases of the theorem
\begin{enumerate}
    \item For this part, we use the following H\"older-$\frac{2}{15}$ regularity result by \cite{merigot20}: 
    \begin{equation}\label{eq:215bound}
        \|F_{\mfix}(\mvariable_1)-F_{\mfix}(\mvariable_2)\|_{\mfix}\leq C\,W_2(\mvariable_1,\mvariable_2)^{2/15},
    \end{equation}
    for $\mvariable_1,\mvariable_2\in \BB(\RR^n)$.
    For $i=1,2$ we get
    \begin{equation*}
        \|F_{\mfix}({g_i}_{\sharp}\mfixAdd) -F_{\mfix}({h_i}_{\sharp}\mfixAdd)\|_{\mfix} \leq C\,W_2({g_i}_{\sharp}\mfixAdd,{h_i}_{\sharp}\mfixAdd)^{2/15}\leq
        C\,\|g_i-h_i\|_{\mfixAdd}^{2/15}<C\,\varepsilon^{2/15}.
    \end{equation*}
    This, together with \eqref{eq:CDT_triangle} and \eqref{eq:F_bound} gives the overall bound 
    $$0\leq\|F_{\mfix}({g_1}_{\sharp}\mfixAdd)- F_{\mfix}({g_2}_{\sharp}\mfixAdd)\|_{\mfix}- W_2({g_1}_{\sharp}\mfixAdd,{g_2}_{\sharp}\mfixAdd)\leq 2C\,\varepsilon^{2/15} +2\varepsilon.$$
    \item With regularity assumptions on $\mfix,\mfixAdd$, \Cref{cor:push_CDT_bound_L2} implies that there exist constants $C_{\mfix,\mfixAdd,R}, \overline{C}_{\mfix,\mfixAdd,R}$ such that
    \begin{align}\label{eq:apply_Hoelder}
        \|F_{\mfix}({g_i}_{\sharp}\mfixAdd) -F_{\mfix}({h_i}_{\sharp}\mfixAdd)\|_{\mfix}
        &\leq C_{\mfix,\mfixAdd,R}\|g_i-h_i\|_{\mfixAdd} + \overline{C}_{\mfix,\mfixAdd,R}\|g_i-h_i\|^{1/2}_{\mfixAdd} \\ \nonumber
        &\leq C_{\mfix,\mfixAdd,R}\, \varepsilon + \overline{C}_{\mfix,\mfixAdd,R}\, \varepsilon^{1/2},
    \end{align}
for $i=1,2$. Note that the same constants can be used for $i=1$ and $i=2$ since $R$ bounds both $h_1$ and $h_2$. This, together with \eqref{eq:CDT_triangle} and \eqref{eq:F_bound} gives the overall bound
\begin{align*}
    0&\leq \|F_{\mfix}({g_1}_{\sharp}\mfixAdd)-F_{\mfix}({g_2}_{\sharp}\mfixAdd)\|_{\mfix}-W_2({g_1}_{\sharp}\mfixAdd,{g_2}_{\sharp}\mfixAdd) \\
    & \leq
    2(C_{\mfix,\mfixAdd,R}+1)\,\varepsilon + 2\overline{C}_{\mfix,\mfixAdd,R}\,\varepsilon^{1/2},
\end{align*}
\end{enumerate}


which concludes the proof.
\end{proof}
%

%

%
%
\begin{proof}[Proof of \Cref{cor:linear_sep_n1,cor:linear_sep_trans_scale}]
By \Cref{rem:dropCompatibility}, the compatibility condition \eqref{eq:sets_equal2} is satisfied. Thus we can apply \Cref{cor:linear_sep}.
\end{proof}
%
%

\begin{proof}[Proof of \Cref{thm:almost_separability}]
We show that $F_{\mfix}$ is $\delta$-compatible with both $\mfixAdd$- and $\mvariable$-orbits with respect to the action of $\GG$. Then the result follows from \Cref{cor:linear_sep_eps}. We note that the $\delta$ will be as in \Cref{remark:delta}.

Let $g \in \GG$ and $h\in \E_{\lambda,R}$ such that $\|g-h\|\leq \varepsilon$. Since $h\in \E_{\lambda,R}$ it is compatible with $\mfixAdd$-orbits.
First note that
\begin{align*}
    \|F_{\mfix}(g\star \mfixAdd)-g\star F_{\mfix}(\mfixAdd)\|_{\mfix}
     \leq 
    \|F_{\mfix}(g\star \mfixAdd) - F_{\mfix}(h\star \mfixAdd)\|_{\mfix}
    + \|h\star F_{\mfix}(\mfixAdd)-g\star F_{\mfix}(\mfixAdd)\|_{\mfix}.
\end{align*}
We further note that
\begin{equation*}
    \|h\star F_{\mfix}(\mfixAdd)-g\star F_{\mfix}(\mfixAdd)\|_{\mfix} = \|h\circ T_{\mfix}^{\mfixAdd}-g\circ T_{\mfix}^{\mfixAdd}\|_{\mfix}
    = \|h-g\|_{\mfixAdd} \leq \|f_{\mfixAdd}\|_{\infty}^{1/2}\,\varepsilon
\end{equation*}
To bound $\|F_{\mfix}(g\star \mfixAdd) - F_{\mfix}(h\star \mfixAdd)\|_{\mfix}$,
we distinguish the two cases as in the theorem:
\begin{enumerate}
    \item We use the H\"older bound \eqref{eq:215bound} and \eqref{thm:pushforward_Lipschitz}:
    \begin{align*}
        \|F_{\mfix}(g\star \mfixAdd) - F_{\mfix}(h\star \mfixAdd)\|_{\mfix} & \leq CW_2(g\star \mfixAdd,h\star \mfixAdd)^{2/15} \leq C\|g-h\|_{\mfixAdd}^{2/15} \\
        &\leq C\left(\|f_{\mfixAdd}\|_{\infty}^{1/2}\,\varepsilon \right)^{2/15}.
    \end{align*}
    Therefore, overall, $F_{\mfix}$ is $\delta$-compatible with $\delta = \|f_{\mfixAdd}\|_{\infty}^{1/2}\,\varepsilon+C\left(\|f_{\mfixAdd}\|_{\infty}^{1/2}\,\varepsilon \right)^{2/15}$.
    \item \Cref{cor:push_CDT_bound_L2} implies
    \begin{align*}
        & \|F_{\mfix}(g_{\sharp}\mfixAdd)-F_{\mfix}(h_{\sharp}\mfixAdd )\|_{\mfix}
           \\
        &\leq\left(\sqrt{\frac{4R}{{K_{\mfixAdd}^{\mfix}}}}+1\right)\,\|g-h\|_{\mfixAdd} 
         +
        \sqrt{4R\,\frac{W_2(\mfix,\mfixAdd)+R+\|\id\|_{\mfixAdd}}{K_{\mfixAdd}^{\mfix}}}\,
        \|g-h\|^{1/2}_{\mfixAdd} \\
        & \leq \left(\sqrt{\frac{4R}{{K_{\mfixAdd}^{\mfix}}}}+1\right)\left(\|f_{\mfixAdd}\|_{\infty}^{1/2}\,\varepsilon \right) 
        +\sqrt{4R\,\frac{W_2(\mfix,\mfixAdd)+R+\|\id\|_{\mfixAdd}}{K_{\mfixAdd}^{\mfix}}} \left(\|f_{\mfixAdd}\|_{\infty}^{1/2}\,\varepsilon \right)^{1/2}
    \end{align*}
Therefore, overall, $F_{\mfix}$ is $\delta$-compatible with 
$$\delta =\left(\sqrt{\frac{4R}{{K_{\mfixAdd}^{\mfix}}}}+2\right)\left(\|f_{\mfixAdd}\|_{\infty}^{1/2}\,\varepsilon \right) 
        +\sqrt{4R\,\frac{W_2(\mfix,\mfixAdd)+R+\|\id\|_{\mfixAdd}}{K_{\mfixAdd}^{\mfix}}} \left(\|f_{\mfixAdd}\|_{\infty}^{1/2}\,\varepsilon \right)^{1/2}.$$
\end{enumerate}
Similarly, it can be shown that $F_{\mfix}$ is $\delta$-compatible with $\mvariable$-orbits (now $\delta$ depending on $\mvariable$). Thus by taking the maximum between those $\delta$ values and multiplying by $6$ (distance conditions in \Cref{cor:linear_sep_eps}), all the assumptions of \Cref{cor:linear_sep_eps} are satisfied and linear separability follows.
\end{proof}
%
%
\subsection{A useful result in normed spaces}
In this section we derive a result on almost convex sets for general normed spaces. It states that if two almost convex sets are separated by a positive value, then their convex hull can also be separated.

This result is needed for the almost linear separability proof for perturbed shifts and scalings (\Cref{cor:linear_sep_eps,thm:almost_separability}).
%
\begin{definition}
\label{def:eps_convex}
Let $(X,\|\cdot\|)$ be a normed space and let $\varepsilon>0$. $X$ is called $\varepsilon$-convex if for every $x_1,x_2\in X$ and $c\in[0,1]$ there exists $x\in X$ such that
\begin{equation*}
  \|(1-c)\,x_1+c\,x_2-x\|<\varepsilon.  
\end{equation*}
\end{definition}
This definition states that for an $\varepsilon$-convex set $X$, $d(\conv(X),X)<\varepsilon$, where $\conv(X)$ denotes the convex hull of $X$ and $d$ is the distance between sets.
%
%
\begin{lemma}
\label{lem:distance_convex_hull}
Let $(X,\|\cdot\|)$ be a normed space and let $\varepsilon>0$. Consider two $\varepsilon$-convex sets $A, B \subset X$. If $d(A,B)> 3\varepsilon$, then $d(\conv(A),\conv(B))>\epsilon$.
\end{lemma}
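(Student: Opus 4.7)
The plan is to prove the contrapositive: assume $d(\conv(A),\conv(B)) \leq \varepsilon$ and derive $d(A,B) \leq 3\varepsilon$, contradicting the hypothesis $d(A,B) > 3\varepsilon$. So first I would pick $p \in \conv(A)$ and $q \in \conv(B)$ with $\|p-q\| \leq \varepsilon$, passing to approximating sequences if the infimum defining $d(\conv(A),\conv(B))$ is not attained.

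Next, I would invoke the $\varepsilon$-convexity of $A$ and $B$ in the form stated in the remark after \Cref{def:eps_convex}, namely that $d(\conv(X),X) < \varepsilon$ whenever $X$ is $\varepsilon$-convex. This yields $a \in A$ with $\|a-p\| < \varepsilon$ and $b \in B$ with $\|b-q\| < \varepsilon$. The triangle inequality then closes the argument:
\begin{equation*}
\|a-b\| \leq \|a-p\| + \|p-q\| + \|q-b\| < \varepsilon + \varepsilon + \varepsilon = 3\varepsilon,
\end{equation*}
so $d(A,B) \leq \|a-b\| < 3\varepsilon$, which is the desired contradiction.

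The main obstacle is the middle step: the formal statement of \Cref{def:eps_convex} only guarantees the $\varepsilon$-approximation for two-point convex combinations $(1-c)x_1 + cx_2$, whereas $\conv(A)$ contains arbitrary finite convex combinations $\sum_i \lambda_i a_i$. A naive induction on the number of terms, writing $\sum_{i=1}^n \lambda_i a_i = \lambda_1 a_1 + (1-\lambda_1)\sum_{i \geq 2}\tfrac{\lambda_i}{1-\lambda_1}a_i$, replacing the tail by an approximant, and then applying the 2-point property, incurs an extra factor (roughly doubling the constant). I would handle this either by taking the stronger uniform approximation $d(\conv(X),X) < \varepsilon$ as the working version of $\varepsilon$-convexity (as the authors' remark seems to intend), or by tracking the blow-up and strengthening the hypothesis from $3\varepsilon$ to the appropriate multiple. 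Once this approximation step is secured, the triangle-inequality chain above finishes the proof immediately.
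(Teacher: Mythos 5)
Your argument is essentially the paper's proof: both rest on the single fact that every point of $\conv(A)$ (resp.\ $\conv(B)$) lies within $\varepsilon$ of $A$ (resp.\ $B$), combined with the triangle inequality, the only cosmetic difference being that the paper argues directly (first $d(A,\conv(B))>2\varepsilon$, then $d(\conv(A),\conv(B))>\varepsilon$) rather than by contraposition. The obstacle you flag is genuine but is not resolved in the paper either: its proof picks, for an arbitrary $c_b\in\conv(B)$, a point $b\in B$ with $\|c_b-b\|<\varepsilon$, which is exactly the strengthened reading of \Cref{def:eps_convex} asserted without proof in the remark following that definition, so your suggestion to take $d(\conv(X),X)<\varepsilon$ as the working form of $\varepsilon$-convexity is precisely what the authors implicitly do.
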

%
\begin{proof}
Let $a\in A$ and $c_b \in \conv(B)$. Let $b\in B$ such that $\|c_b-b\|<\varepsilon$ Then
\begin{equation*}
    \|a-c_b\|\geq \|a-b\| - \|b-c_b\|> 3\varepsilon - \varepsilon = 2\varepsilon.
\end{equation*}
Therefore $d(A,\conv(B))>2\varepsilon$. Similarly one can prove that $d(B,\conv(A))>2\varepsilon$.

Now let $c_a \in \conv(A)$ and $c_b \in \conv(B)$ and choose $b\in B$ such that $\|c_b-b\|<\varepsilon$. Then we have
\begin{equation*}
    \|c_a-c_b\|\geq \|c_a-b\|-\|b-c_b\|> 2\varepsilon - \varepsilon = \varepsilon,
\end{equation*}
which implies that $d(\conv(A),\conv(B))>\varepsilon$.
\end{proof}


\bibliographystyle{siam}

\end{document}